\def\A{\textsf{A}}
\def\B{\textsf{B}}
\def\r{\textsf{r}}
\def\Q{\textsf{Q}}
\def\ci{\textsf{conf}}
\def\gap{\textsf{Gap}}
\def\bbeta{\bm{\beta}}
\def\btheta{\bm{\theta}}
\def\bvartheta{\bm{\vartheta}}
\def\bff{\bm{f}}
\def\ps{\textsf{ps}}
\newcommand{\indp}{\perp\!\!\!\!\perp} 
\newcommand{\notindp}{\not\!\perp\!\!\!\perp}
\begin{document}



\RUNTITLE{Fu et al.}

\TITLE{Offline Reinforcement Learning for Human-Guided Human-Machine Interaction with Private Information}

\ARTICLEAUTHORS{%
\AUTHOR{Zuyue Fu}
\AFF{Department of Industrial Engineering and Management Sciences, Northwestern University, \EMAIL{zuyuefu2022@u.northwestern.edu}} 
\AUTHOR{Zhengling Qi}
\AFF{Department of Decision Sciences, George Washington University, \EMAIL{qizhengling@email.gwu.edu}}
\AUTHOR{Zhuoran Yang}
\AFF{Department of Statistics and Data Science, Yale University, \EMAIL{zhuoran.yang@yale.edu}}
\AUTHOR{Zhaoran Wang}
\AFF{Department of Industrial Engineering and Management Sciences, Northwestern University, \EMAIL{zhaoranwang@gmail.com}}
\AUTHOR{Lan Wang}
\AFF{Department of Management Science, University of Miami, \EMAIL{lanwang@mbs.miami.edu}}
} 

\ABSTRACT{%
Motivated by the human-machine interaction such as training chatbots for improving customer satisfaction, we study human-guided human-machine interaction involving private information.  We model this interaction as a two-player turn-based game, where one player (Alice, a human) guides the other player (Bob, a machine) towards a common goal. Specifically, we focus on offline reinforcement learning (RL) in this game, where the goal is to find a policy pair for Alice and Bob that maximizes their expected total rewards based on an offline dataset collected a priori. The offline setting presents two challenges: (i) We cannot collect Bob's private information, leading to a confounding bias when using standard RL methods, and (ii) a distributional mismatch between the behavior policy used to collect data and the desired policy we aim to learn. To tackle the confounding bias, we treat Bob's previous action as an instrumental variable for Alice's current decision making so as to adjust for the unmeasured confounding.  We develop a novel identification result and use it to propose a new off-policy evaluation (OPE) method for evaluating policy pairs in this two-player turn-based game.  To tackle the distributional mismatch,  we leverage  the idea of pessimism and use our OPE method to develop an off-policy learning algorithm for finding a desirable policy pair for both Alice and Bob.  Finally, we prove that under mild assumptions such as partial coverage of the offline data, the policy pair obtained through our method converges to the optimal one at a satisfactory rate. 

}%


\KEYWORDS{Two-player turn-based game with private information, instrumental variables, offline reinforcement learning} 

\maketitle

\section{Introduction}\label{sec: intro}

Data-driven machines using algorithms have become more and more prevailing throughout daily activities and business decision making. In recent years, there is a surging interest in studying the interaction between human and machines, which is essential in improving the use of machine learning for solving real-world problems \citep{hoc2000human, gorecky2014human}. Since machines rely on humans to create and control, a central question is how humans can leverage their knowledge to achieve a better decision-making under the coexistence of humans and machines. In this paper, we study a two-player turn-based cooperative game as an example of human-machine interaction. Leveraging the pre-collected offline data, our goal is to develop a data-driven method to learn an optimal policy when some private information of one particular player cannot be accessed during the training procedure.

Our motivation mainly comes from how to improve the customer satisfaction in machine-driven communication, e.g., Chat-GPT \citep{chatgpt}. To train a chatbot for solving customers' problems, it often requires enormous data with tremendous human efforts. Efficient interplay between human and machine is necessary to improve the performance of chatbot. Therefore it is important for humans to find the most useful questions (decisions) to improve the knowledge of chatbot. However, due to the blackbox machine learning models used in the chatbot, human may not access the private information (i.e., ``weakness") of this machine-driven communication system. Humans must rely on the feedback given by the machine to learn its weakness and find most suitable questions to further improve the capability of the chatbot and ultimately use this machine to improve the customer satisfaction. Such an important business and machine learning problem can be generalized as a two-player turn-based cooperative game with private information. Our work is also  motivated by the strategic reaction of humans to algorithms. As in many business activities, algorithms play an important role in providing business solutions such as auction, pricing and trading. For example, hedge fund relies on machine to collect market data and provide real-time suggestions on trading stock, based on which traders make the final decision. The return of traders' decisions will further feed the machine to improve its learning ability. Therefore it is essential to study how to improve the algorithm's ability in discovering important pattern from big data and lead to a better decision-making as a human. This is related to the aforementioned two-player turn-based game as well.

\subsection{Brief Problem Statement and Challenges}
In this paper, we formulate a two-player game as a reinforcement learning (RL) problem, where the machine as a black box may have private information that humans may not understand; human as a complex organism may also have private information that we fail to deliver to the machine. The forbidden use of the private information is the main obstacle
for a successful human-machine interaction. 
Specifically, there are two main sources that the private information affects the game structure: (i) actions taken by the players; (ii) rewards collected and the next states received after taking actions. 
Such a game with private information has been widely considered in practice: See examples in economics \citep{laffont2009theory}, social science \citep{sabater2005review}, robotics/control \citep{farinelli2004multirobot, yan2013survey,mertikopoulos2019learning}, cognitive science \citep{sun2006cognition}, human-machine negotiation \citep{lewis2017deal}, etc.

Specifically, we model such a scenario as a two-player turn-based game, where we call two players as Alice (representing a human) and Bob (representing a machine). In such a game, Alice first takes an action, which may potentially depend on the current state, her private information, and Bob's previous action. Then Alice collects her reward and reveals her public information to Bob and transits to the next state. After observing the new state, Bob takes an action, which may potentially depend on the new current state, his private information, and Alice's previous action. Similarly, Bob collects his reward and reveals his public information to Alice and transits to the next state. The goal for Alice and Bob is then to maximize the sum of their expected total rewards. In other words, we (as Alice) aim to find a policy pair for Alice and Bob to maximize their rewards together. We remark that such a turn-based game exactly characterizes a typical human-guided human-machine interaction, where the Alice (as a human) first takes an action to guide Bob (as a machine) how to behave and collect reward, and Bob potentially ``mimics'' Alice's behavior to take an action to maximize the sum of their rewards. 


In this work, 
we aim to solve an offline RL problem in a two-player turn-based game, where the goal is to learn the optimal policy pair of the principal based on an offline dataset collected a priori. 
The challenge in the offline setting is that Alice, as a human, though can access her own private information, cannot access the private information of Bob, as a machine. As a result, Bob's private information is the so-called unobserved confounder in this two-player turn-based game. If the existence of unmeasured confounding is not carefully factored in the design of offline learning algorithms, directly applying standard RL methods could potentially lead to biased and
inconsistent estimation of value functions and, consequently learn sub-optimal policies. 
In the meanwhile, as our offline data follow a fixed behavior policy and we cannot interact with the environment to further collect more data, it may result in a distributional mismatch between the behavior policy and the optimal one. This is the fundamental challenge behind offline RL \citep{levine2020offline}. 

\subsection{Main Results and Contributions}

To address the above two challenges, we \textit{theoretically} study the offline RL problem in such a two-player turn-based game using the instrumental variable (IV) method \citep{angrist1995identification}, which has been widely used in the literature of causal inference (e.g., \cite{pearl2009causality,hernan2010causal}) to identify the causal effect of a treatment under unmeasured confounding. In this work, we innovatively treat Bob's previous action as an instrumental variable (IV) for Alice's current decision making so as to adjust the unmeasured confounding. Such an instrumental variable arises naturally since (i) given all the information of the current stage, Bob's previous action does not depend on his current private information; (ii) Bob's previous action will affect Alice's current action. Condition (i) is usually referred to as IV independence and Condition (ii) is referred to as IV relevance, which are two commonly required validity conditions for an IV. Moreover, a valid IV additionally requires that there is no direct effect of Bob's previous action on the reward obtained by Alice at the current decision-making unless through Alice's action itself. This is called IV exclusion restriction in the causal inference literature \citep{wang2018bounded}. However, such a requirement could be easily violated in our two-player turn-based game. For example, a chatbot's response (i.e., Bob's action) to the question would directly affect the  customer satisfaction (Alice's reward). Motivated by these, we treat Bob's previous action as \textit{an invalid IV}, where the IV exclusion restriction fails to hold. Relying on the orthogonal conditions on the reward and action's model structure to some extent, we establish a novel identification result using offline data for evaluating policies and learning an optimal one in our two-player turn-based game with private information. 

Our new identification results lead to an estimation problem of solving a system of nonlinear integral equations for evaluating policies in two-player turn-based game. We propose to use a sieve minimum distance estimator, which is shown to achieve the known minimax optimal rate for the non-parametric mean instrumental variable regression \citep{chen2012estimation}. In terms of policy learning, to mitigate the distributional mismatch, instead of directly plugging in the sieve minimum distance estimator, we propose to use conservative estimators as proxies of the action-value functions and maximize such conservative action-value functions to learn an optimal policy pair. Such a method is also termed pessimism in related literature \citep{jin2021pessimism, lu2022pessimism, fu2022offline}. Theoretically, we provide a finite sample regret analysis of our offline learning algorithm, showing that the learned policy pair converges to the optimal one asymptotically as the
sample size of the offline data increases under only the partial coverage assumption, which does not require our offline data to explore the state-action space fully. 


\vskip5pt
\noindent\textbf{Main Contributions.}
Our contribution is threefold. First, to the best of our knowledge, we are among the first in the literature to study offline policy learning in a two-player turn-based game under unmeasured confounding through the lens of causal inference. This serves as a first step to bridge the gap between causal inference and the cooperative game. 

Second, to address the unmeasured confounding (i.e., private information of Bob) during the game, we leverage the structure of the turn-based game and propose to use the other player's action in the previous step as a potential IV for adjusting the confounding bias resulted from the missing private information of Bob. Since we allow the previous action to have a direct effect on the current reward, standard IV approaches in the literature of causal inference cannot be applied. Motivated by the literature on the causal inference with an invalid IV \citep[e.g.,][]{lewbel2012using,lewbel2018identification,tchetgen2021genius}, we first study a standard causal inference setting and generalize the existing identification results of invalid IVs, which serves as a foundation for solving our two-player turn-based game. Specifically, in the causal inference under unmeasured confounding, where an IV can have a direct effect on the outcome, the identification is often restricted to the marginal causal effects of the treatment and IV. Nevertheless, in our problem setting, at each decision-point, there may exist an interaction effect between Alice's and Bob's actions on the reward, making existing results hard to apply.
Motivated by this, we consider a more general setting where there is an interaction effect between the treatment and IV on the outcome, and establish a novel non-parametric identification result for identifying both marginal and interaction effects of
the action and IV. Our result relies on the orthogonal restriction between the effect of unmeasured confounders on the treatment and that on the outcome up to the secord-order information, which could be of independent interest. We then adapt these novel identification results in our two-player turn-based game setting and propose a sieve minimum distance estimator for off-policy evaluation, which is shown to achieve the known minimax optimal rate for the standard non-parametric instrumental variable regression.


Last but not least, by our off-policy evaluation procedure and leveraging the recent idea of using pessimism in the policy learning, we propose a policy pair estimator for both human and machine, which is shown to converge to the optimal policy pair under mild assumptions such as the partial coverage of our offline data. The proposed algorithm addresses the issue of the distributional mismatch. To the best of our knowledge, this is the first established regret result for offline policy learning in a two-player turn-based game under unmeasured confounding.



\vskip5pt
\noindent\textbf{Related Work.} 
Our work is related to the line of works that study RL under the presence of unobserved confounders. \cite{chen2021estimating} utilize a time-varying instrumental variable to establish a novel estimation framework for DTR. Also, to incorporate the observational data into the finite-horizon RL, \cite{wang2021provably} propose an online method termed deconfounded optimistic value iteration. To ensure the identifiability through offline data, they impose a backdoor criterion \citep{pearl2009causality, peters2017elements} when confounders are partially observed, and a frontdoor criterion when confounders are unobserved. Our work is also closely related to \cite{liao2021instrumental} and \cite{fu2022offline}. Specifically, \cite{liao2021instrumental} propose an IV-aided value iteration algorithm to study confounded MDPs in the offline setting. It is worth mentioning that they only consider a parametric model, where the transition dynamics is a linear function of some known feature map. To ensure identifiability, they assume that the unobserved confounders are Gaussian noise. In contrast, we consider a nonparametric saturated model without such restrictive assumptions, which is thus more general and applicable. On the other hand, \cite{fu2022offline} establish point identification results with the help of valid IVs in an infinite-horizon confounded MDP. As discussed before, standard IV approaches cannot solve our problem as there exists a direct effect of IV on the outcome/reward.
Besides the aforementioned literature, there is a growing body of literature using causal inference tools to study off-policy evaluation in confounded MDPs, see \cite{kallus2020confounding, namkoong2020off, bennett2021off,shi2021minimax,lu2022pessimism,miao2022off} and the references therein. Also we highlight that the aforementioned works all focus on the single-agent setting, while our work focus on the two-player setting. 

Our work is also related to the line of works that study two-player games. The majority of this line of works focus on the two-player zero-sum stochastic games. 
See \citet{littman2001value,perolat2015approximate,perolat2018actor,srinivasan2018actor,guo2021decentralized,lagoudakis2012value,perolat2016softened,perolat2016use,luo2019accelerating,jin2020provably,cai2020provably,zhong2021can,hambly2021policy} and the reference therein.  
We remark that all of these works consider the settings where the agents have complete information, and thus do not face the issue of unobserved confounders as in our work. 

The rest of our paper is organized as follows. In \S\ref{sec:model}, we describe the offline reinforcement learning problem for the two-player turn-based game. In \S\ref{sec:id-all}, we study a causal inference problem under unmeasured confounding, where an invalid IV is used to identify the causal effects of both treatment/action and IV on the outcome. In \S\ref{sec:uiehuier}, we adapt the identification result developed in \S\ref{sec:id-all} to a single-stage two-player game and propose an offline data-driven algorithm to compute an optimal policy pair for Alice and Bob. We then generalize our method for solving a multi-stage game in \S\ref{sec: multi-game}. Lastly, we conclude the paper in \S\ref{sec: conclusion}. All technical proofs can be found in the Appendix.

\vskip5pt
\noindent\textbf{Notations.} We denote by $[H] = \{1, 2, \ldots, H\}$ and $[H]_{1/2} = \{3/2, 5/2, \ldots, H+1/2\}$. We denote by $\sigma_{\min}(\Sigma)$ the minimum singular value for any matrix $\Sigma$.

\section{Problem Formulation}\label{sec:model}

We consider a data-driven two-player turn-based game with private information, where two players called Alice and Bob play the game sequentially based on the observed states and rewards and their private information. 
Let $h$ be some positive integer. 
We assume that starting from $h=1$, at the $h$-th step of the game, Alice takes an action $A_h$ following a policy $\pi^\A_h$, which possibly depends on the current state $S_h$, her private information $U_h$, and Bob's previous action $B_{h-1/2}$. After taking such an action, Alice receives a reward $R_h^\A$ and the (next) state $S_{h+1/2}$ is revealed to both players. In the meanwhile, Alice and Bob receive their new private information $U_{h+1/2}$ and $V_{h+1/2}$, respectively.
Then at the $(h + 1/2)$-th step, Bob takes an action $B_{h+1/2}$ following a policy  $\pi^\B_{h+1/2}$ which possibly depends on the current state $S_{h+1/2}$, his private information $V_{h+1/2}$, and Alice's previous action $A_h$.  After taking such an action, Bob receives a reward $R_{h+1/2}^\B$ and the state $S_{h+1}$ is revealed to both players. {In the meanwhile, Alice and Bob receive their new private information $U_{h+1}$ and $V_{h+1}$, respectively.}
We consider a MDP with a finite horizon $H$ (i.e., $h=1, \cdots, H$) to model the trajectory $\{S_{h}, U_{h}, V_{h}, A_{h},R^\A_{h}, S_{h+1/2}, U_{h+1/2}, V_{h+1/2}, B_{h+1/2}, R^\B_{h+1/2}\}_{h \in [H]}$, where the MDP terminates once the state $S_{H+1}$ is revealed. 
For convenience, we assume that Bob reveals an action $B_{1/2}$ following $\pi^\B_{1/2}$ before the state $S_1$ is revealed. Moreover, for the simplicity of presentation, we only consider binary actions, i.e., $\cA = \cB = \{0,1\}$, where $\cA$ and $\cB$ are action spaces for Alice and Bob, respectively. 
Further, we denote by $\cS$, $\cU$ and $\cV$ the state space and private information spaces, respectively.  We remark that our framework can be easily extended to the multiple-action scenario as seen later. 

In this paper, we consider a fully cooperative game of Alice and Bob. Specifically, if all information is available at each time-step, then the goal of the two-player turn-based game considered here is to find a policy pair that {maximizes the sum of Alice's and Bob's expected total rewards, as we will formally introduce in \S\ref{sec:val}}. As discussed in the \S\ref{sec: intro}, we consider the scenario such as training a chatbot or algorithmic trading with the help of traders, where human interacts with the machine to achieve a common goal. In such an application, Alice plays a role of a human, while Bob plays a role of a machine. Our goal is to leverage the offline data collected from the past experience to help Alice and Bob learn a good policy pair that maximizes the total expected rewards. The challenge behind this task is the existence of the private information owned by Bob that hinders Alice from learning a desirable policy. Without accessing such private information, the offline data may not ensure the identification of the policy value and thus cannot guarantee to learn a good policy. In this paper, we develop a novel IV approach to address such a challenge. Finally, we assume that the private information $V_h$ and $V_{h+1/2}$ for $h \in [H]$ are memoryless so that given other observed variables at the same decision point, private information is independent of the past variables. Memoryless assumption has been widely used in the literature \citep[e.g.,][]{kallus2018confounding,shi2022off,fu2022offline}. In this case, the observed sequence by Alice can also be modelled by MDP.

\subsection{Value Functions}\label{sec:val}
We use the policy value to evaluate the performance of any policies. Specifically, for any policy pair $\pi = (\pi^\A, \pi^\B)$, and the timestamp $h\in [H]$, we denote by $Q_h^{\A,\pi}(S_h, A_h , B_{h-1/2}, U_h, V_h)$ and $Q_{h+1/2}^{\A,\pi}(S_{h+1/2}, A_h , B_{h+1/2}, U_{h+1/2}, V_{h+1/2})$ the action-value functions of Alice for the expected cumulative rewards starting at times $h$ and $h +1/2$, i.e.,  
\$
& Q_h^{\A,\pi}(S_h, A_h , B_{h-1/2}, U_h, V_h) = \EE_\pi\left[\sum_{j=h}^{H} R_j^\A \Biggiven S_h, A_h, B_{h-1/2}, U_h, V_h \right], \\
& Q_{h+1/2}^{\A,\pi}(S_{h+1/2}, A_h , B_{h+1/2}, U_{h+1/2}, V_{h+1/2}) = \EE_\pi\left[\sum_{j=h+1}^{H} R_j^\A \Biggiven S_{h+1/2}, A_h, B_{h+1/2}, U_{h+1/2}, V_{h+1/2} \right], 
\$
where the expectation $\EE_\pi[\cdot]$ are taken following the policy pair $\pi = (\pi^\A, \pi^\B)$. 
Similarly, we define the action-value functions of Bob at times $h$ and $h+1/2$ as $Q_h^{\B,\pi}(S_h, A_h , B_{h-1/2}, U_h, V_h)$ and $Q_{h+1/2}^{\B,\pi}(S_{h+1/2}, A_h , B_{h+1/2}, U_{h+1/2}, V_{h+1/2})$ respectively.  Finally, we denote by 
\$
& J^\A(\pi^\A, \pi^\B) = \EE_\pi\left[ Q_1^{\A,\pi}(S_1, A_1, B_{1/2}, U_1, V_1) \right], \qquad J^\B(\pi^\A, \pi^\B) = \EE_\pi\left[ Q_1^{\B,\pi}(S_1, A_1, B_{1/2}, U_1, V_2) \right]
\$
the expected total rewards (i.e., policy value) for Alice and Bob, respectively. 

In a fully cooperative turn-based game of Alice and Bob, we aim to solve the following optimization problem together, 
\#\label{eq:opt-prob}
& \max_{\pi^\A\in \Pi^\A, \pi^\B\in \Pi^\B} J(\pi^\A, \pi^\B), \quad \text{where } J(\pi^\A, \pi^\B) = J^\A(\pi^\A, \pi^\B) + J^\B(\pi^\A, \pi^\B).
\#
Since we are on the side of Alice, we consider Alice's policy space $\Pi^\A$ consisting of policies $\pi^\A=\{\pi_h^\A\}_{h\in [H]}$ that depends on the current state $S_h$, Bob's previous action $B_{h-1/2}$, and Alice's private information $U_h$; while the policy space of Bob $\Pi^\B$ consists of policies $\pi^\B=\{\pi_{h+1/2}^\B\}_{h\in [H]}$ that only depends on the current state $S_{h+1/2}$ and Alice's previous action $A_h$. Recall that we only construct or execute policies using the knowledge of Alice without accessing Bob's private information. 
See Figure \ref{fig:frwgegrg} for a graphical illustration of the two-player turn-based game described in \S\ref{sec:model}, where we aim to solve in \eqref{eq:opt-prob}.

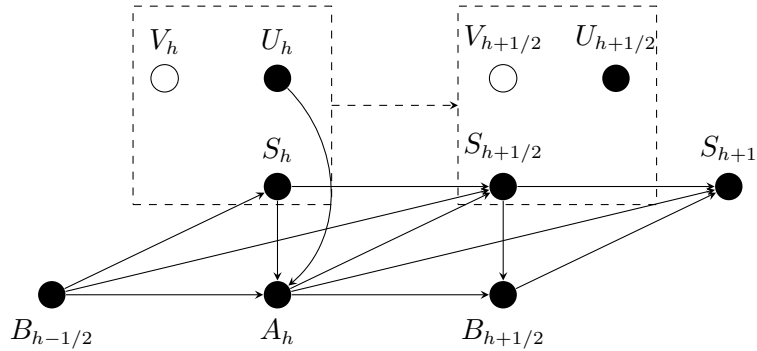
\begin{figure}[htbp]
    \centering
\begin{tikzpicture}[scale=1.2]
        \node (bo) at (-1.5,0) [label=below:{$B_{h-1/2}$}, circle, fill=black]{};
        \node (a) at (1,0) [label=below:{$A_h$}, circle, fill=black]{};
        \node (s) at (1,1.2) [label=above:{$S_h$}, circle, fill=black]{};
        \node (u) at (1,2.4) [label=above:{$U_h$}, circle, fill=black]{};
        \node (v) at (-0.25,2.4) [label=above:{$V_h$}, circle, draw]{};
        \node (bn) at (3.5,0) [label=below:{$B_{h+1/2}$}, circle, fill=black]{};
        \node (sn) at (3.5,1.2) [label=above:{$S_{h+1/2}$}, circle, fill=black]{};
        \node (un) at (4.75,2.4) [label=above:{$U_{h+1/2}$}, circle, fill=black]{};
        \node (vn) at (3.5,2.4) [label=above:{$V_{h+1/2}$}, circle, draw]{};
        \node (snn) at (6,1.2) [label=above:{$S_{h+1}$}, circle, fill=black]{};
    \draw[-stealth] (bo) edge (s) edge (a) edge (sn);
    \draw[-stealth] (s) edge (a) edge (sn);
    \draw[-stealth] (a) edge (sn) edge (snn) edge (bn);
    \draw[-stealth] (sn) edge (bn) edge (snn);
    \draw[-stealth] (bn) edge (snn);
    \draw[-stealth] (u) [out=-45,in=40] edge (a);
    \draw[dashed] (-0.6,1) rectangle ++(2.2,2.2);
    \draw[dashed] (3.0,1) rectangle ++(2.2,2.2);
    \draw [-stealth, dashed](1.6,2.1) -- (3,2.1);
\end{tikzpicture}
\caption{A graphical illustration of a two-player turn-based game described in \S\ref{sec:model} that we aim to solve in \eqref{eq:opt-prob}. 
Here $U_h$ is the private information of Alice, which can be used as an input of the policy $\pi^\A_h$ when solving \eqref{eq:opt-prob}; while $V_{h+1/2}$ is the private information of Bob, which cannot be used as an input of the policy $\pi^\B_{h+1/2}$ when solving \eqref{eq:opt-prob}. In the meanwhile, the dependence of $(S_{h+1/2}, U_{h+1/2}, V_{h+1/2})$ on $(S_h, U_h, V_h)$ is characterized by the dashed arrow in the figure for the simplicity of the presentation.}
\label{fig:frwgegrg}
\end{figure}

\subsection{Data Collection Process}
We aim to leverage the offline data collected from the previous experience to solve Problem \eqref{eq:opt-prob}. In this subsection. we describe how the offline data are collected in a two-player turn-based game as follows. For any $h\in [H]$, at the $h$-th step, Alice observes the state $S_h$ and her private information $U_h$, and then takes an action $A_h$ following a behavior policy $b_h^\A$ and receives a reward $R_h^\A$. At the $(h+1/2)$-th step, Bob observes the state $S_{h+1/2}$ and his hidden state $V_{h+1/2}$, and then takes an action $B_{h+1/2}$ following a behavior policy $b_{h+1/2}^\B$ and receives and reveals a reward $R_{h+1/2}^\B$. The game terminates once the state $S_{H+1}$ is revealed to both players. In this way, we can collect data as the form of $\cD = \{S_h, U_h, R_h^\A, S_{h+1/2}, B_{h+1/2}, R_{h+1/2}^\B\}_{h\in[H]}\cup\{B_{1/2}, S_{H+1}\}$ from Alice's position. We assume that we in total collect $n$ independent and identically distributed copies of the form in $\cD$. 

We consider the following model for such a two-player turn-based game. Specifically, we impose the structure of the action and reward for each player and the structure of state transitions at the $h$-th step as follows. 

\vskip5pt
\noindent\textbf{Alice's model.}
At the $h$-th step, the \textit{saturated} model for Alice takes the following form, 
\#\label{eq:alice}
& \EE\left[ A_h  \given S_h, B_{h-1/2}, U_h, V_h \right] = \alpha_z^\A(U_h, V_h, S_h) B_{h-1/2} + \alpha_{u,s}^\A(U_h, V_h, S_h), \\
& \EE\left[ R_h^\A \given S_h, A_h , B_{h-1/2}, U_h, V_h \right] = \beta_a^\A(U_h, V_h, S_h) A_h  + \beta_z^\A(U_h, V_h, S_h)  B_{h-1/2} \\
& \qquad\qquad\qquad\qquad\qquad\qquad\qquad + \beta_{az}^\A(U_h, V_h, S_h) A_h   B_{h-1/2} + \beta_{u,s}^\A(U_h, V_h, S_h).  
\#
Here in the structural formulation of Alice's action, the function $\alpha_z^\A(U_h, V_h, S_h)$ measures the impact of Bob's previous $B_{h-1/2}$ on Alice's policy, while $\alpha_{u,s}^\A(U_h, V_h, S_h)$ measures other effects from the state and all related private information. We remark that the functions $\alpha_z^\A$ and $\alpha_{u,s}^\A$ are determined by Alice's behavior policy $b^\A_h$.  In the meanwhile, in the structural formulation of Alice's reward, $\beta_a^\A(U_h, V_h, S_h)$ measures the impact completely from Alice's action, $\beta_z^\A(U_h, V_h, S_h)$ measures the impact completely from Bob's previous action, $\beta_{az}^\A(U_h, V_h, S_h)$ measures the impact from the interaction of  Bob's previous and Alice's actions, while the last term $\beta_{u,s}^\A(U_h, V_h, S_h)$ measures the random effect from the state and all related private information. In our applications such as training a chatbot by the human or algorithmic trading, it is important to consider the interaction effect between Alice's action and Bob's action. Take the chatbot as an example. A customer's overall satisfaction is clearly affected by how she asks the questions and whether the machine solves the questions correctly, which can be quantified by the interaction effect to some extent. Another example is related to the algorithmic trading. When the trader (i.e., Alice) wants to leverage the option flow (i.e., Bob's action), it is important to consider the joint effect of both players' actions on the final return if they are institutional investors. 
We remark that because the action spaces are binary, all models can be written in the form of Model \eqref{eq:alice}. Thus our Model \eqref{eq:alice} is saturated and \textit{general}.

\vskip5pt
\noindent\textbf{State Transition from $h$ to $h+1/2$.}
We assume that
\#\label{eq:model-trans-h}
& \EE\left[ f(S_{h+1/2}, U_{h+1/2}, V_{h+1/2}) \given S_h, A_h , B_{h-1/2}, U_h, V_h \right]   \\
& \qquad = \theta_h^f(U_h, V_h, S_h) A_h  + \gamma_h^f(U_h, V_h, S_h) B_{h-1/2} + \omega_h^f(U_h, V_h, S_h) A_h  B_{h-1/2} + \iota_h^f(U_h, V_h, S_h) 
\#
for any function $f\colon \cS\times \cU \times \cU\to \RR$. In such a formulation, $\theta_h^f(U_h, V_h, S_h)$ measures the impact on the future state-transition completely from Alice's action, $\gamma_h^f(U_h, V_h, S_h)$ measures the impact completely from Bob's previous action, $\omega_h^f(U_h, V_h, S_h)$ measures the impact from the interaction of  Bob's previous and Alice's actions, while the last term $\iota_h^f(U_h, V_h, S_h)$ measures the random effect from the state and all related private information. Again we emphasize that the state-transition Model \eqref{eq:model-trans-h} is \textit{general} as our action spaces are all binary. Similar as Alice's model, we consider the following Bob's model.
\vskip5pt
\noindent\textbf{Bob's model.}
At the $(h + 1/2)$-th step, the model for Bob takes the following form, 
\#
& \EE\left[ B_{h+1/2} \given S_{h+1/2}, A_h , U_{h+1/2}, V_{h+1/2} \right]   \\
& \qquad = \alpha_z^\B(U_{h+1/2}, V_{h+1/2}, S_{h+1/2}) A_h + \alpha_{u,s}^\B(U_{h+1/2}, V_{h+1/2}, S_{h+1/2}), \\
& \EE\left[ R_{h+1/2}^\B \given S_{h+1/2}, A_h , B_{h+1/2}, U_{h+1/2}, V_{h+1/2} \right]   \\
& \qquad = \beta_a^\B(U_{h+1/2}, V_{h+1/2}, S_{h+1/2}) B_{h+1/2} + \beta_z^\B(U_{h+1/2}, V_{h+1/2}, S_{h+1/2}) A_h  \\
& \qquad \qquad + \beta_{az}^\B(U_{h+1/2}, V_{h+1/2}, S_{h+1/2}) A_h  B_{h+1/2} + \beta_{u,s}^\B(U_{h+1/2}, V_{h+1/2}, S_{h+1/2}).
\#
The interpretation for Bob's reward and action model is the same as Alice's and thus is omitted. We remark that the functions $\alpha_z^\B$ and $\alpha_{u,s}^\B$ are determined by Bob's adopted policy $b^\B_{h+1/2}$.

\vskip5pt
\noindent\textbf{Transition from $h+1/2$ to $h+1$.}
We have
\#\label{eq:hfeur}
& \EE\left[ f(S_{h+1}, U_{h+1}, V_{h+1}) \given S_{h+1/2}, A_h , B_{h+1/2}, U_{h+1/2}, V_{h+1/2} \right]   \\
& \qquad = \theta_{h+1/2}^f(U_{h+1/2}, V_{h+1/2}, S_{h+1/2}) A_h  + \gamma_{h+1/2}^f(U_{h+1/2}, V_{h+1/2}, S_{h+1/2}) B_{h+1/2}   \\
& \qquad \qquad + \omega_{h+1/2}^f(U_{h+1/2}, V_{h+1/2}, S_{h+1/2}) A_h  B_{h+1/2} + \iota_{h+1/2}^f (U_{h+1/2}, V_{h+1/2}, S_{h+1/2})   
\#
for any function $f\colon \cS \times \cU \times \cV \to \RR$.

The challenge of this two-player turn-based game is that as Alice, she cannot access the private information of Bob, which will prevent us from using the offline data to identify the value function. In this case, standard policy learning methods will in general induce
the so-called confounding bias. To address this challenge, we  treat Bob's action $B_{h-1/2}$ at $(h-1/2)$-th step as an IV for Alice $h$-th step's decision making in order to remove the effect of unobserved confounding caused by $V_h$. When considering the decision-making for Bob from Alice's side, we will use Alice $h$-step's action as an IV for Bob's $(h+1/2)$-th step's decision.  In particular, we impose the following assumptions for the collected offline data. 

\begin{assumption}[Identification]\label{ass:model}
For any $h\in [H]$, the following statements hold for the offline data generated by the behavior policy $b = (b^\A, b^\B)$. 
\begin{enumerate}
    \item[(a)] $A_h \notindp B_{h-1/2} \given (U_h, S_h)$.
    \item[(b)] $B_{h-1/2} \indp V_h \given (U_h, S_h)$.
    \item[(c)] $B_{h+1/2} \notindp A_h  \given (U_{h+1/2}, S_{h+1/2})$.
    \item[(d)] $A_h  \indp V_{h+1/2} \given (U_{h+1/2}, S_{h+1/2})$.
    \item[(e)] $\EE_{V_h}[\beta_{u,s}^\A(U_h, V_h, S_h)] = \EE_{V_{h+1/2}}[\beta_{u,s}^\A(U_{h+1/2}, V_{h+1/2}, S_{h+1/2})] = 0$. 
    \item[(f)] $\EE_{V_h}[\iota_h^f(U_h, V_h, S_h)] = \EE_{V_{h+1/2}}[\iota_h^f(U_{h+1/2}, V_{h+1/2}, S_{h+1/2})] = 0$,
\end{enumerate}
where $\EE_{V_{h+1/2}}$ and $\EE_{V_{h}}$ denote the expectations with respect to $V_{h}$ and $V_{h+1/2}$ given other variables respectively.
\end{assumption}
Assumption \ref{ass:model}~(a)-(b) are two well-known core IV conditions in the literature of causal inference \citep{angrist1995identification,angrist1996identification}. Specifically, Assumption \ref{ass:model}~(a) ensures that Bob's action $B_{h-1/2}$, which is regarded as an IV for Alice's action $A_h$, is correlated with the action $A_h$ of Alice given the observed state variables $U_h$ and $S_h$, which is mild. Assumption \ref{ass:model}~(b) states that the causal effect of $B_{h-1/2}$ on the future transition cannot be confounded by $V_h$, which is reasonable as $B_{h-1/2}$ happens before $V_h$. Assumption \ref{ass:model}~(c)-(d) are similar conditions, where we consider $A_h$ as an instrumental variable for adjusting the causal effect of $B_{h+1/2}$ on future transitions without accessing $V_{h+1/2}$. The last two assumptions \ref{ass:model}~(e)-(f) ensure that random effects in the reward and transition models considered in \eqref{eq:alice}--\eqref{eq:hfeur} will not carry over to the early stages. These two conditions are used to guarantee that the backward induction is valid during our learning procedure. See Figure \ref{fig:erfr} for a graphical illustration of the offline data collected from the two-player turn-based game satisfying Assumption \ref{ass:model}. 

\begin{figure}[htbp]
    \centering
\begin{tikzpicture}[scale=1.2]
        \node (bo) at (-1.5,0) [label=below:{$B_{h-1/2}$}, circle, fill=black]{};
        \node (a) at (1,0) [label=below:{$A_h$}, circle, fill=black]{};
        \node (s) at (1,1.2) [label=above:{$S_h$}, circle, fill=black]{};
        \node (u) at (1,2.4) [label=above:{$U_h$}, circle, fill=black]{};
        \node (v) at (-0.25,2.4) [label=above:{$V_h$}, circle, draw]{};
        \node (bn) at (3.5,0) [label=below:{$B_{h+1/2}$}, circle, fill=black]{};
        \node (sn) at (3.5,1.2) [label=above:{$S_{h+1/2}$}, circle, fill=black]{};
        \node (un) at (4.75,2.4) [label=above:{$U_{h+1/2}$}, circle, fill=black]{};
        \node (vn) at (3.5,2.4) [label=above:{$V_{h+1/2}$}, circle, draw]{};
        \node (snn) at (6,1.2) [label=above:{$S_{h+1}$}, circle, fill=black]{};
    \draw[-stealth] (bo) edge (s) edge (a) edge (sn);
    \draw[-stealth] (s) edge (a) edge (sn);
    \draw[-stealth] (a) edge (sn) edge (snn) edge (bn);
    \draw[-stealth] (sn) edge (bn) edge (snn);
    \draw[-stealth] (bn) edge (snn);
    \draw[-stealth] (u) [out=-45,in=40] edge (a);
    \draw[-stealth] (vn) [out=-45,in=40] edge (bn);
    \draw[dashed] (-0.6,1) rectangle ++(2.2,2.2);
    \draw[dashed] (3.0,1) rectangle ++(2.2,2.2);
    \draw [-stealth, dashed](1.6,2.1) -- (3,2.1);
\end{tikzpicture}
\caption{A graphical illustration of the offline data generated by a two-player turn-based game described in \S\ref{sec:model} satisfying Assumption \ref{ass:model}. 
Here $\{U_h, V_h\}$ are private information, which satisfy the independence assumption, i.e., $B_{h-1/2} \indp V_h \given (U_h, S_h)$ and $A_h  \indp V_{h+1/2} \given (U_{h+1/2}, S_{h+1/2})$. Also the trajectory $\{S_h, U_h, V_h, A_h, S_{h+1/2}, U_{h+1/2}, V_{h+1/2}, B_{h+1/2}\}_{h\in [H]}$ satisfies the Markov property. In the meanwhile, the dependence of $(S_{h+1/2}, U_{h+1/2}, V_{h+1/2})$ on $(S_h, U_h, V_h)$ is characterized by the dashed arrow in the figure for the simplicity of the presentation.}
\label{fig:erfr}
\end{figure}
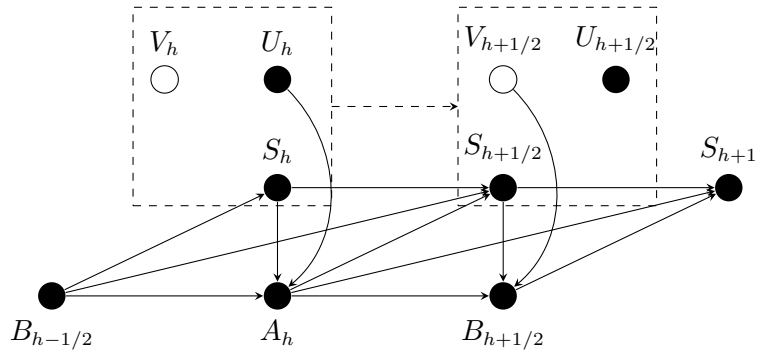

By the model structure specified in \eqref{eq:alice}--\eqref{eq:hfeur}, immediately we have the following property for the action-value functions of Alice. Similar results also hold for Bob. 

\begin{proposition}\label{prop:q-form-alice}
For any policy $\pi = (\pi^\A, \pi^\B)\in \Pi^\A\times \Pi^\B$ and $h\in [H]$, there exist functions $\tilde \theta_h^{\A}$, $\tilde \theta_{h+1/2}^{\A}$, $\tilde \gamma_h^{\A}$, $\tilde \gamma_{h+1/2}^{\A}$, $\tilde \omega_h^{\A}$, and $\tilde \omega_{h+1/2}^{\A}$ such that
{\small
\$
& Q^{\A,\pi}_h(S_h, A_h , B_{h-1/2}, U_h, V_h) \\
& \qquad = \tilde \theta_h^\A(S_h, U_h, V_h; \pi)\cdot A_h  + \tilde \gamma_h^\A(S_h, U_h, V_h; \pi) \cdot B_{h-1/2}  \\
& \qquad \qquad + \tilde \omega_h^\A(S_h, U_h, V_h; \pi) \cdot A_h  B_{h-1/2} + \tilde \zeta_h^\A(S_h, U_h, V_h; \pi),\\
& Q^{\A,\pi}_{h+1/2}(S_{h+1/2}, A_h , B_{h+1/2}, U_{h+1/2}, V_{h+1/2}) \\
& \qquad = \tilde \theta_{h+1/2}^\A(S_{h+1/2}, U_{h+1/2}, V_{h+1/2}; \pi)\cdot A_h  + \tilde \gamma_{h+1/2}^\A(S_{h+1/2}, U_{h+1/2}, V_{h+1/2}; \pi) \cdot B_{h+1/2}   \\
& \qquad \qquad + \tilde \omega_{h+1/2}^\A(S_{h+1/2}, U_{h+1/2}, V_{h+1/2}; \pi) \cdot A_h  B_{h+1/2} +  \tilde \zeta_{h+1/2}^\A(S_{h+1/2}, U_{h+1/2}, V_{h+1/2}; \pi). 
\$
}
\end{proposition}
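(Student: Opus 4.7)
The plan is to establish both decompositions simultaneously by backward induction on the time index from $h = H$ down to $h = 1$, exploiting two structural facts: (i) because $\cA = \cB = \{0, 1\}$, every real-valued function of a binary pair $(a, b) \in \{0, 1\}^2$ admits a unique saturated representation $\alpha + \beta a + \gamma b + \delta ab$, and, by virtue of the idempotency $a^2 = a$, $b^2 = b$, this class is closed under multiplication by a factor that is affine in $a$ or in $b$; (ii) the reward and transition models \eqref{eq:alice}--\eqref{eq:hfeur} are themselves saturated with respect to the relevant binary action pair. The inductive hypothesis one seeks to propagate is precisely membership in this saturated class.

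For the base case, $Q_{H+1/2}^{\A,\pi} \equiv 0$ (empty sum) while $Q_H^{\A,\pi} = \EE[R_H^\A \given S_H, A_H, B_{H-1/2}, U_H, V_H]$ is already in the stated form by \eqref{eq:alice}, with $\tilde\theta_H^\A = \beta_a^\A$, $\tilde\gamma_H^\A = \beta_z^\A$, $\tilde\omega_H^\A = \beta_{az}^\A$, and $\tilde\zeta_H^\A = \beta_{u,s}^\A$. For the inductive step from stage $h+1$ back to stage $h$, I first pass from $Q_{h+1}^{\A,\pi}$ to $Q_{h+1/2}^{\A,\pi}$ via the Bellman identity, which writes the latter as a conditional expectation of the former. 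By the induction hypothesis, $Q_{h+1}^{\A,\pi}$ is saturated in $(A_{h+1}, B_{h+1/2})$. Integrating out $A_{h+1}$ against $\pi^\A_{h+1}(\cdot \given S_{h+1}, U_{h+1}, B_{h+1/2})$ --- a Bernoulli probability that may depend on $B_{h+1/2}$ --- and using the binarity of $B_{h+1/2}$ to absorb that dependence, the result collapses to an affine form $d_0 + d_1 B_{h+1/2}$ with $d_0, d_1$ depending only on $(S_{h+1}, U_{h+1}, V_{h+1})$. Applying the transition model \eqref{eq:hfeur} to $d_0$ and $d_1$ separately, and then combining the two saturated expressions using $B_{h+1/2}^2 = B_{h+1/2}$, yields the required saturated form in $(A_h, B_{h+1/2})$.

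Next, I pass from $Q_{h+1/2}^{\A,\pi}$ to $Q_h^{\A,\pi}$. The Bellman identity decomposes $Q_h^{\A,\pi}$ into the immediate reward $\EE[R_h^\A \given \cdot]$, which is already saturated in $(A_h, B_{h-1/2})$ by \eqref{eq:alice}, and the expected continuation $\EE[Q_{h+1/2}^{\A,\pi} \given S_h, A_h, B_{h-1/2}, U_h, V_h]$. Using the saturated form of $Q_{h+1/2}^{\A,\pi}$ in $(A_h, B_{h+1/2})$ just established, I integrate out $B_{h+1/2}$ against $\pi^\B_{h+1/2}(\cdot \given S_{h+1/2}, A_h)$ and use the binarity of $A_h$ to absorb the policy's dependence on $A_h$, obtaining an expression $c_0 + c_1 A_h$ with $c_0, c_1$ depending only on $(S_{h+1/2}, U_{h+1/2}, V_{h+1/2})$. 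Applying the transition model \eqref{eq:model-trans-h} separately to $c_0$ and $c_1$ and simplifying via $A_h^2 = A_h$ produces the required saturated form in $(A_h, B_{h-1/2})$; summing with the saturated reward term completes the induction. The corresponding statement for Bob follows by interchanging the roles of Alice and Bob throughout.

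The main obstacle is essentially bookkeeping rather than conceptual: each inductive step chains two conditional expectations --- one over a binary action under the corresponding policy, one over the state/private-information transition --- and one must track how the coefficient functions $(\tilde\theta, \tilde\gamma, \tilde\omega, \tilde\zeta)$ transform under each operation. The conceptual crux is the closure of the saturated class under (i) marginalization by Bernoulli policies that may themselves depend on the remaining binary argument, and (ii) composition with the saturated transition models \eqref{eq:model-trans-h}--\eqref{eq:hfeur}. Once this closure is verified, the explicit coefficient formulas fall out by routine algebra using only $A_h^2 = A_h$ and $B_{h+1/2}^2 = B_{h+1/2}$, and no higher-order interactions can arise.
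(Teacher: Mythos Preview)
Your proposal is correct and follows essentially the same backward-induction route the paper relies on. The paper itself does not give a standalone proof of Proposition~\ref{prop:q-form-alice}---it is stated as an immediate consequence of the saturated models \eqref{eq:alice}--\eqref{eq:hfeur}---but the explicit Bellman recursion in \S\ref{sec:full-eval} (see \eqref{eq:bellman-h} and the derivation of \eqref{eq:est-qha}--\eqref{eq:est-qha4}) carries out precisely the two-step marginalize-then-transition argument you describe, with the same use of $A_h^2 = A_h$ and $B_{h+1/2}^2 = B_{h+1/2}$ to keep the representation closed.
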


Proposition \ref{prop:q-form-alice} implies that the action-value functions can always be decomposed into three components as the effects from Alice's action, Bob's action, and their interaction. It can be seen that our data generating process can be easily generalized to the multiple-action setting. Since $V_h$ is not observed, one cannot use the offline data to identify $Q^{\A,\pi}_h$ and $Q^{\A,\pi}_{h+1/2}$, and ultimately $J^\A(\pi^\A, \pi^\B)$. Under Assumption \ref{ass:model}~(e), we can define 
\$
& Q^{\A,\pi}_h(S_h, A_h , B_{h-1/2}, U_h) = \tilde \theta_h^\A(S_h, U_h; \pi)\cdot A_h  + \tilde \gamma_h^\A(S_h, U_h; \pi) \cdot B_{h-1/2} + \tilde \omega_h^\A(S_h, U_h; \pi) \cdot A_h  B_{h-1/2},  \\
& Q^{\A,\pi}_{h+1/2}(S_{h+1/2}, A_h , B_{h+1/2}, U_{h+1/2}) = \tilde \theta_{h+1/2}^\A(S_{h+1/2}, U_{h+1/2}; \pi)\cdot A_h  + \tilde \gamma_{h+1/2}^\A(S_{h+1/2}, U_{h+1/2}; \pi) \cdot B_{h+1/2}   \\
& \qquad \qquad \qquad \qquad \qquad \qquad \qquad \qquad \qquad + \tilde \omega_{h+1/2}^\A(S_{h+1/2}, U_{h+1/2}; \pi) \cdot A_h  B_{h+1/2},
\$
where,
\$
(\tilde \theta_h^\A(S_h, U_h; \pi), \tilde \gamma_h^\A(S_h, U_h; \pi), \tilde \omega_h^\A(S_h, U_h; \pi))^\top = \EE_{V_h}\left[(\tilde \theta_h^\A(S_h, U_h, V_h; \pi), \tilde \gamma_h^\A(S_h, U_h, V_h; \pi), \tilde \omega_h^\A(S_h, U_h, V_h; \pi))^\top\right]
\$
and 
{\small
\$
&(\tilde \theta_{h+1/2}^\A(S_{h+1/2}, U_{h+1/2}; \pi), \tilde \gamma_{h+1/2}^\A(S_{h+1/2}, U_{h+1/2}; \pi), \tilde \omega_{h+1/2}^\A(S_{h+1/2}, U_{h+1/2}; \pi))^\top
\\
& \quad =\EE_{V_{h+1/2}}\left[ (\tilde \theta_{h+1/2}^\A(S_{h+1/2}, U_{h+1/2}, V_{h+1/2}; \pi), \tilde \gamma_{h+1/2}^\A(S_{h+1/2}, U_{h+1/2}, V_{h+1/2}; \pi), \tilde \omega_{h+1/2}^\A(S_{h+1/2}, U_{h+1/2}, V_{h+1/2}; \pi))^\top \right].
\$
}

As long as we are able to use the offline data to identify $Q^{\A,\pi}_h(S_h, A_h , B_{h-1/2}, U_h)$ and $Q^{\A,\pi}_h(S_h, A_h , B_{h-1/2}, U_h)$, we can obtain that
$$
J^\A(\pi^\A, \pi^\B) = \EE[Q^{\A,\pi}_1(S_h, A_h , B_{h-1/2}, U_h)].
$$
Similar notations and results hold for Bob's state-action value function. In the following two sections, we leverage instrumental variables and establish a novel identification result for identifying the policy value of both players.

\section{Identification Results under Instrumental Variables: A Detour}\label{sec:id-all}
Since our identification results in the two-player turned-based game rely on the instrumental variable approach, in this subsection, we take a detour and introduce a new instrumental variable approach for identifying the effect of both the action and IV in the standard causal inference setting under unmeasured confounding. Such an approach serves as the foundation of our proposal for learning the optimal in-class policy pair in the two-player turned-based game without accessing Bob's private information, which will be introduced later in \S\ref{sec:uiehuier}. 

Consider a setting where the outcome $Y$ is generated by some observed state $(S, U)$, unobserved state $V$, a binary action $A \in \cA$ and a binary instrumental variable $B \in \cB$. We impose the following structure equation models for the outcome $Y$ and the action $A$. Specifically, for unknown functions $\bvartheta^*(S, U, V) = (\vartheta_a^*(S, U, V), \vartheta_z^*(S, U, V), \vartheta_{az}^*(S, U, V))$, the following two conditional expectation models hold,
\$
& \EE\left[Y\given S, U, V, A, B\right] = \vartheta_a^*(S, U, V) A + \vartheta_z^*(S, U, V) B + \vartheta_{az}^*(S, U, V) A B + \vartheta_{us}^*(S, U, V),\\
&\EE[ A  \given S, B, U, V ] = \alpha_z^*(S, U, V) B + \alpha_{u,s}^*(S, U, V),
\$
where one can treat $Y$ as an immediate reward, $A$ as the action of Alice, $B$ as the action taken by Bob in the previous step, $(S, U)$ as the information accessed by the Alice and $V$ as the private information of Bob in the current step. In the standard causal inference setting, existing works such as \cite{tchetgen2021genius} did not model the interaction effect $\vartheta_{az}^*(S, U, V)$. As discussed in the Alice's model, motivated by our applications, it is essential to model such an effect. This implies that our model is more general than the existing literature.
In this section, we aim to establish identification results for $\bvartheta^*(S, U) = (\vartheta_a^*(S, U), \vartheta_z^*(S, U), \vartheta_{az}^*(S, U))$ so that one can leverage the offline data, which are i.i.d copies of $\{Y, S, U, A, B\}$ (i.e., $\{Y^i, S^i, U^i, A^i, B^i\}_{i \in [n]}$), to learn the effect of the action $A$ and IV $B$ on the outcome $Y$. Here $\bvartheta^*(S, U) = \EE_{V}[\bvartheta^*(S, U, V)]$. Once $\bvartheta^*(S, U)$ is identified, one can figure out the best actions for $A$ and $B$ given observed information so as the expected outcome of $Y$ is maximized. Since this problem can be regarded as a special case of the two-player turn-based game, we impose the following assumptions similar as Assumption \ref{ass:model}, together with some orthogonal conditions.
\begin{assumption}[Identification for A Basic Setting]\label{ass:single-stage model}
The following statements hold for the random sample $\{Y, S, U, A, B\}$.
\begin{enumerate}
    \item[(a)] $B \notindp A \given (S, U)$.
    \item[(b)] $B \indp V \given (S, U)$.
    \item[(c)] The following orthogonality conditions hold.
	\begin{align}
		\Cov(\vartheta^\ast_a(S, U, V), \alpha^\ast_{z}(S, U, V) \given S, U) &= 	\Cov(\vartheta^\ast_a(S, U, V), \alpha^\ast_{u, s}(S, U, V) \given S, U)  = 0 \\
		\Cov(\vartheta^\ast_z(S, U, V), \alpha^\ast_{z}(S, U, V) \given S, U) &= 	\Cov(\vartheta^\ast_z(S, U, V), \alpha^\ast_{u, s}(S, U, V) \given S, U)  = 0 \\
		\Cov(\alpha^\ast_z(S, U, V), \vartheta^\ast_{az}(S, U, V) \given S, U) &= \Cov(\alpha^\ast_z(S, U, V), \vartheta_{us}(S, U,  V) \given  S, U) = 0\\
		\Cov(\vartheta^\ast_{az}(S, U,  V), \alpha^\ast_{u,s}(S, U,  V) \given  S, U) & = 0.
	\end{align}
\end{enumerate}
\end{assumption}
Again, Assumption \ref{ass:single-stage model}~(a)-(b) impose IV-relevant and independence assumptions on $\{Y, S, U, A, B\}$, which are standard in the existing literature \citep[e.g.,][]{wang2018bounded}. Since the IV $B$ (i.e., Bob's action) may have an effect on the outcome $Y$, we did not impose the exclusion restriction on $B$ by only allowing it to  affect $Y$ directly. Instead, we impose the orthogonal conditions in Assumption \ref{ass:single-stage model}~(c) to relax the IV exclusion restriction. One \textit{sufficient} condition for Assumption \ref{ass:single-stage model}~(c) is that there is no common effect modifiers resulted from $V$ in the aforementioned conditional expectation models related to $Y$ and $A$ (see Figure \ref{fig:6746} for a graphical illustration of such a case). 
Our identification condition is motivated by \cite{tchetgen2021genius}, where in their outcome model of $Y$, only the marginal effects of $A$ and $B$ are considered but not their interaction. Therefore, our models, identification conditions and results are more general than theirs. In the following, we present our identification results so that the offline data can identify $\bvartheta^*(S, U)$.

\begin{figure}[htbp]
    \centering
\begin{tikzpicture}[scale=1.2]
        \node (bo) at (-1.5,0) [label=above:{$(S,U)$}, circle, fill=black]{};
        \node (a) at (1,0) [label=below:{$A$}, circle, fill=black]{};
        \node (s) at (0,1.5) [label=above:{$B$}, circle, fill=black]{};
        \node (bn) at (3.5,0) [label=below:{$Y$}, circle, fill=black]{};
        \node (v2) at (2.5,1.5) [label=above:{$V_2$}, circle, draw]{};
        \node (v1) at (2.0,1.5) [label=above:{$V_1$}, circle, draw]{};
    \draw[-stealth] (bo) edge (a);
    \draw[-stealth] (s) edge (a);
    \draw[-stealth] (a) edge (bn);
    \draw[-stealth] (v1) edge (a);
    \draw[-stealth] (v2) edge (bn);
    \draw[-stealth] (s) edge (bn);
    \draw[dashed] (1.65,1.2) rectangle ++(1.2,0.95);
    \draw[-stealth] (bo) [out=-45,in=225] edge (bn);
\end{tikzpicture}
\caption{A graphical illustration of one sufficient condition for Assumption \ref{ass:single-stage model}~(c), where $V = (V_1, V_2)$ and $V_1$ and $V_2$ are independent. }
\label{fig:6746}
\end{figure}
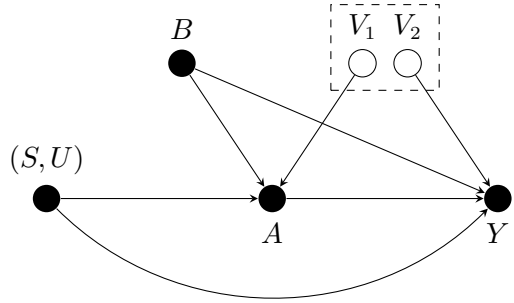


\begin{theorem}\label{theorem: basic id}
Under Assumption \ref{ass:single-stage model}, if the following three linear equations with respect to $\bvartheta^*(S, U)$ have a unique solution, then $\bvartheta^*(S, U)$ can be non-parametrically identified by our offline data.
\$
& \EE\left[ \left( B - \EE[B\given S, U] \right) \cdot \left( A - \EE[A\given B,  S, U] \right)\cdot Y \given  S, U \right] \\
& \quad = \EE\left[ \left( B - \EE[B \given  S, U ] \right) \cdot \left( A - \EE[A\given B,  S, U] \right)\cdot A \given  S, U \right] \cdot \vartheta_a^*(S, U) \\
& \quad \quad + \EE\left[ B \cdot \left( B - \EE[B \given  S, U ] \right) \cdot \left( A - \EE[A\given B,  S, U] \right)\cdot A \given  S, U \right] \cdot \vartheta_{az}^*( S, U), \\
& \EE\left[ \left( B - \EE[B\given  S, U] \right) \cdot Y \given  S, U \right] \\
& \quad = \EE\left[ A \cdot \left( B - \EE[B \given  S, U ] \right) \given  S, U \right] \cdot \vartheta_a^*( S, U) \\
& \quad \quad + \EE\left[ B \cdot \left( B - \EE[B\given  S, U ] \right) \given  S, U \right] \cdot \vartheta_z^*( S, U) \\
& \quad \quad + \EE\left[ A B \cdot \left( B - \EE[B\given  S, U ] \right) \given  S, U \right] \cdot \vartheta_{az}^*( S, U), \\
& \EE\left[ \left( B - \EE[B\given  S, U] \right) \cdot Y \given  S, U \right] \\
& \quad = \EE\left[ A \cdot \left( B - \EE[B \given  S, U ] \right) \given  S, U \right] \cdot \vartheta_a^*( S, U) \\
& \quad \quad + \EE\left[ B \cdot \left( B - \EE[B\given  S, U ] \right) \given  S, U \right] \cdot \vartheta_z^*( S, U) \\
& \quad \quad + \EE\left[ A B \cdot \left( B - \EE[B\given  S, U ] \right) \given  S, U \right] \cdot \vartheta_{az}^*( S, U). 
\$
\end{theorem}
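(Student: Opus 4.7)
My plan is to verify that, under Assumption~\ref{ass:single-stage model}, the three displayed linear equations in the theorem actually hold when we plug in the $V$-averaged structural coefficients $\bvartheta^\ast(S, U) = (\vartheta_a^\ast(S, U), \vartheta_z^\ast(S, U), \vartheta_{az}^\ast(S, U))$; since every expectation in those equations involves only the observable distribution of $(Y, S, U, A, B)$, the hypothesized uniqueness of the solution then delivers nonparametric identification of $\bvartheta^\ast(S, U)$.

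First I would expand $Y$ and $A$ using their structural equations, together with the tower property and the IV independence $B \indp V \mid (S, U)$ from Assumption~\ref{ass:single-stage model}(b). Writing $\bar{f}(S, U) := \EE_V[f(S, U, V) \mid S, U]$ for any integrable $f$, this yields $\EE[A \mid B, S, U] = \bar{\alpha}_z(S, U) B + \bar{\alpha}_{u, s}(S, U)$ and $\EE[B \mid S, U]$ as purely observable quantities. I would then substitute $A = \alpha_z^\ast B + \alpha_{u, s}^\ast + \eta_A$ and $Y = \vartheta_a^\ast A + \vartheta_z^\ast B + \vartheta_{az}^\ast A B + \vartheta_{us}^\ast + \eta_Y$, where $\eta_A, \eta_Y$ are mean-zero noise given $(S, U, V, B)$ and $(S, U, V, A, B)$ respectively. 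Because $A, B \in \{0, 1\}$, I can collapse powers via $B^2 = B$, $A^2 = A$, and $A B \cdot B = A B$, which keeps every resulting product multilinear in $A$ and $B$.

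Next, for each of the three target moments in the theorem, I would expand the corresponding product such as $(B - \EE[B \mid S, U])(A - \EE[A \mid B, S, U]) Y$ and take conditional expectation given $(S, U)$. Each resulting term takes the form $\EE_V[\phi(S, U, V) \psi(S, U, V) \mid S, U] \cdot g(B)$, where $\phi$ comes from the $Y$-equation and $\psi$ comes from the $A$-equation. I would then invoke the orthogonality conditions in Assumption~\ref{ass:single-stage model}(c), each of which precisely factorizes one such cross term into a product of $V$-marginals, e.g., $\EE_V[\vartheta_a^\ast \alpha_z^\ast \mid S, U] = \bar{\vartheta}_a(S, U)\, \bar{\alpha}_z(S, U)$. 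The contributions of $\eta_A$ and $\eta_Y$ drop out by iterated expectations, and matching coefficients yields exactly the linear system in the statement.

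The main obstacle is the combinatorial bookkeeping: each of the three moment expansions generates roughly a dozen cross terms of second order in the structural functions, and one must check that every orthogonality condition in Assumption~\ref{ass:single-stage model}(c) is consumed by precisely one such term, with no residual $V$-dependent quantity surviving on the right-hand side. A practical way to organize the calculation is to group terms by which element of $\bvartheta^\ast(S, U)$ they ultimately multiply, since the target system must end up linear in $(\bar{\vartheta}_a, \bar{\vartheta}_z, \bar{\vartheta}_{az})$ with coefficients that are observable functions of $(S, U)$. Once the three identities are verified, the hypothesized invertibility of the $3 \times 3$ system closes the argument, because the solution of a linear system with observable coefficients is itself an observable functional of the offline data, giving nonparametric identification of $\bvartheta^\ast(S, U)$.
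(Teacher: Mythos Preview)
Your proposal is correct and follows essentially the same route as the paper: expand $Y$ and $A$ via their structural equations, use $B \indp V \mid (S,U)$ to separate $B$-moments from $V$-moments, and then invoke the orthogonality conditions in Assumption~\ref{ass:single-stage model}(c) to replace each surviving cross-product $\EE_V[\phi\psi]$ by $\bar\phi\,\bar\psi$. The paper organizes this into three separate lemmas (one per equation) and cites Lemma~3.1 of \cite{tchetgen2021genius} to shortcut part of the first calculation, but the underlying algebra is exactly what you outline; note incidentally that the third displayed equation in the theorem statement is a typographical duplicate of the second and should be the covariance identity proved in the paper's Lemma~\ref{lemma:id3}, which your expansion strategy handles just as well.
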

\begin{proof}
    The proof can be found in \S \ref{appendix: lm 1-3}.
\end{proof}

By Theorem \ref{theorem: basic id}, we can identify $\vartheta_a^*$, $\vartheta_z^*$, and $\vartheta_{az}^*$ non-parametrically by solving a system of linear equations as long as the solution is unique. In the following, we introduce our estimator for $\bvartheta^*$ based on these identification results. 
To simplify the presentation, we assume that there exists an oracle that gives the following nuisance parameters, 
\$
& f_1( S, U) = \EE[B\given  S, U], \quad f_2(S, U, B) = \EE[A\given S, U, B], \quad f_3( S, U) = \EE[(A-f_2(S, U, B)) Y \given  S, U ], \\
& f_4( S, U) = \EE[A(A - f_2(S, U, B))\given  S, U], \qquad f_5( S, U) = \EE[AB (A - f_2(S, U, B))\given  S, U]. 
\$
We remark that such nuisance parameters can be estimated via supervised learning without oracle. See \S\ref{sec:id-all-wo-oracle} for such a setting. 
Further, we define
\#\label{eq:quanw}
& w_1(Y, A, B,  S, U) = \rho_1(Y, S, U, B) - \rho_2(S, U, B ) \vartheta_a^*( S, U) - \rho_3(S, U, B ) \vartheta_{az}^*( S, U), \\
& w_2(Y, A, B,  S, U) = \rho_4(Y,  S, U, B) - \rho_2(S, U, B, f_1) \vartheta_a^*(S, U) - \rho_6(S, U, B) \vartheta_z^*( S, U) - \rho_7(S, U, B) \vartheta_{az}^*( S, U), \\
& w_3(Y, A, B,  S, U) = \rho_8(Y, S, U, B ) - (1 - f_1( S, U))f_1( S, U) f_3( S, U) \\
& \qquad\qquad\qquad\qquad - \left( \rho_9(S, U, B ) - (1 - f_1( S, U)) f_1( S, U) f_4( S, U) \right)\vartheta_a^*( S, U) \\
& \qquad\qquad\qquad\qquad - \left( \rho_{10}(S, U, B ) - (1 - f_1( S, U)) f_1( S, U) f_5( S, U) \right)\vartheta_{az}^*( S, U),
\#
where $\{\rho_i\}_{i\in [10]}$ are defined in \eqref{eq:rhos} of the appendix. 
We denote by $W(Y, S, U, A, B, \bvartheta^*) = (w_1, w_2, w_3)$, where we recall that $\bvartheta^*(S, U) \in \mathbb{R}^3$. 
Observing the linear system in \eqref{eq:quanw}, we can write
\#\label{eq:ufwheiu}
W = \Phi(\bff) \bm{\vartheta}^* + \alpha(\bff), 
\#
where $\Phi(\bff) \in \RR^{3\times 3}$, $\alpha(\bff)\in \RR^3$, and $\bff = (f_1, f_2, \ldots, f_5)$. 
Then by the identification results in Theorem \ref{theorem: basic id}, we have the following conditional moment restriction, 
\#\label{eq:cmr}
\EE[W(Y, S, U, A, B, \bvartheta^*)\given S, U] = 0.
\#
We aim to recover $\bm{\vartheta}^*$ via such a conditional moment restriction. In general, we need to estimate both $\bvartheta^*$ and $\bff$, where \eqref{eq:cmr} is a nonlinear conditional moment restriction model with respect to $\bvartheta^*$ and $\bff$. Although we assume that we have accessed the oracle $\bff$, we still consider this problem as solving a nonlinear conditional moment restriction model. We
follow \cite{chen2012estimation} and propose to use a sieve minimum distance (SMD) estimator for $\bvartheta^*$. We remark that such a nonlinear conditional moment restriction has been studied in the econometrics literature. See \cite{chen2012estimation} and the reference therein. 
We first introduce the definition of the sieve space $\cH_n$,  
\#\label{eq:wuefiuyr}
\cH_n = \left\{ \bvartheta\in \cH \colon \bvartheta(\cdot) = \sum_{k=1}^{k(n)} a_k q_k(\cdot) \right\},
\#
where $k(n)<\infty$, $k(n)\to \infty$ as $n\to\infty$, and $\{q_k\}_{k=1}^\infty$ is a sequence of known basis functions of a Banach space (i.e., $\cH$) such as wavelet basis functions. 
Then the SMD estimator for $\bvartheta^*$ takes the following form, 
\#\label{eq:psmd-single}
& \hat{\bm{\vartheta}}_n \in \argmin_{{\bm{\vartheta}}\in \cH_n} \hat L({\bm{\vartheta}}), \\
& \text{where } \hat L({\bm{\vartheta}}) = \frac{1}{n} \sum_{i = 1}^n \hat m(Y^i, S^i, U^i, A^i, B^i; {\bm{\vartheta}})^\top  \hat m(Y^i, S^i, U^i, A^i, B^i; {\bm{\vartheta}}). 
\#
Here $\hat m(Y^i, S^i, U^i, A^i, B^i; \bvartheta)$ is a consistent estimator of $\EE[W(Y, S, U, A, B, \bvartheta)\given S, U]$ under some mild conditions. For example, one can use a non-parametric regression with basis function approximation to obtain $\hat m(Y^i, S^i, U^i, A^i, B^i; \bvartheta)$. In the meanwhile, we denote by $m(S, U,{\bm{\vartheta}}) = \EE[W(Y, S, U, A, B, {\bm{\vartheta}})\given S, U]$ for any ${\bm{\vartheta}}$. 

In the following, based on the result of \cite{chen2012estimation}, we derive the convergence rate of $\hat{\bm{\vartheta}}_n$. We introduce the following assumptions. 

\begin{assumption}[Consistency]\label{ass:consistency}
    The SMD estimator $\hat{\bm{\vartheta}}_n$ is a consistent estimator of $\bm{\vartheta}^*$. 
\end{assumption}

As we will show in Theorem \ref{thm:consistency} in \S\ref{sec:id-all-wo-oracle} of the appendix, such an assumption of consistency can be achieved under mild conditions. With Assumption \ref{ass:consistency}, we can restrict our space $\cH$ to a shrinking neighborhood around the ground truth $\bvartheta^*$ and define
\$
\cH_s = \left\{ \bvartheta\in \cH \colon  \|\bvartheta - \bvartheta^*\| \leq \delta, \|\bvartheta\| \leq M  \right\}, \qquad \cH_{sn} = \cH_s \cap \cH_n
\$
for a positive constant $M$ and a sufficiently small positive $\delta$ such that $\PP(\hat{\bm{\vartheta}}_n \notin \cH_s) < \delta$. $\| \bullet \|$ refers to the $L_2$ norm with respect to some underlying probability measure. Thus, we restrict the sieve space $\cH_n$ in \eqref{eq:psmd-single} to be $\cH_{sn}$ hereafter. 
Meanwhile, we define the path-wise derivative in the direction $[\bvartheta - \bvartheta^*]$ evaluated at $\bvartheta^*$ as follows, 
\$
\frac{\ud m(S,U,\bvartheta^*)}{\ud \bvartheta}[\bvartheta - \bvartheta^*] = \frac{\ud \EE[W(Y,S,A,B,U,(1-\tau)\bvartheta + \tau\bvartheta^*)\given S,U]}{\ud \tau} \bigggiven_{\tau = 0}. 
\$
We also define the following notation of pseudometric for any $\bvartheta_1,\bvartheta_2\in \cH_s$, 
\$
\|\bvartheta_1 - \bvartheta_2\|_\ps = \sqrt{\EE\left[ \left\| \frac{\ud m(S,U,\bvartheta^*)}{\ud \bvartheta} [\bvartheta_1 - \bvartheta_2] 
\right\|^2\right]}. 
\$

We introduce the following assumptions for the shrinking space $\cH_{s}$. 

\begin{assumption}[Local Curvature]\label{ass:curv-paper}
    The following statements hold. 
    \begin{itemize}
        \item $\cH_s$ and $\cH_{sn}$ are convex, and $m(S, U, \bvartheta)$ is continuously path-wise differentiable with respect to $\bvartheta\in \cH_{s}$. 
        \item There exists a constant $c>0$ such that $\|\bvartheta - \bvartheta^*\|_\ps \leq c \cdot \|\bvartheta - \bvartheta^*\|$ for any $\bvartheta\in \cH_{s}$. 
        \item There exist finite constants $c_1,c_2>0$ such that $\|\bvartheta - \bvartheta^*\|_\ps^2 \leq c_1 \EE[\|m(S,U,\bvartheta)\|^2]$ for any $\bvartheta\in \cH_{sn}$; and $c_2\EE[\|m(S,U,\Gamma_n \bvartheta^*)\|^2] \leq \|\Gamma_n \bvartheta^* - \bvartheta^*\|_\ps^2$ for some $\Gamma_n \bvartheta^*\in \cH_{k(n)}$ such that $\|\Gamma_n \bvartheta^* - \bvartheta^*\| = o(1)$. 
    \end{itemize}
\end{assumption}

The first and the second statements in Assumption \ref{ass:curv-paper} imply that the pseudometric $\|\cdot \|_\ps$ is well-defined in $\cH_{s}$ and is weaker than $\|\cdot\|$, which is the $L_2$ norm. The last statement in Assumption \ref{ass:curv-paper} implies that the pseudometric $\|\cdot \|_\ps$ is Lipschitz continuous with respect to the function $\EE[\|m(S,U,\bvartheta)\|^2]$ for any $\bvartheta\in \cH_{sn}$. Assumption \ref{ass:curv-paper} is imposed so that a fast convergence rate in terms of the pseudometric can be obtained.

\begin{assumption}[Sieve Approximation Error]\label{ass:fwjeib}
    We have $\|\bvartheta^* - \sum_{j = 1}^{k(n)} \la \bvartheta^*, q_j\ra q_j \| = O( \nu^{-\alpha}_{k(n)} )$ for a finite $\alpha > 0$ and a positive sequence $\{\nu_j\}_{j=1}^\infty$ that strictly increases and $\nu_j = \Theta(j^{1/d})$, where $d$ is the dimension of $S$ and $U$.
\end{assumption}

\begin{assumption}[Sieve Link Condition]\label{ass:uiwerrweff}
    There are finite constants $c, C > 0 $ such that
    \begin{itemize}
        \item $\|\bvartheta\|_\ps^2 \geq c \sum_{j=1}^\infty \varphi(\nu_j^{-2}) \cdot |\la \bvartheta, q_j\ra |^2$ for all $\bvartheta \in \cH_{sn}$; 
        \item $\|\Gamma_n \bvartheta^* - \bvartheta^*\|_\ps^2 \leq C\sum_{j = 1}^\infty \varphi(\nu_j^{-2})\cdot | \la \Gamma_n \bvartheta^* - \bvartheta^*, q_j \ra|^2$, where $\Gamma_n \bvartheta^*$ is defined in Assumption \ref{ass:curv-paper}; 
        \item $\varphi(\tau) = \tau^\varsigma$ for some $\varsigma \geq 0$. 
    \end{itemize}
\end{assumption}


Assumption \ref{ass:fwjeib} implies that $\cH_n$ is a natural sieve for the approximation of $\bvartheta^*$, which is a mild condition on the smoothness of $\vartheta^*\in \cH_s$. The constant $\alpha$ usually refers to the smoothness constant of $\cH$ and $d$ is the dimension of the input features $S$ and $U$. In the meanwhile, the first statement in Assumption \ref{ass:uiwerrweff} implies that the  norm in the shrinking sieve neighborhood $\cH_{sn}$ is upper bounded by the pseudometric; the second statement is a stability condition that is required only for the sieve approximation error; and the last statement characterizes the ill-posedness of solving the conditional moment restriction \eqref{eq:cmr} via a link function $\varphi$. Here we consider the mildly ill-posed case \citep{chen2011rate} and $\varsigma$ is the corresponding exponent. When $\varsigma$ is large, the hardness of solving \eqref{eq:cmr} increases.
Assumptions \ref{ass:fwjeib} and \ref{ass:uiwerrweff} are used to establish the link between the error quantified by the pseudometric and that by the $L_2$-norm.
See \cite{chen2012estimation} for more details on these assumptions.

With the above assumptions, we have the following results characterizing the convergence of $\hat \bvartheta_n$. 

\begin{theorem}[\cite{chen2012estimation}]\label{thm:chen2012}
Suppose that Assumptions \ref{ass:single-stage model}--\ref{ass:uiwerrweff} hold. We have
\$
\left\|\hat \bvartheta_n - \bvartheta^*\right\| = O_p\left( n^{-\frac{\alpha}{2\alpha+2\varsigma+d}} \right), 
\$
where $\hat \bvartheta_n$ is the SMD estimator defined in \eqref{eq:psmd-single}. 
\end{theorem}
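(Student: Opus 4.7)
The plan is to invoke the general sieve minimum distance rate theorem of \cite{chen2012estimation}, after verifying that the setup of \eqref{eq:psmd-single}--\eqref{eq:cmr} fits into their framework. The role of Assumptions \ref{ass:consistency}--\ref{ass:uiwerrweff} is precisely to match the high-level conditions of that reference, so the proof reduces to checking that the abstract hypotheses are in force here and then translating their conclusion into our notation.

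First, I would record the setup. The SMD objective in \eqref{eq:psmd-single} targets the conditional moment restriction \eqref{eq:cmr}, $\EE[W(Y,S,U,A,B,\bvartheta^*)\given S,U] = 0$, with $W$ affine in $\bvartheta$ by \eqref{eq:ufwheiu}. Assumption \ref{ass:consistency} lets us restrict attention to the shrinking neighborhood $\cH_{sn}$, on which Assumption \ref{ass:curv-paper} guarantees (i) that the path-wise derivative defining $\|\cdot\|_{\ps}$ exists, (ii) the local-curvature inequality $\|\bvartheta - \bvartheta^*\|_{\ps}^2 \leq c_1\EE[\|m(S,U,\bvartheta)\|^2]$ needed to translate an objective-value rate into a pseudometric rate, and (iii) the reverse inequality at the sieve projector $\Gamma_n\bvartheta^*$ needed to control the bias.

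Next, I would derive the pseudometric rate. Since $\hat\bvartheta_n$ minimizes $\hat L$ over $\cH_{sn}$, we have $\hat L(\hat\bvartheta_n) \leq \hat L(\Gamma_n \bvartheta^*)$. Combining this with a uniform empirical-process bound on $\sup_{\bvartheta\in\cH_{sn}}|\hat L(\bvartheta) - L(\bvartheta)|$, where $L(\bvartheta) = \EE[\|m(S,U,\bvartheta)\|^2]$, yields $\EE[\|m(S,U,\hat\bvartheta_n)\|^2] = O_p(\eta_n^2 + b_n^2)$, where $\eta_n^2$ is the stochastic/variance term scaling like $k(n)/n$ (driven by the sieve dimension and the complexity of the plug-in estimator $\hat m$) and $b_n^2 \asymp \|\Gamma_n\bvartheta^* - \bvartheta^*\|_{\ps}^2$ is the deterministic sieve-approximation error. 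Using Assumption \ref{ass:curv-paper} on both sides converts this bound into $\|\hat\bvartheta_n - \bvartheta^*\|_{\ps}^2 = O_p(\eta_n^2 + b_n^2)$. By Assumptions \ref{ass:fwjeib}--\ref{ass:uiwerrweff}, $b_n^2 \lesssim \varphi(\nu_{k(n)}^{-2})\cdot \nu_{k(n)}^{-2\alpha} = \nu_{k(n)}^{-2(\alpha+\varsigma)}$.

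Third, I would convert the pseudometric rate to the $L_2$ rate. This is where the mild ill-posedness enters: Assumption \ref{ass:uiwerrweff} gives $\|\bvartheta\|_{\ps}^2 \geq c\,\varphi(\nu_{k(n)}^{-2})\,\|\bvartheta\|^2$ uniformly over $\cH_{sn}$, so $\|\hat\bvartheta_n - \bvartheta^*\|^2 = O_p((\eta_n^2 + b_n^2)/\varphi(\nu_{k(n)}^{-2}))$. With $\eta_n^2 \asymp k(n)/n \asymp \nu_{k(n)}^d/n$ and $\varphi(\tau) = \tau^{\varsigma}$, optimally balancing the variance term $\nu_{k(n)}^{d+2\varsigma}/n$ against the squared bias $\nu_{k(n)}^{-2\alpha}$ forces $\nu_{k(n)} \asymp n^{1/(2\alpha + 2\varsigma + d)}$, which yields the advertised rate $\|\hat\bvartheta_n - \bvartheta^*\| = O_p(n^{-\alpha/(2\alpha + 2\varsigma + d)})$.

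The main obstacle is the uniform empirical-process bound controlling $\hat L - L$ on the sieve $\cH_{sn}$, because this requires entropy estimates for the class of moment functions $\hat m(\cdot;\bvartheta)$ indexed by $\bvartheta$ and, more subtly, accounting for the fact that $\hat m$ is itself an estimator (in the oracle setting with $\bff$ known this simplifies, but the argument must still propagate the nonparametric first-stage error). Once that piece is in place, the remaining steps are bookkeeping via the curvature and sieve link conditions, exactly as in Theorem 4.1 of \cite{chen2012estimation}, whose statement we are quoting.
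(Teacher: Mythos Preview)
Your proposal is correct and matches the paper's approach: this theorem is not proved in the paper but simply quoted from \cite{chen2012estimation}, and you have correctly identified this (``whose statement we are quoting'') while supplying a faithful sketch of how the cited result is obtained under Assumptions \ref{ass:consistency}--\ref{ass:uiwerrweff}. The paper itself offers no argument beyond the citation, so your outline of the pseudometric-to-$L_2$ conversion via the sieve link condition is additional but accurate detail.
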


Theorem \ref{thm:chen2012} indicates that the convergence rate depends on the smoothness constant $\alpha$, the dimension of $S$ and $U$ (i.e., $d$), and the ill-posed constant $\varsigma$. Specifically, when the smoothness of the true-function increases, our SMD estimator converges faster. In contrast, when the dimension and the ill-posed constant increase, the converge rate becomes slower.  Moreover, as shown in \cite{chen2012estimation}, the convergence rate in Theorem \ref{thm:chen2012} is minimax-optimal when the function $W$ defined in \eqref{eq:cmr} is linear. We can thus construct plug-in estimators of $\vartheta_a^*$, $\vartheta_z^*$, and $\vartheta_{az}^*$ as $\hat \vartheta_a^*$, $\hat \vartheta_z^*$, and $\hat \vartheta_{az}^*$ from $\hat \bvartheta_n$ and obtain the following convergence results. 

\begin{corollary}\label{cor:estimation}
Suppose that Assumptions \ref{ass:consistency}--\ref{ass:uiwerrweff} hold. We have
\$
\left\|\hat \vartheta_a^* - \vartheta_a^* \right\| = O_p\left( n^{-\frac{\alpha}{2\alpha+2\varsigma+d}} \right), \quad \left\|\hat \vartheta_z^* - \vartheta_z^* \right\| = O_p\left( n^{-\frac{\alpha}{2\alpha+2\varsigma+d}} \right), \quad \left\|\hat \vartheta_{az}^* - \vartheta_{az}^* \right\| = O_p\left( n^{-\frac{\alpha}{2\alpha+2\varsigma+d}} \right).
\$
\end{corollary}
Corollary \ref{cor:estimation} demonstrates that our estimators are valid for the effects of the action $A$ and the IV $B$ on the outcome $Y$. These results serve as the foundation of our proposal below.

\section{Warm-up: Single-Stage Game}\label{sec:uiehuier}

In this section, as a warm-up, we consider a single-stage version of the aforementioned two-player turn-based game, where the length of horizon $H=1$. Under such a setting, the game terminates once the state $S_2$ is revealed to both players. In order to perform policy learning under the unmeasured confounding, we first need to conduct off-policy evaluation. Specifically, we need to establish identification results so that we are able to estimate
the action-value functions $Q_1^{\A,\pi}$ and $Q_1^{\B,\pi}$ for any policy $\pi = (\pi^\A, \pi^\B)$ using the offline data without accessing the private information of Bob. In \S\ref{sec:id-all}, we study a standard causal setting, where we introduce a general identification condition and an estimating procedure using the instrumental variable approach. The established results motivate our proposal in the single-stage game below.  In \S\ref{sec:ygerf}, we tailor the results of \S\ref{sec:id-all} to the single-stage two-player turn-based game and propose our estimators for $Q_1^{\A,\pi}$ and $Q_1^{\B,\pi}$. Based on these results, in \S\ref{sec:mfniuwer}, we propose our method for offline policy learning and finally in \S\ref{sec:ukyefgyuer}, we show that our proposed method provably converges to an optimal policy. 

\subsection{Evaluation of Expected Total Rewards}\label{sec:ygerf}
In this subsection, we extend results from \S\ref{sec:id-all} to our single-stage turn-based game. For any policy pair $\pi = (\pi^\A, \pi^\B)$, we introduce the estimation of $Q_1^{\A,\pi}$, $Q_{3/2}^{\B,\pi}$, and $Q_1^{\B,\pi}$ as follows. 

\vskip5pt
\noindent\textbf{Estimation of $Q_1^{\A,\pi}$.}
Recall that $Q^{\A,\pi}_{3/2} = 0$. Then by Alice's model \eqref{eq:alice} and the definition of $Q_1^{\A,\pi}(S_1, A_1, B_{1/2}, U_1)$, we have 
\#\label{eq:bellman-h-single}
Q_1^{\A,\pi}(S_1, A_1, B_{1/2}, U_1) & = 
\beta_a^\A(U_1, S_1) A_1  + \beta_z^\A(U_1, S_1)  B_{1/2} + \beta_{az}^\A(U_1, S_1) A_1 B_{1/2}, 
\#
where we write $\beta_a^\A(U_1, S_1) = \EE_{V_1}[\beta_a^\A(U_1, V_1, S_1)]$, $\beta_z^\A(U_1, S_1) = \EE_{V_1}[\beta_z^\A(U_1, V_1, S_1)]$, and $\beta_{az}^\A(U_1, S_1) = \EE_{V_1}[\beta_{az}^\A(U_1, V_1, S_1)]$ for notational convenience. 
Thus, to construct an estimator of $Q_1^{\A,\pi}$, we only need to estimate the right-hand side of \eqref{eq:bellman-h-single}. Following from the estimating procedure in \S\ref{sec:id-all} by treating $A_1$ as the action and $B_{1/2}$ as an IV, together with similar orthogonal conditions, we can obtain an estimation of the right-hand side of \eqref{eq:bellman-h-single} as follows, 
\#\label{eq:est-1-single}
\hat Q_1^{\A,\pi}(S_1, A_1, B_{1/2}, U_1) = \hat \beta_a^\A(U_1, S_1) A_1  + \hat \beta_z^\A(U_1, S_1)  B_{1/2} + \hat \beta_{az}^\A(U_1, S_1) A_1 B_{1/2}. 
\#

\vskip5pt
\noindent\textbf{Estimation of $Q_{3/2}^{\B,\pi}$.}
Similar to the derivation of \eqref{eq:bellman-h-single}, we have the following equation, 
\$
Q_{3/2}^{\B,\pi}(S_{3/2}, A_1, B_{3/2}, U_{3/2}) 
= \beta_a^\B(U_{3/2}, S_{3/2}) B_{3/2} + \beta_z^\B(U_{3/2}, S_{3/2}) A_1 + \beta_{az}^\B(U_{3/2}, S_{3/2}) A_1  B_{3/2},
\$
where the estimation of the right-hand side can be obtained via the procedure in \S\ref{sec:id-all} by treating $B_{3/2}$ as the action and $A_1$ as the IV, i.e.,
\#\label{eq:est-2-single}
\hat Q_{3/2}^{\B,\pi}(S_{3/2}, A_1, B_{3/2}, U_{3/2}) = \hat \beta_a^\B(U_{3/2}, S_{3/2}) B_{3/2}  + \hat \beta_z^\B(U_{3/2}, S_{3/2})  A_1 + \hat \beta_{az}^\B(U_{3/2}, S_{3/2}) A_1 B_{3/2}.
\#

\vskip5pt
\noindent\textbf{Estimation of $Q_1^{\B,\pi}$.}
Note that by the memoryless assumption of $V_{3/2}$ and Assumption \ref{ass:model}~(e), we have the following Bellman equation,
\$\label{eq:bellman-h-single-bob-1}
Q_1^{\B,\pi}(S_1, A_1, B_{1/2}, U_1, V_1) & = \EE\left[ Q^{\B,\pi}_{3/2}(S_{3/2}, A_1, B_{3/2}, U_{3/2}) \given S_1, A_1, B_{1/2}, U_1, V_1 \right].
\$
Then 
we can obtain that
\#\label{eq:bellman-h-single-bob-1}
Q_1^{\B,\pi}(S_1, A_1, B_{1/2}, U_1, V_1) & = \EE\left[ Q^{\B,\pi}_{3/2}(S_{3/2}, A_1, B_{3/2}, U_{3/2}) \given S_1, A_1, B_{1/2}, U_1, V_1 \right] \\
& = \EE\left[\beta_a^\B(U_{3/2}, S_{3/2}) B_{3/2}\given S_1, A_1, B_{1/2}, U_1, V_1 \right] \\
& \qquad + \EE \left[ \beta_z^\B(U_{3/2}, S_{3/2}) \given S_1, A_1, B_{1/2}, U_1, V_1 \right] \cdot A_1\\
& \qquad + \EE \left[ \beta_{az}^\B(U_{3/2}, S_{3/2}) B_{3/2} \given S_1, A_1, B_{1/2}, U_1, V_1 \right] \cdot A_1\\
& \triangleq (I) + (II) \cdot A_1 + (III) \cdot A_1
\#
By Assumption \ref{ass:model}~(e),  the model specified in \eqref{eq:model-trans-h} and Bob's reward model, we know that there exist functions $\{ \theta_1^{\B,j}, \gamma_1^{\B,j}, \omega_1^{\B,j}\}_{j\in [3]}$ such that
\$
& (I) = \theta_1^{\B,1}(S_1,U_1, V_1;\pi) \cdot A_1 + \gamma_1^{\B,1}(S_1,U_1, V_1;\pi) \cdot B_{1/2} + \omega_1^{\B,1}(S_1,U_1, V_1;\pi) \cdot A_1 B_{1/2}, \\
& (II) = \theta_1^{\B,2}(S_1,U_1, V_1;\pi) \cdot A_1 + \gamma_1^{\B,2}(S_1,U_1, V_1;\pi) \cdot B_{1/2} + \omega_1^{\B,2}(S_1,U_1, V_1;\pi) \cdot A_1 B_{1/2}, \\
& (III) = \theta_1^{\B,3}(S_1,U_1, V_1;\pi) \cdot A_1 + \gamma_1^{\B,3}(S_1,U_1, V_1;\pi) \cdot B_{1/2} + \omega_1^{\B,3}(S_1,U_1, V_1;\pi) \cdot A_1 B_{1/2}. 
\$

Since we have estimators $\hat \beta_a^\B$, $\hat \beta_z^\B$, and $\hat \beta_{az}^\B$ for $\beta_a^\B$, $\beta_z^\B$, and $\beta_{az}^\B$, respectively from the estimator $\hat Q^{\B,\pi}_{3/2}$ in \eqref{eq:est-2-single}, following the discussion after Proposition \ref{prop:q-form-alice}, we only need to estimate the following three terms, 
\#\label{eq:bob-1-model}
& \text{term 1} = \theta_1^{\B,1}(S_1,U_1;\pi) \cdot A_1 + \gamma_1^{\B,1}(S_1,U_1;\pi) \cdot B_{1/2} + \omega_1^{\B,1}(S_1,U_1;\pi) \cdot A_1 B_{1/2}, \\
& \text{term 2} = \theta_1^{\B,2}(S_1,U_1;\pi) \cdot A_1 + \gamma_1^{\B,2}(S_1,U_1;\pi) \cdot B_{1/2} + \omega_1^{\B,2}(S_1,U_1;\pi) \cdot A_1 B_{1/2}, \\
& \text{term 3} = \theta_1^{\B,3}(S_1,U_1;\pi) \cdot A_1 + \gamma_1^{\B,3}(S_1,U_1;\pi) \cdot B_{1/2} + \omega_1^{\B,3}(S_1,U_1;\pi) \cdot A_1 B_{1/2},
\#
by integrating out $V_1$ given $(S_1, U_1)$.
Again, following from the estimation procedure in \S\ref{sec:id-all} where we treat $A_1$ as the action and $B_{1/2}$ as the IV, we can obtain the estimators of the right-hand sides of the three terms in \eqref{eq:bob-1-model} as follows, 
\#\label{eq:est-ferufnrue}
& \hat \theta_1^{\B,1}(S_1,U_1;\pi) \cdot A_1 + \hat \gamma_1^{\B,1}(S_1,U_1;\pi) \cdot B_{1/2} + \hat \omega_1^{\B,1}(S_1,U_1;\pi) \cdot A_1 B_{1/2}, \\
& \hat \theta_1^{\B,2}(S_1,U_1;\pi) \cdot A_1 + \hat \gamma_1^{\B,2}(S_1,U_1;\pi) \cdot B_{1/2} + \hat \omega_1^{\B,2}(S_1,U_1;\pi) \cdot A_1 B_{1/2}, \\
& \hat \theta_1^{\B,3}(S_1,U_1;\pi) \cdot A_1 + \hat \gamma_1^{\B,3}(S_1,U_1;\pi) \cdot B_{1/2} + \hat \omega_1^{\B,3}(S_1,U_1;\pi) \cdot A_1 B_{1/2},
\#
respectively. 
Combining \eqref{eq:bellman-h-single-bob-1}, \eqref{eq:bob-1-model}, and \eqref{eq:est-ferufnrue}, we obtain the following estimator of $Q_1^{\B,\pi}$, 
\#\label{eq:est-3-single}
& \hat Q_1^{\B,\pi}(S_1, A_1, B_{1/2}, U_1) = \hat \theta_1^\B(S_1, U_1; \pi)\cdot A_1  + \hat \gamma_1^\B(S_1, U_1; \pi) \cdot B_{1/2} + \hat \omega_1^\B(S_1, U_1; \pi) \cdot A_1  B_{1/2},
\#
where 
\#\label{eq:jtoall}
& \hat \theta_1^\B(S_1, U_1; \pi) = \hat \theta_1^{\B,1}(S_1,U_1;\pi) + \hat \theta_1^{\B,2}(S_1,U_1;\pi) + \hat \theta_1^{\B,3}(S_1,U_1;\pi), \\
& \hat \gamma_1^\B(S_1, U_1; \pi) = \hat \gamma_1^{\B,1}(S_1,U_1;\pi), \\
& \hat \omega_1^\B(S_1, U_1; \pi) = \hat \omega_1^{\B,1}(S_1,U_1;\pi) + \hat \omega_1^{\B,2}(S_1,U_1;\pi) + \hat \omega_1^{\B,3}(S_1,U_1;\pi) \\
& \qquad \qquad \qquad \qquad + \hat \gamma_1^{\B,2}(S_1,U_1;\pi) + \hat \gamma_1^{\B,3}(S_1,U_1;\pi). 
\#
See Figure \ref{fig:11} for a graphical illustration of the above construction.

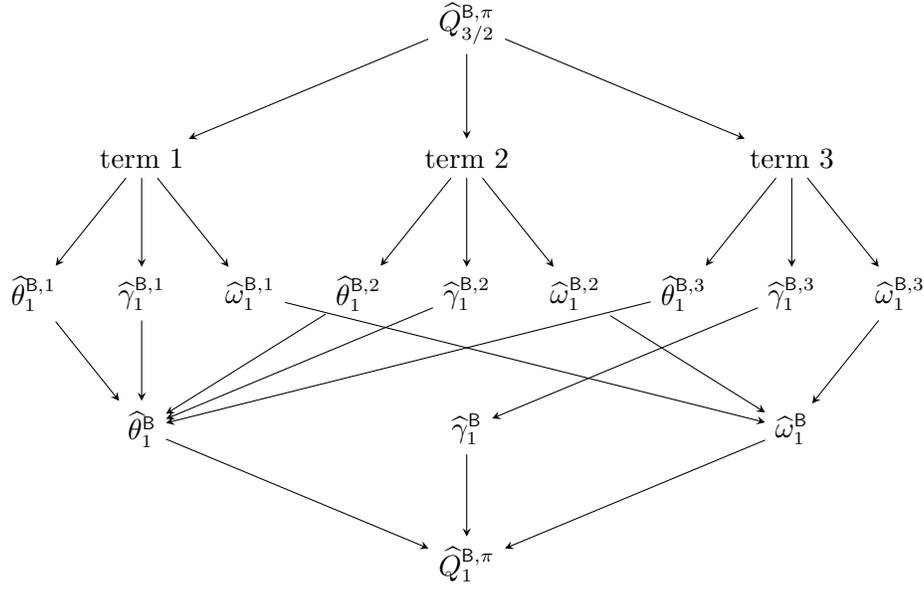
\begin{figure}[htbp]
    \centering
\begin{tikzpicture}[scale=1.2]
        \node (tq) at (0,0) {$\hat Q^{\B,\pi}_{3/2}$}{};
        \node (t1) at (-3.6,-1.5) {term 1}{};
        \node (t2) at (0,-1.5) {term 2}{};
        \node (t3) at (3.6,-1.5) {term 3}{};
        \node (theta11) at (4.8,-3) {$\hat \omega_1^{\B,3}$}{};
        \node (gamma11) at (3.6,-3) {$\hat \gamma_1^{\B,3}$}{};
        \node (omega11) at (2.4,-3) {$\hat \theta_1^{\B,3}$}{};
        \node (theta12) at (1.2,-3) {$\hat \omega_1^{\B,2}$}{};
        \node (gamma12) at (0,-3) {$\hat \gamma_1^{\B,2}$}{};
        \node (omega12) at (-1.2,-3) {$\hat \theta_1^{\B,2}$}{};
        \node (theta13) at (-2.4,-3) {$\hat \omega_1^{\B,1}$}{};
        \node (gamma13) at (-3.6,-3) {$\hat \gamma_1^{\B,1}$}{};
        \node (omega13) at (-4.8,-3) {$\hat \theta_1^{\B,1}$}{};
        \node (theta1) at (3.6,-4.5) {$\hat \omega_1^{\B}$}{};
        \node (gamma1) at (0,-4.5) {$\hat \gamma_1^{\B}$}{};
        \node (omega1) at (-3.6,-4.5) {$\hat \theta_1^{\B}$}{};
        \node (eq) at (0,-6) {$\hat Q_1^{\B,\pi}$}{};

    \draw[-stealth] (tq) edge (t3);
    \draw[-stealth] (tq) edge (t2);
    \draw[-stealth] (tq) edge (t1);
    \draw[-stealth] (t3) edge (theta11);
    \draw[-stealth] (t3) edge (gamma11);
    \draw[-stealth] (t3) edge (omega11);
    \draw[-stealth] (t2) edge (theta12);
    \draw[-stealth] (t2) edge (gamma12);
    \draw[-stealth] (t2) edge (omega12);
    \draw[-stealth] (t1) edge (theta13);
    \draw[-stealth] (t1) edge (gamma13);
    \draw[-stealth] (t1) edge (omega13);
    
    \draw[-stealth] (theta11) edge (theta1);
    \draw[-stealth] (gamma11) edge (gamma1);
    \draw[-stealth] (omega11) edge (omega1);
    \draw[-stealth] (theta12) edge (theta1);
    \draw[-stealth] (gamma12) edge (omega1);
    \draw[-stealth] (omega12) edge (omega1);
    \draw[-stealth] (theta13) edge (theta1);
    \draw[-stealth] (gamma13) edge (omega1);
    \draw[-stealth] (omega13) edge (omega1);

    \draw[-stealth] (theta1) edge (eq);
    \draw[-stealth] (gamma1) edge (eq);
    \draw[-stealth] (omega1) edge (eq);
    
\end{tikzpicture}
\caption{A graphical illustration of the estimation of action-value function $Q_1^{\B,\pi}$.}
\label{fig:11}
\end{figure}

\vskip5pt
\noindent\textbf{Estimation of $J^\A$ and $J^\B$.}
In summary, to construct $\hat Q^\A_1$ and $\hat Q^\B_1$ in \eqref{eq:est-1-single} and \eqref{eq:est-3-single}, respectively, we need to obtain 
\$
& \hat \bbeta^\A = \left(\hat \beta_a^\A, \hat \beta_z^\A, \hat \beta_{az}^\A\right), \quad \hat \bbeta^\B = \left(\hat \beta_a^\B, \hat \beta_z^\B, \hat \beta_{az}^\B\right), \quad \hat \btheta_1^{\B,j} = \left(\hat \theta_1^{\B,j}, \hat \gamma_1^{\B,j}, \hat \omega_1^{\B,j}\right) \quad \text{for any $j\in [3]$}
\$
as estimators of the following functions, 
\$
&  \bbeta^\A = \left( \beta_a^\A,  \beta_z^\A,  \beta_{az}^\A\right), \quad  \bbeta^\B = \left( \beta_a^\B, \beta_z^\B, \beta_{az}^\B\right), \quad \btheta_1^{\B,j} = \left( \theta_1^{\B,j}, \gamma_1^{\B,j},  \omega_1^{\B,j}\right) \quad \text{for any $j\in [3]$}. 
\$
Once we obtain these estimators, we can estimate $J^\A(\pi^\A, \pi^\B)$ and $J^\B(\pi^\A, \pi^\B)$ as follows,
\$
\hat J^\A(\pi^\A, \pi^\B) = \EE_\pi \left[ \hat Q_1^{\A,\pi}(S_1, A_1, B_{1/2}, U_1) \right], \quad \hat J^\B(\pi^\A, \pi^\B) = \EE_\pi \left[ \hat Q_1^{\B,\pi}(S_1, A_1, B_{1/2}, U_1) \right], 
\$
where the expectation $\EE_\pi[\cdot]$ is taken with following the policy $\pi = (\pi^\A, \pi^\B)$.

\subsection{Pessimistic Policy Learning Method}\label{sec:mfniuwer}
With the evaluation procedure proposed in \S\ref{sec:ygerf} and \S\ref{sec:id-all}, we present our method to find an optimal policy in this section. Instead of directly maximizing $\hat J^\A(\pi^\A, \pi^\B)$ and $\hat J^\B(\pi^\A, \pi^\B)$, we use the pessimistic idea in policy learning. See \cite{jin2021pessimism,fu2022offline} for more details.  To begin with, in light of \eqref{eq:ufwheiu}, we have the following equations 
\#\label{eq:nviuwer}
W^{\Diamond, \r} = \Phi^{\Diamond, \r} \bbeta^\Diamond + \alpha^{\Diamond, \r}, \qquad
W^{\B,j,\pi}(\tilde \bbeta^\B) = \Phi_1^{\B,j,\pi}(\tilde \bbeta^\B) \btheta_1^{\B,j,\pi}(\tilde \bbeta^\B) + \alpha_1^{\B,j,\pi} (\tilde \bbeta^\B), 
\#
where $\Phi^{\Diamond, \r}, \Phi_1^{\B,j,\pi}(\tilde \bbeta^\B) \in \RR^{3\times 3}$ and $\alpha^{\Diamond, \r}, \alpha_1^{\B,j,\pi} (\tilde \bbeta^\B)\in \RR^3$ for any $\tilde \bbeta^\B\in \cH^3$, $\Diamond\in \{\A,\B\}$, and $j\in [3]$. Here $W^{\Diamond, \r} = (w_1^{\Diamond, \r}, w_2^{\Diamond, \r}, w_3^{\Diamond, \r})$ and $W^{\B,j,\pi}(\tilde \bbeta^\B) = (w_1^{\B,j,\pi}, w_2^{\B,j,\pi}, w_3^{\B,j,\pi})$ are similarly constructed as \eqref{eq:quanw} by treating $A_1$ as the action and  $B_{1/2}$ as the IV or by treating $B_{3/2}$ as the action and $A_1$ as the IV. With \eqref{eq:nviuwer}, we further define the loss functions for the construction of estimators in \eqref{eq:est-1-single}, \eqref{eq:est-2-single}, and \eqref{eq:est-3-single} as follows, 
\#\label{eq:wiuehfrwe}
& \hat L^{\A,\r}(\tilde \bbeta^\A) = \frac{1}{n} \sum_{i = 1}^n \hat m^{i,\A,\r}(\tilde \bbeta^\A)^\top  \hat m^{i,\A,\r}(\tilde \bbeta^\A), \\
& \hat L^{\B,\r}(\tilde \bbeta^\B) = \frac{1}{n} \sum_{i = 1}^n \hat m^{i,\B,\r}(\tilde \bbeta^\B)^\top  \hat m^{i,\B,\r}(\tilde \bbeta^\B), \\
& \hat L_1^{\B,j,\pi}(\tilde \btheta_1^{\B,j}, \tilde \bbeta^\B) = \frac{1}{n} \sum_{i = 1}^n \hat m_1^{i,\B,j,\pi}(\tilde \btheta_1^{\B,j}, \tilde \bbeta^\B)^\top  \hat m_1^{i,\B,j,\pi}(\tilde \btheta_1^{\B,j}, \tilde \bbeta^\B),
\#
where $\hat m^{i,\A,\r}(\tilde \bbeta^\A)$ is a consistent estimator of $\EE[\Phi^{\A, \r} \tilde \bbeta^\A + \alpha^{\A, \r}\given S_1, U_1]$, $\hat m^{i,\B,\r}(\tilde \bbeta^\B)$ is a consistent estimator of $\EE[\Phi^{\B, \r} \tilde \bbeta^\B + \alpha^{\B, \r}\given S_{3/2}, U_{3/2}]$, while $\hat m_1^{i,\B,j,\pi}(\tilde \btheta_1^{\B,j}, \tilde \bbeta^\B)$ is a consistent estimator of $\EE[\Phi_1^{\B,j,\pi}(\tilde \bbeta^\B) \btheta_1^{\B,j,\pi}(\tilde \bbeta^\B) + \alpha_1^{\B,j,\pi} (\tilde \bbeta^\B) \given S_1, U_1]$. By minimizing the loss functions in \eqref{eq:wiuehfrwe}, we can obtain the following estimators needed to construct estimators of action-value functions defined in \eqref{eq:est-1-single}, \eqref{eq:est-2-single}, and \eqref{eq:est-3-single}, 
\#\label{eq:estimatorssss}
& \hat \bbeta^\A = \left(\hat \beta_a^\A, \hat \beta_z^\A, \hat \beta_{az}^\A\right), \quad \hat \bbeta^\B = \left(\hat \beta_a^\B, \hat \beta_z^\B, \hat \beta_{az}^\B\right), \quad \hat \btheta_1^{\B,j} = \left(\hat \theta_1^{\B,j}, \hat \gamma_1^{\B,j}, \hat \omega_1^{\B,j}\right) \quad \text{for any $j\in [3]$}. 
\#

In this work, instead of directly plugging in the estimators defined in \eqref{eq:estimatorssss}, we propose to use conservative estimators as proxies of action-value functions, and maximize such conservative action-value functions to learn an optimal policy. Such a method is also termed pessimism in related literature. 
Specifically, given the loss functions defined in \eqref{eq:wiuehfrwe}, we define the following confidence regions, 
\#\label{eq:conf-genius-single}
& \ci^{\Diamond,\r} = \left\{\tilde \bbeta^{\Diamond} = \left(\tilde \beta_a^\Diamond, \tilde \beta_z^\Diamond, \tilde \beta_{az}^\Diamond\right) \colon \hat L^{\Diamond,\r}(\tilde \bbeta^{\Diamond}) - \hat L^{\Diamond,\r}(\hat \bbeta^{\Diamond}) \leq \eta^{\Diamond,\r} \right\}, \\
& \ci_1^{\B,j,\pi}(\tilde \bbeta^\B) = \left\{\tilde \btheta_1^{\B,j} = \left(\tilde \theta_1^{\B,j}, \tilde \gamma_1^{\B,j}, \tilde \omega_1^{\B,j} \right) \colon \hat L_1^{\B,j,\pi}(\tilde \btheta_1^{\B,j}, \tilde \bbeta^\B) - \hat L_1^{\B,j,\pi}(\hat \btheta_1^{\B,j}, \tilde \bbeta^\B) \leq \eta_1^{\B,j} \right\}, \\
& \ci_1^{\B,\pi}(\tilde \bbeta^\B) = \left\{\tilde \btheta_1^{\B} = \left(\tilde \theta_1^\B, \tilde \gamma_1^\B, \tilde \omega_1^\B \right) \text{ satisfying \eqref{eq:jtoall}} \colon \tilde \btheta_1^{\B,j} \in \ci_1^{\B,j,\pi}(\tilde \bbeta^\B) \text{ for any $j\in [3]$} \right\},
\#
for any $j\in [3]$ and $\Diamond\in \{\A, \B\}$. Here  $\eta^{\Diamond,\r}$ and $\eta_1^{\B,j}$ measure the size of the confidence regions, which will be specified later in \eqref{eq:wiuehfre}. 
Then, we construct confidence regions for the action-value functions as follows,
\#\label{eq:conf-q-single}
& \ci_1^{\A,\Q,\pi} = \left\{ Q_1^\A = \tilde \beta_a^\A A_1  + \tilde \beta_z^\A B_{1/2} + \tilde \beta_{az}^\A A_1 B_{1/2} \colon \left(\tilde \beta_a^\A, \tilde \beta_z^\A, \tilde \beta_{az}^\A \right) \in \ci^{\A,\r}
 \right\}, \\
& \ci_1^{\B,\Q,\pi} = \bigcup_{\tilde \bbeta^\B\in \ci^{\B,\r}} \left\{ Q^\B_1 = \tilde \theta_1^\B A_1  + \tilde \gamma_1^\B B_{1/2} + \tilde \omega_1^\B A_1 B_{1/2} \colon \left(\tilde \theta_1^\B, \tilde \gamma_1^\B, \tilde \omega_1^\B \right) \in \ci_1^{\B,\pi}(\tilde \bbeta^\B) \right\}. 
\#
See Figure \ref{fig:54} for a graphical illustration.

\begin{figure}[htbp]
    \centering
\begin{tikzpicture}[scale=1.2]
        \node (tq) at (0,0) {$\ci^{\B,\r}$}{};
        \node (t1) at (3,1) {$\ci^{\B,1,\pi}(\tilde \bbeta^\B)$}{};
        \node (t2) at (3,0) {$\ci^{\B,2,\pi}(\tilde \bbeta^\B)$}{};
        \node (t3) at (3,-1) {$\ci^{\B,3,\pi}(\tilde \bbeta^\B)$}{};
        \node (kk) at (6,0) {$\ci_1^{\B,\pi}(\tilde \bbeta^\B)$}{};
        \node (qq) at (9,0) {$\ci_1^{\B,\Q,\pi}$}{};

    \draw[-stealth] (tq) edge (t1);
    \draw[-stealth] (tq) edge (t2);
    \draw[-stealth] (tq) edge (t3);
    \draw[-stealth] (t1) edge (kk);
    \draw[-stealth] (t2) edge (kk);
    \draw[-stealth] (t3) edge (kk);
    \draw[-stealth] (kk) edge (qq);

\end{tikzpicture}
\caption{A graphical illustration of the construction of the confidence region $\ci_1^{\B,\Q,\pi}$.}
\label{fig:54}
\end{figure}

We solve the following optimization problem, which maximizes the sum of the estimated expected total rewards of Alice and Bob, 
\#\label{prob:opt-genius}
(\hat \pi^\A, \hat \pi^\B) \in \argmax_{\pi^\A\in \Pi^\A, \pi^\B\in \Pi^\B} \min_{Q_1^\A \in \ci_1^{\A,\Q,\pi}, Q_1^\B \in \ci_1^{\B,\Q,\pi}} \EE_\pi \left[ \sum_{\Diamond\in \{\A,\B\}} Q_1^\Diamond(S_1, A_1, B_{1/2}, U_1) \right], 
\#
where the confidence regions $\ci_1^{\A,\Q,\pi}$ and $\ci_1^{\B,\Q,\pi}$ are defined in \eqref{eq:conf-q-single}. $(\hat \pi^\A, \hat \pi^\B)$ is denoted as our estimated policy pair.

\subsection{Theoretical Results}\label{sec:ukyefgyuer}

In this section, we characterize the performance of the learned policy pair $(\hat \pi^\A, \hat \pi^\B)$ in \eqref{prob:opt-genius}. Specifically, we aim to upper bound the performance gap as follows, 
\#\label{eq:nwiuebfre}
\gap(\hat \pi^\A, \hat \pi^\B) = \sup_{\pi^\A\in \Pi^\A, \pi^\B\in \Pi^\B} J(\pi^\A, \pi^\B) - J(\hat \pi^\A, \hat \pi^\B), 
\#
where $J(\cdot,\cdot)$ is the expected total reward defined in \eqref{eq:opt-prob}. 
For the sizes of the confidence regions in \eqref{eq:conf-genius-single}, we let
\#\label{eq:wiuehfre}
\eta^{\A,\r} = O\left( n^{-\frac{2\alpha}{2\alpha+2\varsigma+d}} \right), \quad \eta^{\B,\r} = O\left( n^{-\frac{2\alpha}{2\alpha+2\varsigma+d}} \right), \quad  \eta^{\B,j}_1 = O\left( n^{-\frac{2\alpha}{2\alpha+2\varsigma+d}} \right)
\#
for any $j\in [3]$.

\begin{assumption}
\label{ass:lower-bound-phi-paper}
    We have $\sigma_{\min}(\EE[ \Phi_1^{\B, j, \pi^*}(\bbeta^\B)^\top \Phi_1^{\B, j, \pi^*}(\bbeta^\B)^\top]) \geq c$ and $\sigma_{\min}(\EE[ \Phi^{\Diamond, \r \top} \Phi^{\Diamond, \r}]) \geq c$ for any $\Diamond \in \{\A,\B\}$ and $j\in [3]$, where $c$ is a positive absolute constant. 
\end{assumption}
Assumption \ref{ass:lower-bound-phi-paper} only requires that the loss functions in \eqref{eq:wiuehfrwe} admit a unique minimizer under the distribution induced by the optimal policy pair $\pi^*$, which is indeed a weak assumption. For example, as long as statements in Assumption \ref{ass:lower-bound-phi-paper} hold under the data generating distribution and $\pi^\ast$ is covered by the behavior policy, Assumption \ref{ass:lower-bound-phi-paper} holds as well. This is often called partial coverage in the literature of RL. Now we are ready to state our theorem, which upper bounds the performance gap in \eqref{eq:nwiuebfre}. 

\begin{theorem}\label{thm: regret for single}
Under Assumptions \ref{ass:model},\ref{ass:lower-bound-phi-paper}, and similar sets of conditions as Assumption \ref{ass:single-stage model}~(c), Assumptions 
\ref{ass:consistency}--\ref{ass:uiwerrweff}, it holds that
\$
\gap(\hat \pi^\A, \hat \pi^\B) = O_p\left( n^{-\frac{\alpha}{2\alpha+2\varsigma+d}} \right). 
\$
\end{theorem}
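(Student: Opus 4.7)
The plan is to run the standard pessimism decomposition adapted to our two-player setting. Let $\pi^* = (\pi^{*,\A}, \pi^{*,\B})$ be a maximizer of $J$ in \eqref{eq:opt-prob} and define the pessimistic value $\underline{J}(\pi) = \min_{Q^\A \in \ci_1^{\A,\Q,\pi},\, Q^\B \in \ci_1^{\B,\Q,\pi}} \EE_\pi[Q^\A(S_1, A_1, B_{1/2}, U_1) + Q^\B(S_1, A_1, B_{1/2}, U_1)]$. On the event $\cE$ that $Q_1^{\A,\pi} \in \ci_1^{\A,\Q,\pi}$ and $Q_1^{\B,\pi} \in \ci_1^{\B,\Q,\pi}$ hold for both $\pi = \pi^*$ and $\pi = \hat\pi$, pessimism gives $\underline{J}(\pi) \leq J(\pi)$, while the outer maximization in \eqref{prob:opt-genius} gives $\underline{J}(\hat\pi) \geq \underline{J}(\pi^*)$. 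Chaining the two inequalities yields
\$
\gap(\hat\pi^\A, \hat\pi^\B) \leq J(\pi^*) - \underline{J}(\pi^*) \leq \max_{Q^\A \in \ci_1^{\A,\Q,\pi^*}} \EE_{\pi^*}\bigl[ |Q_1^{\A,\pi^*} - Q^\A| \bigr] + \max_{Q^\B \in \ci_1^{\B,\Q,\pi^*}} \EE_{\pi^*}\bigl[ |Q_1^{\B,\pi^*} - Q^\B| \bigr],
\$
so regret reduces to controlling the diameters of the Q-function confidence regions under the distribution induced by $\pi^*$.

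The second step is to verify that $\PP(\cE) \to 1$ under the radii chosen in \eqref{eq:wiuehfre}. By Corollary \ref{cor:estimation} the three SMD estimators $\hat\bbeta^\A$, $\hat\bbeta^\B$, and $\hat\btheta_1^{\B,j}$ each satisfy $\|\,\cdot - \text{truth}\,\| = O_p(n^{-\alpha/(2\alpha+2\varsigma+d)})$. Combined with Assumption \ref{ass:curv-paper} and Assumption \ref{ass:uiwerrweff} this gives $\hat L(\text{truth}) - \hat L(\hat{\,\cdot}\,) = O_p(n^{-2\alpha/(2\alpha+2\varsigma+d)})$, matching the order of $\eta$ in \eqref{eq:wiuehfre} and placing each ground-truth parameter inside its respective confidence region with probability $1 - o(1)$; a union bound over the three inner regressions and the two policies of interest preserves $\PP(\cE) \to 1$.

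The third step converts ``close in loss'' to ``close in parameters.'' For any $\tilde\bbeta$ inside a confidence region, $\hat L(\tilde\bbeta) - \hat L(\hat\bbeta) \leq \eta$, which by the same sieve arguments underpinning Theorem \ref{thm:chen2012} implies $\|\tilde\bbeta - \bbeta^\Diamond\|_\ps^2 \lesssim \eta$ and hence $\|\tilde\bbeta - \bbeta^\Diamond\| \lesssim \sqrt{\eta} = O(n^{-\alpha/(2\alpha+2\varsigma+d)})$. Assumption \ref{ass:lower-bound-phi-paper} is essential here: the lower bound on $\sigma_{\min}(\EE[\Phi^{\Diamond,\r\top}\Phi^{\Diamond,\r}])$ under the $\pi^*$-induced distribution is precisely the \emph{partial coverage} condition that translates the sample-based loss gap into an $L^2$ parameter error along the directions relevant for evaluating $\pi^*$. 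For Alice the resulting rate on $|\hat\beta_a^\A - \beta_a^\A|$, $|\hat\beta_z^\A - \beta_z^\A|$, and $|\hat\beta_{az}^\A - \beta_{az}^\A|$, together with boundedness of $A_1, B_{1/2}$, immediately controls the Alice term in the decomposition at rate $O_p(n^{-\alpha/(2\alpha+2\varsigma+d)})$.

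The main obstacle will be propagating this rate through Bob's two-layer composition in \eqref{eq:bellman-h-single-bob-1}--\eqref{eq:jtoall}, since $\ci_1^{\B,\Q,\pi^*}$ is a union of inner regions $\ci_1^{\B,\pi^*}(\tilde\bbeta^\B)$ taken over $\tilde\bbeta^\B \in \ci^{\B,\r}$, and the inner SMD targets themselves depend on $\tilde\bbeta^\B$ through the one-step backup of Bob's reward. To handle this I will (i) show that the conditional moment $\EE[W^{\B,j,\pi^*}(\tilde\bbeta^\B) \given S_1, U_1]$ is Lipschitz in $\tilde\bbeta^\B$ under the $L^2$ norm with a Lipschitz constant uniform in $n$; (ii) apply the diameter bound of step three uniformly over $\tilde\bbeta^\B \in \ci^{\B,\r}$, exploiting the fact that this outer region already shrinks at rate $O_p(n^{-\alpha/(2\alpha+2\varsigma+d)})$; and (iii) combine via the triangle inequality to obtain $\|\tilde\btheta_1^{\B,j} - \btheta_1^{\B,j,\pi^*}\| = O_p(n^{-\alpha/(2\alpha+2\varsigma+d)})$ uniformly over the nested union. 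Summing over $j \in [3]$ according to \eqref{eq:jtoall} and integrating under $\pi^*$ (again permissible by Assumption \ref{ass:lower-bound-phi-paper}) yields the Bob diameter at the same rate, and substituting both Alice and Bob diameters into the decomposition delivers $\gap(\hat\pi^\A, \hat\pi^\B) = O_p(n^{-\alpha/(2\alpha+2\varsigma+d)})$ as claimed.
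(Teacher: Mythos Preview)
Your proposal is correct and follows essentially the same approach as the paper: the pessimism decomposition $J(\pi^*) - J(\hat\pi) \leq J(\pi^*) - \underline J(\pi^*)$ is identical, and the two supporting ingredients you outline (truth lies in the confidence regions with high probability, and the $\pi^*$-diameter of each region is $O_p(n^{-\alpha/(2\alpha+2\varsigma+d)})$ via the quadratic loss structure together with Assumption~\ref{ass:lower-bound-phi-paper}) are exactly the paper's Lemmas~\ref{lemma:pess} and~\ref{lemma:bound}. Your treatment of Bob's nested union via Lipschitz dependence on $\tilde\bbeta^\B$ and a triangle-inequality propagation is likewise the same idea the paper uses (albeit stated more explicitly by you), so there is no substantive difference in route.
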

\begin{proof}

We first introduce two supporting lemmas. The following lemma shows that with the sizes of the confidence regions in \eqref{eq:wiuehfre}, the true action-value functions $Q_1^{\A,\pi}$ and $Q_1^{\B,\pi}$ lie in the confidence regions. 

\begin{lemma}\label{lemma:pess}
Under the assumptions stated in Theorem \ref{thm:chen2012}, for any $\pi\in \Pi$, it holds that  $\lim_{n\to\infty} \PP(Q_1^{\A,\pi}\in \ci_1^{\A,\Q,\pi}, Q_1^{\B,\pi}\in \ci_1^{\B,\Q,\pi}) = 1$. 
\end{lemma}
\begin{proof}
See \S\ref{prf:lemma:pess} for a detailed proof. 
\end{proof}

The following lemma shows that with the sizes of the confidence regions in \eqref{eq:wiuehfre}, for any functions in the confidence regions, they are close to the true action-value functions $Q_1^{\A,\pi}$ and $Q_1^{\B,\pi}$. 

\begin{lemma}\label{lemma:bound}
Under Assumption \ref{ass:lower-bound-phi-paper} and the assumptions stated in Theorem \ref{thm:chen2012}, for any $Q_1^\A\in \ci_1^{\A,\Q,\pi^*}$ and $Q_1^\B\in \ci_1^{\B,\Q,\pi^*}$, we have 
\$
\left |\EE_{\pi^*}\left[ Q_1^{\A,\pi^*} - Q_1^\A \right] \right | = O_p\left( n^{-\frac{\alpha}{2\alpha+2\varsigma+d}} \right), \qquad \left |\EE_{\pi^*}\left[ Q_1^{\B,\pi^*} - Q_1^\B \right] \right | = O_p\left( n^{-\frac{\alpha}{2\alpha+2\varsigma+d}} \right). 
\$
\end{lemma}
\begin{proof}
See \S\ref{prf:lemma:bound} for a detailed proof. 
\end{proof}

Finally, it holds that 
\$
& J(\pi^{*,\A}, \pi^{*,\B}) - J(\hat \pi^\A, \hat \pi^\B) \\
& \qquad = \EE_{\pi^*}\left[ (Q_1^{\A,\pi^*} + Q_1^{\B,\pi^*})(S_1, A_1, B_{1/2}, U_1) \right] - \EE_{\hat \pi}\left[ (Q_1^{\A,\hat \pi} + Q_1^{\B,\hat \pi})(S_1, A_1, B_{1/2}, U_1) \right] \\
& \qquad \leq \EE_{\pi^*}\left[ (Q_1^{\A,\pi^*} + Q_1^{\B,\pi^*})(S_1, A_1, B_{1/2}, U_1) \right] - \min_{Q_1^\A \in \ci_1^{\A,\Q,\hat \pi}, Q_1^\B \in \ci_1^{\B,\Q,\hat \pi}}\EE_{\hat \pi}\left[ (Q_1^\A + Q_1^\B)(S_1, A_1, B_{1/2}, U_1) \right] \\
& \qquad \leq \EE_{\pi^*}\left[ (Q_1^{\A,\pi^*} + Q_1^{\B,\pi^*})(S_1, A_1, B_{1/2}, U_1) \right] - \min_{Q_1^\A \in \ci_1^{\A,\Q,\pi^*}, Q_1^\B \in \ci_1^{\B,\Q,\pi^*}} \EE_{\pi^*}\left[ (Q_1^\A + Q_1^\B)(S_1, A_1, B_{1/2}, U_1) \right] \\
& \qquad \leq \max_{Q_1^\A \in \ci_1^{\A,\Q,\pi^*}, Q_1^\B \in \ci_1^{\B,\Q,\pi^*}} \left | \EE_{\pi^*}\left[ (Q_1^{\A,\pi^*} - Q_1^\A + Q_1^{\B,\pi^*} - Q_1^\B)(S_1, A_1, B_{1/2}, U_1) \right]  \right |, 
\$
where in the first and second inequalities, we use Lemma \ref{lemma:pess} and the optimality of $(\hat \pi^\A, \hat \pi^\B)$, respectively.
Further, by Lemma \ref{lemma:bound}, we have 
\$
J(\pi^{*,\A}, \pi^{*,\B}) - J(\hat \pi^\A, \hat \pi^\B) = O_p\left( n^{-\frac{\alpha}{2\alpha+2\varsigma+d}} \right),
\$
which concludes the proof of the theorem. 
\end{proof}

In Theorem \ref{thm: regret for single}, with only partial coverage assumption and the help of convergence of the SMD estimators, the upper bound on the performance gap implies that the regret of finding an optimal in-class policy pair converges to 0 as long as the number of trajectories $n$ goes to infinity.

\section{Policy Learning Method for Multi-Stage Game}\label{sec: multi-game}

In a multi-stage game with $H \geq 2$, the idea of policy evaluation is similar to the single-stage game in \S\ref{sec:ygerf}. Therefore details are deferred to \S\ref{sec:full-eval} of the appendix. 
In the meanwhile, similar to the construction of confidence regions as in \eqref{eq:conf-q-single}, we define the following confidence regions for the action-value function at the first step, 
\#\label{eq:wiuevhrei}
\ci_1^{\Diamond,\Q,\pi} = \bigcup_{\substack{ \tilde \bbeta^{\Diamond} \in \ci^{\Diamond, \r}, \\ \tilde \btheta^\Diamond_{h}\in \ci_{h}^{\Diamond,\pi}(\tilde \btheta^\Diamond_{h+1/2}), \\ \text{for any $h\in \cC^{\Diamond}$} }  } \left\{ Q^\Diamond_1 \colon \left(\tilde \theta_1^\Diamond, \tilde \gamma_1^\Diamond, \tilde \omega_1^\Diamond \right) \in \ci_1^{\Diamond,\pi}(\tilde \btheta^\Diamond_{3/2}) \right\}, 
\#
where $Q^\Diamond_1 = \tilde \theta_1^\Diamond A_1  + \tilde \gamma_1^\Diamond B_{1/2} + \tilde \omega_1^\Diamond A_1 B_{1/2}$ for any $\Diamond\in \{\A,\B\}$, $\cC^\A = \{3/2, 2, \ldots, H-1/2\}$, $\cC^\B = \{3/2, 2, \ldots, H\}$, and the confidence regions $\ci^{\Diamond, \r}$ and $\ci_{h}^{\Diamond,\pi}$ are defined in \eqref{eq:conf-genius} in \S\ref{sec:iureiuw} of the appendix. 
Figure \ref{fig:ci-full} illustrates how $\ci_1^{\A,\Q,\pi}$ is constructed for an example. We remark that the construct of $\ci_1^{\B,\Q,\pi}$ can also be visualized in a similar way.

\begin{figure}[htbp]
    \centering
\begin{tikzpicture}[scale=1.2]
        \node (tq) at (0,0) {$\ci^{\A,\r}$}{};
        \node (t1) at (-3,-1) {$\ci_{H-1/2}^{\A,1,\pi}(\tilde \btheta_H^\A)$}{};
        \node (t2) at (0,-1) {$\ci_{H-1/2}^{\A,2,\pi}(\tilde \btheta_H^\A)$}{};
        \node (t3) at (3,-1) {$\ci_{H-1/2}^{\A,3,\pi}(\tilde \btheta_H^\A)$}{};
        \node (kk) at (0,-2) {$\ci_{H-1/2}^{\A,\pi}(\tilde \btheta_H^\A)$}{};
        
        \node (pp2) at (0,-3) {$\vdots$}{};
        
        \node (pp21) at (0,-4) {$\ci_{3/2}^{\A,\pi}(\tilde \btheta_{2}^\A)$}{};

        \node (t11) at (-3,-5) {$\ci_{1}^{\A,1,\pi}(\tilde \btheta_{3/2}^\A)$}{};
        \node (t21) at (0,-5) {$\ci_{1}^{\A,2,\pi}(\tilde \btheta_{3/2}^\A)$}{};
        \node (t31) at (3,-5) {$\ci_{1}^{\A,3,\pi}(\tilde \btheta_{3/2}^\A)$}{};
        \node (kk1) at (0,-6) {$\ci_{1}^{\A,\pi}(\tilde \btheta_{3/2}^\A)$}{};

        \node (qq) at (0,-7) {$\ci_1^{\A,\Q,\pi}$}{};

    \draw[-stealth] (tq) edge (t1);
    \draw[-stealth] (tq) edge (t2);
    \draw[-stealth] (tq) edge (t3);
    \draw[-stealth] (t1) edge (kk);
    \draw[-stealth] (t2) edge (kk);
    \draw[-stealth] (t3) edge (kk);
    
    \draw[-stealth] (kk) edge (pp2);

    \draw[-stealth] (pp21) edge (t11);
    \draw[-stealth] (pp21) edge (t21);
    \draw[-stealth] (pp21) edge (t31);
    
    \draw[-stealth] (pp2) edge (pp21);

    \draw[-stealth] (t11) edge (kk1);
    \draw[-stealth] (t21) edge (kk1);
    \draw[-stealth] (t31) edge (kk1);
    
    \draw[-stealth] (kk1) edge (qq);

\end{tikzpicture}
\caption{A graphical illustration of the construction of the confidence region $\ci_1^{\A,\Q,\pi}$ in \eqref{eq:wiuevhrei}.}
\label{fig:ci-full}
\end{figure}

We solve the following optimization problem, which maximizes the sum of the estimated expected total rewards of Alice and Bob, 
\#\label{prob:opt-genius2}
(\hat \pi^\A, \hat \pi^\B)\in \argmax_{\pi^\A\in \Pi^\A, \pi^\B\in \Pi^\B} \min_{Q_1^\Diamond \in \ci_1^{\Diamond,\Q,\pi} \forall\Diamond \in \{\A, \B\} } \EE_\pi \left[ \sum_{\Diamond\in \{\A,\B\}} Q_1^\Diamond(S_1, A_1, B_{1/2}, U_1) \right], 
\#
where the confidence regions $\ci_1^{\Diamond,\Q,\pi}$ are defined in \eqref{eq:wiuevhrei} for any $\Diamond\in \{\A,\B\}$.

\subsection{Theoretical Results}

In this section, we characterize the performance of the learned policy pair $(\hat \pi^\A, \hat \pi^\B)$ in \eqref{prob:opt-genius2}. Specifically, we aim to upper bound the performance gap defined in \eqref{eq:nwiuebfre}. For the sizes of the confidence regions in \eqref{eq:conf-genius-single}, we let 
\#\label{eq:iowuehfurew}
\eta^{\Diamond,\r} = O\left( n^{-\frac{2\alpha}{2\alpha+2\varsigma+d}} \right), \quad  \eta^{\Diamond,j}_h = O\left( (H-h)^4 \cdot  n^{-\frac{2\alpha}{2\alpha+2\varsigma+d}} \right)
\#
for any $j\in [3]$, $h\in [H]\cup [H]_{1/2}$, and $\Diamond\in \{\A, \B\}$.

The following assumption ensures the loss functions in \eqref{eq:conf-genius} admit unique minimizers only under the optimal policy pair $\pi^*$, which is indeed a weak assumption only requiring partial coverage. We remark that such an assumption is in parallel to Assumption \ref{ass:lower-bound-phi-paper} for single-stage games. 

\begin{assumption}\label{ass:lower-bound-phi-all}
    We have 
    \$
    & \sigma_{\min}(\EE[ \Phi_{h}^{\Diamond,j,\pi^*}( \btheta_{h+1/2}^{\Diamond}, \bff_{h}^{\Diamond})^\top \Phi_{h}^{\Diamond,j,\pi^*}( \btheta_{h+1/2}^{\Diamond}, \bff_{h}^{\Diamond})^\top]) \geq c, \\
    & \sigma_{\min}(\EE[ \Phi^{\Diamond, \r}(\bff^\Diamond)^\top \Phi^{\Diamond, \r}(\bff^\Diamond)]) \geq c
    \$ 
    for any $\Diamond \in \{\A,\B\}$ and $j\in [3]$, where $c$ is a positive absolute constant, and the matrices $\Phi_{h}^{\Diamond,j,\pi^*}( \btheta_{h+1/2}^{\Diamond}, \bff_{h}^{\Diamond})$ and $\Phi^{\Diamond, \r}(\bff^\Diamond)$ are defined in \eqref{eq:nuwiehv} in \S\ref{sec:iureiuw} of the appendix. 
\end{assumption}

Now we are ready to state our theorem, which upper bounds the performance gap in \eqref{eq:nwiuebfre}. 

\begin{theorem}\label{thm:main-H}
Under Assumptions \ref{ass:model},\ref{ass:lower-bound-phi-all}, and similar sets of conditions as Assumption \ref{ass:single-stage model}~(c), Assumptions 
\ref{ass:consistency}--\ref{ass:uiwerrweff}, it holds that 
\$
\gap(\hat \pi^\A, \hat \pi^\B) = O_p\left( H^{2} \cdot n^{-\frac{\alpha}{2\alpha+2\varsigma+d}} \right). 
\$
\end{theorem}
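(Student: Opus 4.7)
The plan is to mirror the single-stage proof of Theorem~\ref{thm: regret for single} by proving the two corresponding multi-stage analogs of Lemma~\ref{lemma:pess} (pessimism is valid) and Lemma~\ref{lemma:bound} (the confidence regions shrink at the desired rate), and then invoking exactly the same pessimism decomposition. The reason the regret inflates from $n^{-\alpha/(2\alpha+2\varsigma+d)}$ to $H^2 \cdot n^{-\alpha/(2\alpha+2\varsigma+d)}$ is that errors in estimating the per-step parameters $\btheta_h^{\Diamond}$ propagate through the Bellman recursion; the choice $\eta_h^{\Diamond,j} = O((H-h)^4 \cdot n^{-2\alpha/(2\alpha+2\varsigma+d)})$ in \eqref{eq:iowuehfurew} is precisely calibrated so that the nested confidence regions $\ci_1^{\Diamond,\Q,\pi}$ contain the truth while having diameter $O(H^2 \cdot n^{-\alpha/(2\alpha+2\varsigma+d)})$ under $\pi^*$.

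For the multi-stage pessimism step, I would induct \emph{backward} on $h$ from $H$ down to $1$ (alternating through $h$ and $h+1/2$) to show that the true parameters $\btheta_h^{\Diamond,\pi}$ lie in $\ci_h^{\Diamond,\pi}(\btheta_{h+1/2}^{\Diamond,\pi})$ with probability tending to one. At the terminal step the single-stage argument of Lemma~\ref{lemma:pess} applies directly using Corollary~\ref{cor:estimation}. At an interior step, the difference $\hat L_h^{\Diamond,j,\pi}(\btheta_h^{\Diamond,j,\pi}, \btheta_{h+1/2}^{\Diamond,\pi}) - \hat L_h^{\Diamond,j,\pi}(\hat \btheta_h^{\Diamond,j}, \btheta_{h+1/2}^{\Diamond,\pi})$ is controlled by the single-step SMD rate of Theorem~\ref{thm:chen2012}, while the additional slack needed to accommodate the inductive uncertainty in $\btheta_{h+1/2}^{\Diamond,\pi}$ is absorbed by enlarging $\eta_h^{\Diamond,j}$ by a factor of $(H-h)^4$. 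Assembling across $h$ yields $Q_1^{\A,\pi} \in \ci_1^{\A,\Q,\pi}$ and $Q_1^{\B,\pi} \in \ci_1^{\B,\Q,\pi}$ uniformly in $\pi$.

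For the diameter bound, I would again induct backward, using Assumption~\ref{ass:lower-bound-phi-all} to invert the matrix $\Phi_h^{\Diamond,j,\pi^*}(\btheta_{h+1/2}^{\Diamond},\bff_h^{\Diamond})$ under $\pi^*$ and obtain, as in Lemma~\ref{lemma:bound}, a per-step bound $\|\tilde \btheta_h^{\Diamond,j} - \btheta_h^{\Diamond,j,\pi^*}\| = O_p\bigl((H-h)^2 \cdot n^{-\alpha/(2\alpha+2\varsigma+d)}\bigr)$ for any $\tilde \btheta_h^{\Diamond,j} \in \ci_h^{\Diamond,j,\pi^*}(\tilde \btheta_{h+1/2}^{\Diamond})$. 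Propagating through the Bellman-type decomposition (analogous to \eqref{eq:bellman-h-single-bob-1} extended to general $H$) and using Assumption~\ref{ass:model}(e)--(f) to take out unobserved confounders from conditional expectations, the telescoping sum over $h\in [H]\cup[H]_{1/2}$ yields
\[
\bigl|\EE_{\pi^*}[Q_1^{\Diamond,\pi^*} - Q_1^\Diamond]\bigr| = O_p\Bigl(H^2 \cdot n^{-\frac{\alpha}{2\alpha+2\varsigma+d}}\Bigr)
\]
for any $Q_1^\Diamond \in \ci_1^{\Diamond,\Q,\pi^*}$. The final step is identical to that in the single-stage proof: pessimism in \eqref{prob:opt-genius2} together with $Q_1^{\Diamond,\pi^*} \in \ci_1^{\Diamond,\Q,\pi^*}$ gives $\gap(\hat \pi^\A, \hat \pi^\B) \leq \max_{Q_1^\Diamond \in \ci_1^{\Diamond,\Q,\pi^*}}|\EE_{\pi^*}[(Q_1^{\A,\pi^*} - Q_1^\A) + (Q_1^{\B,\pi^*} - Q_1^\B)]|$, which is $O_p(H^2 \cdot n^{-\alpha/(2\alpha+2\varsigma+d)})$.

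The principal technical difficulty is the bookkeeping of error propagation through the multi-layered union definition of $\ci_1^{\Diamond,\Q,\pi}$ in \eqref{eq:wiuevhrei}. In contrast to the single-stage case where the confidence region is a single sublevel set of a loss, here each layer of the Bellman recursion for $Q_1^{\B,\pi}$ (as in Figure~\ref{fig:11}) requires separate applications of Corollary~\ref{cor:estimation} for the three ``subcomponents'' $\btheta_h^{\B,j}$, and then these couple through $\tilde \btheta_{h+1/2}^{\Diamond}$ in the definition of $\ci_h^{\Diamond,\pi}(\tilde \btheta_{h+1/2}^{\Diamond})$. Tracking how errors in the inner argument $\tilde \btheta_{h+1/2}^{\Diamond}$ perturb both $\Phi_h^{\Diamond,j,\pi}(\cdot,\cdot)$ and $\alpha_h^{\Diamond,j,\pi}(\cdot)$, and then verifying that the inflated $(H-h)^4$ scaling is still enough to admit the truth while not overshooting the $H^2$ target, is the main bottleneck; once this calibration is done, the conclusion follows from the same three-line pessimism decomposition as in the single-stage proof.
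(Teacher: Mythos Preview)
Your proposal is correct and follows essentially the same approach as the paper: it proves multi-stage analogs of Lemmas~\ref{lemma:pess} and~\ref{lemma:bound} by backward induction (Lemmas~\ref{lemma:pess-all} and~\ref{lemma:bound-all} in the paper), then applies the identical pessimism decomposition used in the single-stage case. The only minor discrepancy is in the bookkeeping of the per-step error exponent---the paper's induction in Lemma~\ref{lemma:bound-all} yields $\EE[\|\tilde\btheta_h^\Diamond - \btheta_h^\Diamond\|^2]=O_p((H-h)^3 n^{-2\alpha/(2\alpha+2\varsigma+d)})$ rather than $(H-h)^4$, and the final $H^2$ comes from a union bound rather than a telescoping sum---but the structure and key ideas are the same.
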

\begin{proof}
See \S\ref{prf:thm:main-H} for a detailed proof. 
\end{proof}

Similar to Theorem \ref{thm: regret for single}, in Theorem \ref{thm:main-H}, with only partial coverage assumption and the help of convergence of the SMD estimators, the upper bound on the performance gap implies that the regret of finding an optimal in-class policy pair converges to 0 as long as the number of trajectories $n$ goes to infinity in a multi-stage game.

\section{Conclusion}\label{sec: conclusion}

In this paper, we study human-guided human-machine interaction via a form of two-player turn-based games, where one player (a human) tries to guide another player (a machine) to maximize the sum of their expected total rewards together under the presence of private information of the machine. Specifically, we consider an offline setting where the data are pre-collected. Two key challenges are presented in solving this problem: (i) confounding biases induced by the private information, which cannot be accessed by the human; (ii) distributional shift in offline setting. To solve (i), we innovatively use machine's action as a natural instrumental variable, and establish a general non-parametric identification result for the causal effect of both action and IV, based on which we propose an estimators for off-policy evaluation. To tackle (ii), we leverage the idea of pessimism and propose an optimal policy pair estimator for both human and machine. Theoretically, a rate of convergence to an optimal policy pair is rigorously established.

\newpage
\begin{APPENDICES}
\section{Detailed Method for Multi-Stage Game}

\subsection{Evaluation of Expected Total Rewards}\label{sec:full-eval}

For any policy pair $\pi = (\pi^\A, \pi^\B)$, we estimate $J^\A(\pi^\A, \pi^\B)$ and $J^\B(\pi^\A, \pi^\B)$ in a backward fashion. We introduce the estimation procedure of $Q_h^{\A,\pi}$, $Q_{h+1/2}^{\A,\pi}$, $Q_h^{\B,\pi}$, and $Q_{h+1/2}^{\B,\pi}$ for any $h \in [H]$ as follows. 

\vskip5pt
\noindent\textbf{Estimation of $Q_h^{\A,\pi}$.}
For any $h \in [H]$, with an estimator $\hat Q_{h+1/2}^\A(\cdot, \cdot, \cdot, \cdot; \pi)$, we estimate $Q_h^{\A,\pi}(\cdot, \cdot, \cdot, \cdot)$ as follows. Note that by the memoryless assumption of $V_{h+1/2}$, the following Bellman equation holds, 
\#\label{eq:bellman-h}
Q_h^{\A,\pi}(S_h, A_h , B_{h-1/2}, U_h, V_h) & = \EE_\pi \left[ R_h^\A + Q^{\A,\pi}_{h+1/2}(S_{h+1/2}, A_h , B_{h+1/2}, U_{h+1/2}) \right] \\
& = \EE\left[R_h^\A\right] + \EE\left[\tilde \theta_{h+1/2}^\A(S_{h+1/2}, U_{h+1/2}; \pi) \right]\cdot A_h \\
& \qquad + \EE\left[\tilde \gamma_{h+1/2}^\A(S_{h+1/2}, U_{h+1/2}; \pi)\cdot \pi^\B_{h+1/2}(S_{h+1/2}) \right]  \\
& \qquad + \EE\left[\tilde \omega_{h+1/2}^\A(S_{h+1/2}, U_{h+1/2}; \pi) \cdot \pi^\B_{h+1/2}(S_{h+1/2}) \right] \cdot A_h. 
\#
Note that the above expectations are taken conditioning on $S_h$, $A_h$, $B_{h-1/2}$, $U_h$, and $V_h$. 
Thus, to construct an estimator of $Q_h^{\A,\pi}$, we only need to estimate the right-hand side of \eqref{eq:bellman-h}. Following from \eqref{eq:bellman-h-single}, we can estimate the first term $\EE[R_h^\A]$ on the right-hand side of \eqref{eq:bellman-h} as 
\#\label{eq:est-1}
\hat \beta_a^\A(S_h, U_h) A_h  + \hat \beta_z^\A(S_h, U_h)  B_{h-1/2} + \hat \beta_{az}^\A(S_h, U_h) A_h  B_{h-1/2}.
\#
Note that the conditional expectation of $\beta_{u,s}^\A(U_h, V_h, S_h)$ is zero by Assumption \ref{ass:model}, thus we have the above formulation.
For the remaining three terms on the right-hand side of \eqref{eq:bellman-h}, since we have estimators $\hat \theta_{h+1/2}^\A$, $\hat \gamma_{h+1/2}^\A$, and $\hat \omega_{h+1/2}^\A$ for $\tilde \theta_{h+1/2}^\A$, $\tilde \gamma_{h+1/2}^\A$, and $\tilde \omega_{h+1/2}^\A$, respectively from the estimator $\hat Q^\A_{h+1/2}$, we only need to estimate the following three terms, 
\$
& (I) = \EE\left[\hat \theta_{h+1/2}^\A(S_{h+1/2}, U_{h+1/2}; \pi) \given S_h, A_h , B_{h-1/2}, U_h, V_h \right], \\
& (II) = \EE\left[\hat \gamma_{h+1/2}^\A(S_{h+1/2}, U_{h+1/2}; \pi)\cdot \pi^\B_{h+1/2}(S_{h+1/2}) \given S_h, A_h , B_{h-1/2}, U_h, V_h \right], \\
& (III) = \EE\left[\hat \omega_{h+1/2}^\A(S_{h+1/2}, U_{h+1/2}; \pi) \cdot \pi^\B_{h+1/2}(S_{h+1/2}) \given S_h, A_h , B_{h-1/2}, U_h, V_h \right].
\$
By Assumption \ref{ass:model}~(e) and the model specified in \eqref{eq:model-trans-h}, we know that there exist functions $\{ \theta_h^{\A,j}, \gamma_h^{\A,j}, \omega_h^{\A,j}\}_{j\in [3]}$ such that
\#\label{eq:est-2-ppp}
& \theta_h^{\A,1}(S_h,U_h,V_h;\pi) \cdot A_h + \gamma_h^{\A,1}(S_h,U_h,V_h;\pi) \cdot B_{h-1/2} + \omega_h^{\A,1}(S_h,U_h,V_h;\pi) \cdot A_h B_{h-1/2}, \\
& \theta_h^{\A,2}(S_h,U_h,V_h;\pi) \cdot A_h + \gamma_h^{\A,2}(S_h,U_h,V_h;\pi) \cdot B_{h-1/2} + \omega_h^{\A,2}(S_h,U_h,V_h;\pi) \cdot A_h B_{h-1/2}, \\
& \theta_h^{\A,3}(S_h,U_h,V_h;\pi) \cdot A_h + \gamma_h^{\A,3}(S_h,U_h,V_h;\pi) \cdot B_{h-1/2} + \omega_h^{\A,3}(S_h,U_h,V_h;\pi) \cdot A_h B_{h-1/2}, 
\#
Following from a similar idea as in \eqref{eq:bob-1-model} by integrating out $V_h$ in \eqref{eq:est-2-ppp}, we construct the following estimators for the above terms, 
\#\label{eq:est-2}
& \hat \theta_h^{\A,1}(S_h,U_h;\pi) \cdot A_h + \hat \gamma_h^{\A,1}(S_h,U_h;\pi) \cdot B_{h-1/2} + \hat \omega_h^{\A,1}(S_h,U_h;\pi) \cdot A_h B_{h-1/2}, \\
& \hat \theta_h^{\A,2}(S_h,U_h;\pi) \cdot A_h + \hat \gamma_h^{\A,2}(S_h,U_h;\pi) \cdot B_{h-1/2} + \hat \omega_h^{\A,2}(S_h,U_h;\pi) \cdot A_h B_{h-1/2}, \\
& \hat \theta_h^{\A,3}(S_h,U_h;\pi) \cdot A_h + \hat \gamma_h^{\A,3}(S_h,U_h;\pi) \cdot B_{h-1/2} + \hat \omega_h^{\A,3}(S_h,U_h;\pi) \cdot A_h B_{h-1/2}, 
\#
respectively. 
Combining \eqref{eq:bellman-h}, \eqref{eq:est-1}, and \eqref{eq:est-2}, we obtain the following estimator of $Q_h^{\A,\pi}$, 
\#\label{eq:est-qha}
& \hat Q^\A_h(S_h, A_h , B_{h-1/2}, U_h; \pi) \\
& \qquad = \hat \theta_h^\A(S_h, U_h; \pi)\cdot A_h  + \hat \gamma_h^\A(S_h, U_h; \pi) \cdot B_{h-1/2} + \hat \omega_h^\A(S_h, U_h; \pi) \cdot A_h  B_{h-1/2},
\#
where 
\#
& \hat \theta_h^\A(S_h, U_h; \pi) = \hat \beta_a^\A(U_h, S_h) + \hat \theta_h^{\A,1}(S_h,U_h;\pi) + \hat \theta_h^{\A,2}(S_h,U_h;\pi) + \hat \theta_h^{\A,3}(S_h,U_h;\pi), \\
& \hat \gamma_h^\A(S_h, U_h; \pi) = \hat \beta_z^\A(U_h, S_h) + \hat \gamma_h^{\A,2}(S_h,U_h;\pi), \\
& \hat \omega_h^\A(S_h, U_h; \pi) = \hat \beta_{az}^\A(U_h, S_h) + \hat \omega_h^{\A,1}(S_h,U_h;\pi) + \hat \omega_h^{\A,2}(S_h,U_h;\pi) + \hat \omega_h^{\A,3}(S_h,U_h;\pi) \\
& \qquad \qquad \qquad \qquad + \hat \gamma_h^{\A,1}(S_h,U_h;\pi) + \hat \gamma_h^{\A,3}(S_h,U_h;\pi). 
\#

\vskip5pt
\noindent\textbf{Estimation of $Q_{h+1/2}^{\A,\pi}$.}
For any $h \in [H]$, we estimate $Q_{h+1/2}^{\A,\pi}(\cdot, \cdot, \cdot, \cdot; \pi)$ as follows. 
Since we have estimators $\hat \theta_{h+1}^\A$, $\hat \gamma_{h+1}^\A$, and $\hat \omega_{h+1}^\A$ of $\tilde \theta_{h+1}^\A$, $\tilde \gamma_{h+1}^\A$, and $\tilde \omega_{h+1}^\A$, respectively from the estimator $\hat Q^\A_{h+1}$, we first estimate the following three terms,
\$
& \EE\left[\hat \theta_{h+1}^\A(S_{h+1}, U_{h+1}^\A; \pi) \cdot \pi^\A_{h+1}(S_{h+1}, U_{h+1}^\A) \given S_{h+1/2}, A_h , B_{h+1/2}, U_{h+1/2}, V_{h+1/2} \right], \\
& \EE\left[\hat \gamma_{h+1}^\A(S_{h+1}, U_{h+1}^\A; \pi) \given S_{h+1/2}, A_h , B_{h+1/2}, U_{h+1/2}, V_{h+1/2} \right], \\
& \EE\left[\hat \omega_{h+1}^\A(S_{h+1}, U_{h+1}^\A; \pi) \cdot \pi^\A_{h+1}(S_{h+1}, U_{h+1}^\A) \given S_{h+1/2}, A_h , B_{h+1/2}, U_{h+1/2}, V_{h+1/2} \right]
\$
by
\#\label{eq:est-3}
& \hat \theta_{h+1/2}^{\A,1}(S_{h+1/2},U_{h+1/2};\pi) \cdot A_h + \hat \gamma_{h+1/2}^{\A,1}(S_{h+1/2},U_{h+1/2};\pi) \cdot B_{h+1/2} \\
& \qquad + \hat \omega_{h+1/2}^{\A,1}(S_{h+1/2},U_{h+1/2};\pi) \cdot A_h B_{h+1/2}, \\
& \hat \theta_{h+1/2}^{\A,2}(S_{h+1/2},U_{h+1/2};\pi) \cdot A_h + \hat \gamma_{h+1/2}^{\A,2}(S_{h+1/2},U_{h+1/2};\pi) \cdot B_{h+1/2} \\
& \qquad + \hat \omega_{h+1/2}^{\A,2}(S_{h+1/2},U_{h+1/2};\pi) \cdot A_h B_{h+1/2}, \\
& \hat \theta_{h+1/2}^{\A,3}(S_{h+1/2},U_{h+1/2};\pi) \cdot A_h + \hat \gamma_{h+1/2}^{\A,3}(S_{h+1/2},U_{h+1/2};\pi) \cdot B_{h+1/2} \\
& \qquad + \hat \omega_{h+1/2}^{\A,3}(S_{h+1/2},U_{h+1/2};\pi) \cdot A_h B_{h+1/2}, 
\#
respectively. We remark that the above estimators in \eqref{eq:est-3} are constructed following from a similar idea as in \eqref{eq:est-2}. Similar as in \eqref{eq:est-qha}, we obtain the following estimator of $Q_{h+1/2}^{\A,\pi}$, 
\#\label{eq:est-qha2}
& \hat Q^\A_{h+1/2}(S_{h+1/2}, A_h , B_{h+1/2}, U_{h+1/2}; \pi)   \\
& \qquad = \hat \theta_{h+1/2}^\A(S_{h+1/2}, U_{h+1/2}; \pi) \cdot A_h  + \hat \gamma_{h+1/2}^\A(S_{h+1/2}, U_{h+1/2}; \pi) \cdot B_{h+1/2}   \\
& \qquad \qquad+ \hat \omega_{h+1/2}^\A(S_{h+1/2}, U_{h+1/2}; \pi) \cdot A_h  B_{h+1/2}. 
\#
where 
\#
& \hat \theta_{h+1/2}^\A(S_{h+1/2}, U_{h+1/2}; \pi) = \hat \theta_{h+1/2}^{\A,1}(S_{h+1/2}, U_{h+1/2}; \pi), \\
& \hat \gamma_{h+1/2}^\A(S_{h+1/2}, U_{h+1/2}; \pi) = \hat \gamma_{h+1/2}^{\A,1}(S_{h+1/2}, U_{h+1/2}; \pi) + \hat \gamma_{h+1/2}^{\A,2}(S_{h+1/2}, U_{h+1/2}; \pi) \\
& \qquad \qquad \qquad \qquad \qquad \qquad + \hat \gamma_{h+1/2}^{\A,3}(S_{h+1/2}, U_{h+1/2}; \pi), \\
& \hat \omega_{h+1/2}^\A(S_{h+1/2}, U_{h+1/2}; \pi) = \hat \omega_{h+1/2}^{\A,1}(S_{h+1/2}, U_{h+1/2}; \pi) + \hat \omega_{h+1/2}^{\A,2}(S_{h+1/2}, U_{h+1/2}; \pi)  \\ 
& \qquad \qquad \qquad \qquad \qquad \qquad + \hat \omega_{h+1/2}^{\A,3}(S_{h+1/2}, U_{h+1/2}; \pi) + \hat \theta_{h+1/2}^{\A,2}(S_{h+1/2}, U_{h+1/2}; \pi) \\
& \qquad \qquad \qquad \qquad \qquad \qquad + \hat \theta_{h+1/2}^{\A,3}(S_{h+1/2}, U_{h+1/2}; \pi). 
\#

\vskip5pt
\noindent\textbf{Estimation of $Q_h^{\B,\pi}$.}
For any $h \in [H]$, with an estimator $\hat Q_{h+1/2}^\B(\cdot, \cdot, \cdot, \cdot; \pi)$, we estimate $Q_h^{\B,\pi}(\cdot, \cdot, \cdot, \cdot)$ as follows. 
Since we have estimators $\hat \theta_{h+1/2}^\B$, $\hat \gamma_{h+1/2}^\B$, and $\hat \omega_{h+1/2}^\B$ for $\tilde \theta_{h+1/2}^\B$, $\tilde \gamma_{h+1/2}^\B$, and $\tilde \omega_{h+1/2}^\B$, respectively from the estimator $\hat Q^\B_{h+1/2}$, we first estimate the following three terms, 
\$
& \EE\left[\hat \theta_{h+1/2}^\B(S_{h+1/2}, U_{h+1/2}; \pi) \given S_h, A_h , B_{h-1/2}, U_h, V_h \right], \\
& \EE\left[\hat \gamma_{h+1/2}^\B(S_{h+1/2}, U_{h+1/2}; \pi)\cdot \pi^\B_{h+1/2}(S_{h+1/2}) \given S_h, A_h , B_{h-1/2}, U_h, V_h \right], \\
& \EE\left[\hat \omega_{h+1/2}^\B(S_{h+1/2}, U_{h+1/2}; \pi) \cdot \pi^\B_{h+1/2}(S_{h+1/2}) \given S_h, A_h , B_{h-1/2}, U_h, V_h \right]
\$
by
\#\label{eq:est-4}
& \hat \theta_h^{\B,1}(S_h,U_h;\pi) \cdot A_h + \hat \gamma_h^{\B,1}(S_h,U_h;\pi) \cdot B_{h-1/2} + \hat \omega_h^{\B,1}(S_h,U_h;\pi) \cdot A_h B_{h-1/2}, \\
& \hat \theta_h^{\B,2}(S_h,U_h;\pi) \cdot A_h + \hat \gamma_h^{\B,2}(S_h,U_h;\pi) \cdot B_{h-1/2} + \hat \omega_h^{\B,2}(S_h,U_h;\pi) \cdot A_h B_{h-1/2}, \\
& \hat \theta_h^{\B,3}(S_h,U_h;\pi) \cdot A_h + \hat \gamma_h^{\B,3}(S_h,U_h;\pi) \cdot B_{h-1/2} + \hat \omega_h^{\B,3}(S_h,U_h;\pi) \cdot A_h B_{h-1/2},
\#
respectively. We remark that the above estimators in \eqref{eq:est-4} are constructed following from a similar idea as in \eqref{eq:est-2}.
Similar as in \eqref{eq:est-qha}, we obtain the following estimator of $Q_h^{\B,\pi}$, 
\#\label{eq:est-qha3}
& \hat Q^\B_h(S_h, A_h , B_{h-1/2}, U_h; \pi) \\
& \qquad = \hat \theta_h^\B(S_h, U_h; \pi)\cdot A_h  + \hat \gamma_h^\B(S_h, U_h; \pi) \cdot B_{h-1/2} + \hat \omega_h^\B(S_h, U_h; \pi) \cdot A_h  B_{h-1/2},
\#
where 
\#
& \hat \theta_h^\B(S_h, U_h; \pi) = \hat \theta_h^{\B,1}(S_h,U_h;\pi) + \hat \theta_h^{\B,2}(S_h,U_h;\pi) + \hat \theta_h^{\B,3}(S_h,U_h;\pi), \\
& \hat \gamma_h^\B(S_h, U_h; \pi) = \hat \gamma_h^{\B,2}(S_h,U_h;\pi), \\
& \hat \omega_h^\B(S_h, U_h; \pi) = \hat \omega_h^{\B,1}(S_h,U_h;\pi) + \hat \omega_h^{\B,2}(S_h,U_h;\pi) + \hat \omega_h^{\B,3}(S_h,U_h;\pi) \\
& \qquad \qquad \qquad \qquad + \hat \gamma_h^{\B,1}(S_h,U_h;\pi) + \hat \gamma_h^{\B,3}(S_h,U_h;\pi). 
\#

\vskip5pt
\noindent\textbf{Estimation of $Q_{h+1/2}^{\B,\pi}$.}
For any $h \in [H]$, we estimate $Q_{h+1/2}^{\B,\pi}(\cdot, \cdot, \cdot, \cdot)$ as follows. 
Since we have estimators $\hat \theta_{h+1}^\B$, $\hat \gamma_{h+1}^\B$, and $\hat \omega_{h+1}^\B$ of $\tilde \theta_{h+1}^\B$, $\tilde \gamma_{h+1}^\B$, and $\tilde \omega_{h+1}^\B$, respectively from the estimator $\hat Q^\B_{h+1}$, we first estimate the following terms,
\$
& \EE\left[R_{h+1/2}^\B \given S_{h+1/2}, A_h , B_{h+1/2}, U_{h+1/2}, V_{h+1/2} \right], \\
& \EE\left[\hat \theta_{h+1}^\B(S_{h+1}, U_{h+1}^\A; \pi) \cdot \pi^\B_{h+1}(S_{h+1}, U_{h+1}^\A) \given S_{h+1/2}, A_h , B_{h+1/2}, U_{h+1/2}, V_{h+1/2} \right], \\
& \EE\left[\hat \gamma_{h+1}^\B(S_{h+1}, U_{h+1}^\A; \pi) \given S_{h+1/2}, A_h , B_{h+1/2}, U_{h+1/2}, V_{h+1/2} \right], \\
& \EE\left[\hat \omega_{h+1}^\B(S_{h+1}, U_{h+1}^\A; \pi) \cdot \pi^\B_{h+1}(S_{h+1}, U_{h+1}^\A) \given S_{h+1/2}, A_h , B_{h+1/2}, U_{h+1/2}, V_{h+1/2} \right]
\$
by
\#\label{eq:est-5}
& \hat \beta_a^\B(S_{h+1/2},U_{h+1/2}) A_h  + \hat \beta_z^\B(S_{h+1/2},U_{h+1/2})  B_{h+1/2} + \hat \beta_{az}^\B(S_{h+1/2},U_{h+1/2}) A_h   B_{h+1/2}, \\
& \hat \theta_{h+1/2}^{\B,1}(S_{h+1/2},U_{h+1/2};\pi) \cdot A_h + \hat \gamma_{h+1/2}^{\B,1}(S_{h+1/2},U_{h+1/2};\pi) \cdot B_{h+1/2} \\
& \qquad + \hat \omega_{h+1/2}^{\B,1}(S_{h+1/2},U_{h+1/2};\pi) \cdot A_h B_{h+1/2}, \\
& \hat \theta_{h+1/2}^{\B,2}(S_{h+1/2},U_{h+1/2};\pi) \cdot A_h + \hat \gamma_{h+1/2}^{\B,2}(S_{h+1/2},U_{h+1/2};\pi) \cdot B_{h+1/2} \\
& \qquad + \hat \omega_{h+1/2}^{\B,2}(S_{h+1/2},U_{h+1/2};\pi) \cdot A_h B_{h+1/2}, \\
& \hat \theta_{h+1/2}^{\B,3}(S_{h+1/2},U_{h+1/2};\pi) \cdot A_h + \hat \gamma_{h+1/2}^{\B,3}(S_{h+1/2},U_{h+1/2};\pi) \cdot B_{h+1/2} \\
& \qquad + \hat \omega_{h+1/2}^{\B,3}(S_{h+1/2},U_{h+1/2};\pi) \cdot A_h B_{h+1/2}, 
\#
respectively. We remark that the above estimators in \eqref{eq:est-5} are constructed following from a similar idea as in \eqref{eq:est-2}. Similar as in \eqref{eq:est-qha}, we obtain the following estimator of $Q_{h+1/2}^{\B,\pi}$, 
\#\label{eq:est-qha4}
& \hat Q^\B_{h+1/2}(S_{h+1/2}, A_h , B_{h+1/2}, U_{h+1/2}; \pi)   \\
& \qquad = \hat \theta_{h+1/2}^\B(S_{h+1/2}, U_{h+1/2}; \pi) \cdot A_h  + \hat \gamma_{h+1/2}^\B(S_{h+1/2}, U_{h+1/2}; \pi) \cdot B_{h+1/2}   \\
& \qquad \qquad+ \hat \omega_{h+1/2}^\B(S_{h+1/2}, U_{h+1/2}; \pi) \cdot A_h  B_{h+1/2}. 
\#
where 
\#
& \hat \theta_{h+1/2}^\B(S_{h+1/2}, U_{h+1/2}; \pi) = \hat \beta_a^\B(S_{h+1/2},U_{h+1/2}) + \hat \theta_{h+1/2}^{\B,1}(S_{h+1/2}, U_{h+1/2}; \pi), \\
& \hat \gamma_{h+1/2}^\B(S_{h+1/2}, U_{h+1/2}; \pi) = \hat \beta_z^\B(S_{h+1/2},U_{h+1/2}) + \hat \gamma_{h+1/2}^{\B,1}(S_{h+1/2}, U_{h+1/2}; \pi) \\
& \qquad \qquad \qquad \qquad \qquad \qquad + \hat \gamma_{h+1/2}^{\B,2}(S_{h+1/2}, U_{h+1/2}; \pi) + \hat \gamma_{h+1/2}^{\B,3}(S_{h+1/2}, U_{h+1/2}; \pi), \\
& \hat \omega_{h+1/2}^\B(S_{h+1/2}, U_{h+1/2}; \pi) = \hat \beta_{az}^\B(S_{h+1/2},U_{h+1/2}) + \hat \omega_{h+1/2}^{\B,1}(S_{h+1/2}, U_{h+1/2}; \pi)  \\ 
& \qquad \qquad \qquad \qquad \qquad \qquad + \hat \omega_{h+1/2}^{\B,2}(S_{h+1/2}, U_{h+1/2}; \pi) + \hat \omega_{h+1/2}^{\B,3}(S_{h+1/2}, U_{h+1/2}; \pi) \\
& \qquad \qquad \qquad \qquad \qquad \qquad + \hat \theta_{h+1/2}^{\B,2}(S_{h+1/2}, U_{h+1/2}; \pi) + \hat \theta_{h+1/2}^{\B,3}(S_{h+1/2}, U_{h+1/2}; \pi). 
\#

\vskip5pt
\noindent\textbf{Estimation of $J^\A$ and $J^\B$.}
We estimate $J^\A(\pi^\A, \pi^\B)$ as
\$
\hat J^\A(\pi^\A, \pi^\B) = \EE_\pi \left[ \hat Q_1^\A(S_1, A_1^\A, A_{1/2}^\B, U_1; \pi) \right], 
\$
where the expectation $\EE_\pi[\cdot]$ is taken with respect to 
$A^\B_{1/2} \sim \pi_{1/2}^\B(\cdot)$, $S_1\sim \nu(\cdot)$, $U_1$, and $A_1^\A \sim \pi_1^\A(\cdot \given S_1, U_1)$. Similarly, we construct an estimator $\hat J^\B(\pi^\A, \pi^\B)$ for $J^\B(\pi^\A, \pi^\B)$.

\subsection{Construction of Confidence Regions}\label{sec:iureiuw}
In a multi-stage game with $H \geq 1$, with a similar evaluation procedure proposed in \S\ref{sec:full-eval}, we present our method to find an optimal policy in this section. To begin with, in light of \eqref{eq:iuwervbuire} in \S\ref{sec:id-all-wo-oracle} of the appendix, we have the following equations 
\#\label{eq:nuwiehv}
& W^{\Diamond, \r}(\tilde \bff^{\Diamond}) = \begin{pmatrix}
    \Phi^{\Diamond, \r}(\tilde \bff^{\Diamond}) \bbeta^\Diamond \\
    \varphi^{\Diamond, \r}(\tilde \bff^{\Diamond})
\end{pmatrix} + \alpha^{\Diamond, \r}(\tilde \bff^{\Diamond}), \\
& W_h^{\Diamond,j,\pi}(\tilde \btheta_{h+1/2}^{\Diamond}, \tilde \bff_h^{\Diamond}) = \begin{pmatrix}
    \Phi_h^{\Diamond,j,\pi}(\tilde \btheta_{h+1/2}^{\Diamond}, \tilde \bff_h^{\Diamond}) \btheta_h^{\Diamond,j,\pi}(\tilde \btheta_{h+1/2}^{\Diamond})\\
    \varphi_h^{\Diamond,j,\pi}(\tilde \btheta_{h+1/2}^{\Diamond}, \tilde \bff_h^{\Diamond})
\end{pmatrix} + \alpha_h^{\Diamond,j,\pi} (\tilde \btheta_{h+1/2}^{\Diamond}, \tilde \bff_h^{\Diamond}), \\
& W_{h+1/2}^{\Diamond,j,\pi}(\tilde \btheta_{h+1}^{\Diamond}, \tilde \bff_{h+1/2}^{\Diamond}) = \begin{pmatrix}
    \Phi_{h+1/2}^{\Diamond,j,\pi}(\tilde \btheta_{h+1}^{\Diamond}, \tilde \bff_{h+1/2}^{\Diamond}) \btheta_{h+1/2}^{\Diamond,j,\pi}(\tilde \btheta_{h+1}^{\Diamond}) \\
    \varphi_{h+1/2}^{\Diamond,j,\pi}(\tilde \btheta_{h+1}^{\Diamond}, \tilde \bff_{h+1/2}^{\Diamond})
\end{pmatrix} + \alpha_{h+1/2}^{\Diamond,j,\pi} (\tilde \btheta_{h+1}^{\Diamond}, \tilde \bff_{h+1/2}^{\Diamond}), 
\#
where 
\$
& \Phi^{\Diamond, \r}(\tilde \bff^{\Diamond}), ~\Phi_h^{\Diamond,j,\pi}(\tilde \btheta_{h+1/2}^{\Diamond}, \tilde \bff_h^{\Diamond}), ~\Phi_{h+1/2}^{\Diamond,j,\pi}(\tilde \btheta_{h+1}^{\Diamond}, \tilde \bff_{h+1/2}^{\Diamond}) \in \RR^{3\times 3}, \\ 
& \varphi^{\Diamond, \r}(\tilde \bff^{\Diamond}), ~\varphi_h^{\Diamond,j,\pi}(\tilde \btheta_{h+1/2}^{\Diamond}, \tilde \bff_h^{\Diamond}), ~\varphi_{h+1/2}^{\Diamond,j,\pi}(\tilde \btheta_{h+1}^{\Diamond}, \tilde \bff_{h+1/2}^{\Diamond}) \in \RR^5, \\
& \alpha^{\Diamond, \r}(\tilde \bff^{\Diamond}), ~\alpha_h^{\Diamond,j,\pi} (\tilde \btheta_{h+1/2}^{\Diamond}, \tilde \bff_h^{\Diamond}), ~\alpha_{h+1/2}^{\Diamond,j,\pi} (\tilde \btheta_{h+1}^{\Diamond}, \tilde \bff_{h+1/2}^{\Diamond}) \in \RR^3
\$ 
for any $\tilde \bff^{\Diamond}, \tilde \bff_h^{\Diamond}, \tilde \bff_{h+1/2}^{\Diamond} \in \cH^5$, $\tilde \btheta_{h+1/2}^{\Diamond}, \tilde \btheta_{h+1}^{\Diamond} \in \cH^3$,  $\Diamond\in \{\A,\B\}$, $j\in [3]$, and $h\in [H]$. 
Here $W^{\Diamond, \r}(\tilde \bff^{\Diamond})$, $W_h^{\Diamond,j,\pi}(\tilde \btheta_{h+1/2}^{\Diamond}, \tilde \bff_h^{\Diamond})$, and $W_{h+1/2}^{\Diamond,j,\pi}(\tilde \btheta_{h+1}^{\Diamond}, \tilde \bff_{h+1/2}^{\Diamond})$ are constructed via \eqref{eq:weiuhfurwvrtg}. With \eqref{eq:nuwiehv}, we further define the loss functions for the construction of estimators in \eqref{eq:est-qha}, \eqref{eq:est-qha2}, \eqref{eq:est-qha3}, and \eqref{eq:est-qha4} as follows, 
\#\label{eq:wiuehfrwe444}
& \hat L^{\Diamond,\r}(\tilde \bbeta^{\Diamond}; \tilde \bff^{\Diamond}) = \frac{1}{n} \sum_{i = 1}^n \hat m^{i,\A,\r}(\tilde \bbeta^{\Diamond}; \tilde \bff^{\Diamond})^\top  \hat m^{i,\A,\r}(\tilde \bbeta^{\Diamond}; \tilde \bff^{\Diamond}), \\
& \hat L_h^{\Diamond,j,\pi}(\tilde \btheta_h^{\Diamond,j}, \tilde \btheta_{h+1/2}^{\Diamond}; \tilde \bff_h^{\Diamond}) = \frac{1}{n} \sum_{i = 1}^n \hat m_h^{i,\B,j,\pi}(\tilde \btheta_h^{\Diamond,j}, \tilde \btheta_{h+1/2}^{\Diamond}; \tilde \bff_h^{\Diamond})^\top  \hat m_h^{i,\B,j,\pi}(\tilde \btheta_h^{\Diamond,j}, \tilde \btheta_{h+1/2}^{\Diamond}; \tilde \bff_h^{\Diamond}), \\
& \hat L_{h+1/2}^{\Diamond,j,\pi}(\tilde \btheta_{h+1/2}^{\Diamond,j}, \tilde \btheta_{h+1}^{\Diamond}; \tilde \bff_{h+1/2}^{\Diamond}) = \frac{1}{n} \sum_{i = 1}^n \hat m_{h+1/2}^{i,\B,j,\pi}(\tilde \btheta_{h+1/2}^{\Diamond,j}, \tilde \btheta_{h+1}^{\Diamond}; \tilde \bff_{h+1/2}^{\Diamond})^\top  \hat m_{h+1/2}^{i,\B,j,\pi}(\tilde \btheta_{h+1/2}^{\Diamond,j}, \tilde \btheta_{h+1}^{\Diamond}; \tilde \bff_{h+1/2}^{\Diamond}), 
\#
where $\hat m^{i,\A,\r}(\tilde \bbeta^{\Diamond}; \tilde \bff^{\Diamond})$, $\hat m_h^{i,\B,j,\pi}(\tilde \btheta_h^{\Diamond,j}, \tilde \btheta_{h+1/2}^{\Diamond}; \tilde \bff_h^{\Diamond})$, and $\hat m_{h+1/2}^{i,\B,j,\pi}(\tilde \btheta_{h+1/2}^{\Diamond,j}, \tilde \btheta_{h+1}^{\Diamond}; \tilde \bff_{h+1/2}^{\Diamond})$ are consistent estimators of \eqref{eq:nuwiehv}. 
By minimizing the loss functions in \eqref{eq:wiuehfrwe444}, for any policy pair $\pi$, we can obtain the following estimators needed to construct estimators of action-value functions defined in \eqref{eq:est-qha}, \eqref{eq:est-qha2}, \eqref{eq:est-qha3}, and \eqref{eq:est-qha4}, 
\#\label{eq:uihrgiur}
& \hat \bbeta^\Diamond = \left(\hat \beta_a^\Diamond, \hat \beta_z^\Diamond, \hat \beta_{az}^\Diamond\right), \quad \hat \btheta_h^{\Diamond,j,\pi} = \left(\hat \theta_h^{\Diamond,j,\pi}, \hat \gamma_h^{\Diamond,j,\pi}, \hat \omega_h^{\Diamond,j,\pi}\right),
\#
for any $j\in [3]$, $h\in [H]\cup [H]_{1/2}$, and $\Diamond\in \{\A, \B\}$.  
With \eqref{eq:wiuehfrwe444} and \eqref{eq:uihrgiur}, we define the following confidence regions, 
\#\label{eq:conf-genius}
& \ci^{\Diamond,\r} = \left\{\left(\tilde \beta_a^\Diamond, \tilde \beta_z^\Diamond, \tilde \beta_{az}^\Diamond\right) \colon \hat L^{\Diamond,\r}(\tilde \bbeta^{\Diamond}; \hat \bff^{\Diamond}) - \hat L^{\Diamond,\r}(\hat \bbeta^{\Diamond}; \hat \bff^{\Diamond}) \leq \eta^{\Diamond,\r} \right\}, \\
& \ci_h^{\Diamond,j,\pi}(\tilde \btheta_{h+1/2}^{\Diamond}) = \left\{\left(\tilde \theta_h^{\Diamond,j}, \tilde \gamma_h^{\Diamond,j}, \tilde \omega_h^{\Diamond,j} \right) \colon \hat L_h^{\Diamond,j,\pi}(\tilde \btheta_h^{\Diamond,j}, \tilde \btheta_{h+1/2}^{\Diamond}; \hat \bff_h^{\Diamond,j}) - \hat L_h^{\Diamond,j,\pi}(\hat \btheta_h^{\Diamond,j,\pi}, \tilde \btheta_{h+1/2}^{\Diamond}; \hat \bff_h^{\Diamond,j}) \leq \eta_h^{\Diamond,j} \right\}, \\
& \ci_h^{\Diamond,\pi}(\tilde \btheta_{h+1/2}^{\Diamond}) = \left\{\left(\tilde \theta_h^{\Diamond}, \tilde \gamma_h^{\Diamond}, \tilde \omega_h^{\Diamond} \right) \colon \left(\tilde \theta_h^{\Diamond,j}, \tilde \gamma_h^{\Diamond,j}, \tilde \omega_h^{\Diamond,j} \right) \in \ci_h^{\Diamond,j,\pi}(\tilde \btheta^{\Diamond}_{h+1/2}) \text{ for any $j\in [3]$} \right\},
\#
for any $j\in [3]$, $h\in [H]\cup [H]_{1/2}$, and $\Diamond\in \{\A, \B\}$. 
Here we denote by $\hat \btheta_{H}^\A = \hat \bbeta^\A$ and $\hat \btheta_{H+1/2}^\B = \hat \bbeta^\B$ for notational convenience.

\section{Estimation Procedure without Oracle}\label{sec:id-all-wo-oracle}
The estimators constructed in \eqref{eq:est-qha}, \eqref{eq:est-qha2}, \eqref{eq:est-qha3}, and \eqref{eq:est-qha4} follow a similar procedure. 
Specifically, for a given policy pair $\pi$ and a function $\vartheta$, we aim to estimate the conditional expectation of the form, 
\$
& \EE\left[Y\given S, A, B, U\right] = \vartheta_a^*(S,U) A + \vartheta_z^*(S,U) B + \vartheta_{az}^*(S,U) A B,
\$
where $Y = g(R, S', U')$ and the expectation $\EE[\cdot]$ is taken following the policy pair $\pi$ and the turn-based model introduced in \S\ref{sec:model}. Note here we drop the subscripts for notational convenience. 
Specifically, we want to construct estimators for $\vartheta_a^*$, $\vartheta_z^*$, and $\vartheta_{az}^*$.

\subsection{Identification Results}\label{appendix: lm 1-3}

We rely on the following three lemmas for our identification results in Theorem \ref{theorem: basic id}.
\begin{lemma}\label{lemma:id1}
We have
\$
& \EE\left[ \left( B - \EE[B\given S, U ] \right) \cdot \left( A - \EE[A\given B, S, U] \right)\cdot Y \given S, U \right] \\
& \quad = \EE\left[ \left( B - \EE[B \given S, U ] \right) \cdot \left( A - \EE[A\given B, S, U] \right)\cdot A \given S, U \right] \cdot \vartheta_a^*(S,U) \\
& \quad \quad + \EE\left[ B \cdot \left( B - \EE[B \given S, U ] \right) \cdot \left( A - \EE[A\given B, S, U] \right)\cdot A \given S, U \right] \cdot \vartheta_{az}^*(S, U). 
\$
\end{lemma}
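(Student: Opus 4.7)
My plan is to prove the identity by iterated conditioning, leveraging the structural models for $Y$ and $A$, the IV independence $B \indp V \mid (S,U)$, and the orthogonality conditions in Assumption \ref{ass:single-stage model}(c). Writing $\mu = \EE[B\mid S,U]$, $p = \EE[\alpha_z^\ast \mid S,U]$, $q = \EE[\alpha_{u,s}^\ast \mid S,U]$, and $\tilde A = \EE[A\mid B,S,U]$, the starting point is the tower property with the structural equation for $Y$,
\[
\EE\bigl[(B-\mu)(A-\tilde A)\,Y \bigm| S,U\bigr] = \EE\bigl[(B-\mu)(A-\tilde A)\{\vartheta_a^\ast A + \vartheta_z^\ast B + \vartheta_{az}^\ast AB + \vartheta_{us}^\ast\}\bigm| S,U\bigr].
\]
I would then split the right-hand side into four terms, corresponding to the four coefficient functions, and evaluate each by first conditioning on $(S,U,V,B)$ and substituting the action model $\EE[A\mid S,U,V,B] = \alpha_z^\ast B + \alpha_{u,s}^\ast$ together with the binary identity $A^2 = A$.

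For the $\vartheta_a^\ast A$ and $\vartheta_{az}^\ast AB$ terms the key computation is
\[
\EE[(A-\tilde A)A\mid S,U,V,B] = (\alpha_z^\ast B + \alpha_{u,s}^\ast)(1 - pB - q),
\]
after which $B \indp V\mid (S,U)$ lets me pull the $B$-factor out; the orthogonality conditions $\Cov(\vartheta_a^\ast,\alpha_z^\ast\mid S,U)=\Cov(\vartheta_a^\ast,\alpha_{u,s}^\ast\mid S,U)=0$ (and their analogues for $\vartheta_{az}^\ast$) then allow me to replace $\EE[\vartheta_a^\ast \alpha_z^\ast\mid S,U]$ by $\vartheta_a^\ast(S,U)\cdot p$ and $\EE[\vartheta_a^\ast \alpha_{u,s}^\ast\mid S,U]$ by $\vartheta_a^\ast(S,U)\cdot q$, factoring $\vartheta_a^\ast(S,U)$ cleanly out. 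Doing the same computation with $\vartheta_a^\ast \equiv 1$ and $\vartheta_{az}^\ast\equiv 1$ gives exactly the two coefficients appearing on the right-hand side of the claimed identity, so these two pieces match.

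For the $\vartheta_z^\ast B$ and $\vartheta_{us}^\ast$ terms, conditioning on $(S,U,V,B)$ gives $\EE[A-\tilde A\mid S,U,V,B] = (\alpha_z^\ast - p)B + (\alpha_{u,s}^\ast - q)$, and after using $B \indp V\mid (S,U)$ both terms reduce to linear combinations of $\Cov(\vartheta_z^\ast,\alpha_z^\ast\mid S,U)$, $\Cov(\vartheta_z^\ast,\alpha_{u,s}^\ast\mid S,U)$, $\Cov(\vartheta_{us}^\ast,\alpha_z^\ast\mid S,U)$, and $\Cov(\vartheta_{us}^\ast,\alpha_{u,s}^\ast\mid S,U)$; the first three are zero by Assumption \ref{ass:single-stage model}(c), and the fourth is multiplied by $\EE[B-\mu\mid S,U]=0$. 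Hence these two terms vanish and we recover the stated equation.

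The proof is essentially a long bookkeeping exercise, and the main obstacle is organizing the algebra so the cancellations are visible. The cleanest path I see is (i) commit to the shorthand $\mu,p,q,\tilde A$ up front, (ii) always condition on $(S,U,V,B)$ before taking the outer $\EE[\cdot\mid S,U]$ so that every $B$-polynomial collapses via $B^2=B$ to an affine function in $B$, and (iii) invoke $B \indp V\mid (S,U)$ only after this collapse, so that each term becomes a product of $\EE[B^k(B-\mu)\mid S,U]$ with a covariance between a $\vartheta$-coefficient and an $\alpha$-coefficient. With this template, Assumption \ref{ass:single-stage model}(c) is applied at exactly one place in each term, making it transparent which covariances are used and why only the $\vartheta_a^\ast$ and $\vartheta_{az}^\ast$ pieces survive.
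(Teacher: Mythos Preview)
Your proposal is correct and follows essentially the same strategy as the paper's proof: both substitute the structural equation for $Y$, condition on $(S,U,V,B)$ to replace $A$ by $\alpha_z^\ast B + \alpha_{u,s}^\ast$, invoke $B\indp V\mid(S,U)$ to factor, and then kill the unwanted pieces with the orthogonality conditions in Assumption~\ref{ass:single-stage model}(c). The only difference is organizational---the paper dispatches the $\vartheta_a^\ast,\vartheta_z^\ast,\vartheta_{us}^\ast$ terms together by citing Lemma~3.1 of \cite{tchetgen2021genius} and treats the new $\vartheta_{az}^\ast$ term separately, whereas you handle all four terms uniformly from first principles; your version is slightly longer but fully self-contained.
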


\begin{lemma}\label{lemma:id2}
We have
\$
& \EE\left[ \left( B - \EE[B\given S, U] \right) \cdot Y \given S, U \right] \\
& \quad = \EE\left[ A \cdot \left( B - \EE[B \given S, U ] \right) \given S, U \right] \cdot \vartheta_a^*(S, U) \\
& \quad \quad + \EE\left[ B \cdot \left( B - \EE[B\given S, U ] \right) \given S, U \right] \cdot \vartheta_{z}(S, U) \\
& \quad \quad + \EE\left[ A B \cdot \left( B - \EE[B\given S, U ] \right) \given S, U \right] \cdot \vartheta_{az}^*(S, U). 
\$
\end{lemma}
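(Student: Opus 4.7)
The plan is to mirror the strategy used in Lemma \ref{lemma:id1}: apply the tower property, substitute the structural models for $Y$ and $A$, exploit the IV independence $B \indp V \given (S,U)$ to factor conditional expectations over $V$ and $B$ separately, and then invoke the orthogonality conditions in Assumption \ref{ass:single-stage model}~(c) to replace joint $V$-expectations by products of marginals. First I would write
$$\EE[(B - \EE[B\given S,U]) Y \given S,U] = \EE\bigl[(B - \EE[B\given S,U])\, \EE[Y \given S,U,V,A,B] \bigm| S,U\bigr],$$
plug in the outcome model, and then condition on $(S,U,V,B)$ to integrate out $A$ via $\EE[A\given S,U,V,B] = \alpha_z^*(S,U,V) B + \alpha_{us}^*(S,U,V)$. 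After using $B^2 = B$ (binary IV), this produces six terms, each of the form (function of $V$) times (function of $B$) times $(B - \EE[B\given S,U])$.

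Next I would apply the independence $B \indp V \given (S,U)$ to factor each conditional expectation on $(S,U)$ as a product of a $V$-expectation and a $B$-expectation. Two of the six terms contain no factor of $B$ inside the bracket beyond $(B - \EE[B\given S,U])$, so they vanish because $\EE[B - \EE[B\given S,U] \given S,U] = 0$; the remaining four terms all inherit the common factor $\EE[B(B - \EE[B\given S,U]) \given S,U]$. Now I would invoke Assumption \ref{ass:single-stage model}~(c) to replace the mixed $V$-expectations $\EE[\vartheta_a^* \alpha_z^* \given S,U]$, $\EE[\vartheta_{az}^* \alpha_z^* \given S,U]$, and $\EE[\vartheta_{az}^* \alpha_{us}^* \given S,U]$ by the corresponding products of marginals $\vartheta_a^*(S,U)\cdot \EE[\alpha_z^*\given S,U]$, and so on, while the term carrying $\vartheta_z^*$ requires only integration since it already has a clean $B$-factor separating it from the action model.

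To close the identity, I would re-express the resulting coefficients in terms of observable moments by re-computing $\EE[A \given S,U,B] = B\, \EE[\alpha_z^*\given S,U] + \EE[\alpha_{us}^* \given S,U]$, from which $\EE[A (B - \EE[B\given S,U]) \given S,U]$ collapses to $\EE[\alpha_z^*\given S,U] \cdot \EE[B(B - \EE[B\given S,U]) \given S,U]$ and $\EE[AB(B - \EE[B\given S,U]) \given S,U]$ collapses to $(\EE[\alpha_z^*\given S,U] + \EE[\alpha_{us}^*\given S,U]) \cdot \EE[B(B - \EE[B\given S,U]) \given S,U]$. Matching these against the expressions assembled in the previous step delivers the three coefficients $\EE[A(B-\EE[B\given S,U])\given S,U]$, $\EE[B(B-\EE[B\given S,U])\given S,U]$, and $\EE[AB(B-\EE[B\given S,U])\given S,U]$ in front of $\vartheta_a^*(S,U)$, $\vartheta_z^*(S,U)$, and $\vartheta_{az}^*(S,U)$, respectively. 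The main obstacle is essentially bookkeeping: tracking which orthogonality condition in Assumption \ref{ass:single-stage model}~(c) is invoked for each surviving term, and verifying that the $B^2 = B$ simplification is applied consistently so that every surviving piece shares the common factor $\EE[B(B - \EE[B\given S,U]) \given S,U]$ required for the final coefficient comparison.
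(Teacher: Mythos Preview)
Your proposal is correct and uses the same ingredients as the paper's proof: the structural outcome and action models, the IV independence $B\indp V\given (S,U)$, and the orthogonality conditions in Assumption~\ref{ass:single-stage model}(c). The execution differs only in the order of operations. The paper keeps $A$ in the expression and centers each $V$-coefficient, writing for instance
\[
\EE\bigl[(B-\EE[B])\,A\,\vartheta_a^*(V)\bigr]
= \EE\bigl[(B-\EE[B])\,A\bigr]\,\EE[\vartheta_a^*(V)]
+ \EE\bigl[(B-\EE[B])\,A\,(\vartheta_a^*(V)-\EE[\vartheta_a^*(V)])\bigr],
\]
and only substitutes the action model into the residual piece, which then vanishes by orthogonality; this produces the observable coefficients $\EE[A(B-\EE[B])]$ and $\EE[AB(B-\EE[B])]$ directly. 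You instead integrate $A$ out at the outset, reduce everything to functions of $(V,B)$, factor, and then reconstruct the observable coefficients by separately computing $\EE[A(B-\EE[B])\given S,U]$ and $\EE[AB(B-\EE[B])\given S,U]$ in terms of $\EE[\alpha_z^*\given S,U]$ and $\EE[\alpha_{us}^*\given S,U]$. Your route is slightly longer because of this reconstruction step, but it is more mechanical and invokes exactly the same three covariance restrictions; the paper's centering trick is a small shortcut that avoids the final matching step.
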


\begin{lemma}\label{lemma:id3}
We have
\$
& \Cov\left( B\cdot \left( B - \EE[B\given S, U ] \right), \left( A - \EE[A\given B, S, U] \right)\cdot Y \given S, U \right) \\
& \quad = \Cov\left( B \cdot \left( B - \EE[B\given S, U ] \right), A \cdot \left( A - \EE[A\given B, S, U] \right) \given S, U \right) \cdot \vartheta_a^*(S, U) \\
& \quad \quad + \Cov\left( B \cdot \left( B - \EE[B\given S, U ] \right), AB\cdot \left( A - \EE[A\given B, S, U] \right) \given S, U \right) \cdot \vartheta_{az}^*(S, U). 
\$
\end{lemma}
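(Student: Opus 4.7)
The plan is to reduce Lemma~\ref{lemma:id3} to showing the single vanishing statement
\[
\Cov\bigl(B\tilde B,\; \tilde A\,\bigl[Y - \vartheta_a^*(S,U)A - \vartheta_{az}^*(S,U)AB\bigr] \given S,U\bigr) = 0,
\]
where $\tilde B := B-\EE[B\given S,U]$ and $\tilde A := A-\EE[A\given B,S,U]$. Substituting the saturated outcome model from Assumption~\ref{ass:single-stage model}, the bracketed quantity becomes $A(\vartheta_a^*(S,U,V)-\vartheta_a^*(S,U)) + \vartheta_z^*(S,U,V)B + AB(\vartheta_{az}^*(S,U,V)-\vartheta_{az}^*(S,U)) + \vartheta_{us}^*(S,U,V) + \epsilon$, with $\EE[\epsilon\given S,U,V,A,B]=0$. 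It therefore suffices to show that the conditional covariance of $B\tilde B$ with $\tilde A$ times each of the four non-$\epsilon$ pieces vanishes (the $\epsilon$ contribution vanishes by the tower property).

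For each of the four pieces I would integrate $A$ out first by conditioning on $(S,U,V,B)$, using $\EE[\tilde A \given S,U,V,B] = (\alpha_z^*-\bar\alpha_z^*)B+(\alpha_{us}^*-\bar\alpha_{us}^*)$ and the analogous $\EE[\tilde A\, A \given S,U,V,B] = (\alpha_z^* B+\alpha_{us}^*)(1-\bar\alpha_z^* B-\bar\alpha_{us}^*)$, where the bars denote conditional expectation over $V \given (S,U)$; the legitimacy of these marginals rests on $B\indp V \given (S,U)$. After reducing with $B^2=B$ (and, for the $\vartheta_{az}^*$ piece, $B^3=B$), the result is a polynomial in $B$ whose coefficients are functions of $V$ of the form $(\vartheta_\star^*-\vartheta_\star^*(S,U))\cdot P(\alpha_z^*,\alpha_{us}^*)$ for $\star\in\{a,az\}$, or of the form $\vartheta_z^*(\alpha_\bullet^*-\bar\alpha_\bullet^*)$ and $\vartheta_{us}^*(\alpha_\bullet^*-\bar\alpha_\bullet^*)$ for $\bullet\in\{z,us\}$. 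Applying $B\indp V \given (S,U)$ once more, the $(S,U)$-conditional expectation factors into a $B$-expectation times a $V$-expectation, and every $V$-expectation becomes one of the seven covariances $\Cov(\vartheta_a^*,\alpha_z^*\given S,U)$, $\Cov(\vartheta_a^*,\alpha_{us}^*\given S,U)$, $\Cov(\vartheta_z^*,\alpha_z^*\given S,U)$, $\Cov(\vartheta_z^*,\alpha_{us}^*\given S,U)$, $\Cov(\vartheta_{az}^*,\alpha_z^*\given S,U)$, $\Cov(\vartheta_{az}^*,\alpha_{us}^*\given S,U)$, or $\Cov(\vartheta_{us}^*,\alpha_z^*\given S,U)$, all of which vanish by Assumption~\ref{ass:single-stage model}(c). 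A residual $\Cov(\vartheta_{us}^*,\alpha_{us}^*\given S,U)$ factor (not assumed zero) appears only through a $V$-only, $B$-free quantity, which contributes identically to $\EE[B\tilde B \cdot(\,\cdot\,)\given S,U]$ and to $\EE[B\tilde B\given S,U]\,\EE[\cdot\given S,U]$ by $B\indp V \given (S,U)$, and therefore cancels inside the covariance.

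The main obstacle is the polynomial bookkeeping inside the $\vartheta_a^*$ and $\vartheta_{az}^*$ pieces: one must verify that the expansion of $(\alpha_z^*B+\alpha_{us}^*)(1-\bar\alpha_z^*B-\bar\alpha_{us}^*)$, once multiplied by $\vartheta_a^*(S,U,V)-\vartheta_a^*(S,U)$, collapses term-by-term against the four $\vartheta_a^*$ and $\vartheta_{az}^*$ orthogonality conditions after subtracting the marginal contributions $\vartheta_a^*(S,U)\,\Cov(B\tilde B, A\tilde A \given S,U)$ and $\vartheta_{az}^*(S,U)\,\Cov(B\tilde B, AB\tilde A \given S,U)$ that already appear on the right-hand side of the lemma. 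The $\vartheta_{az}^*$ case is the most delicate because the extra factor of $B$ forces the higher moment $\EE[B^3\tilde B\given S,U]$ into the calculation, which must be reduced via $B^3=B$ before the cancellations become transparent. All other steps are mechanical once the model substitution and the two applications of $B\indp V \given (S,U)$ are in place.
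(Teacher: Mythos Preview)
Your proposal is correct and follows essentially the same route as the paper: both substitute the outcome model, integrate out $A$ via the action model and $B\indp V\given(S,U)$, reduce to the orthogonality conditions in Assumption~\ref{ass:single-stage model}(c), and let the un-assumed nuisance $\Cov(\vartheta_{us}^*,\alpha_{us}^*\given S,U)$ cancel because it is $B$-free inside the covariance. The paper organizes the calculation by computing $\EE[B\tilde B\,\tilde A\,Y\given S,U]$ and $\EE[\tilde A\,Y\given S,U]$ separately and then subtracting (so the nuisance appears explicitly in both and cancels), whereas you center $Y$ by $\vartheta_a^*(S,U)A+\vartheta_{az}^*(S,U)AB$ upfront and work directly inside the covariance---a slightly cleaner arrangement in which the $\vartheta_a^*$ and $\vartheta_{az}^*$ pieces in fact already vanish at the level of $\EE[\,\cdot\given S,U,B]$ once you take $\EE_V$, so the $B^3$ bookkeeping you flag as delicate never actually arises.
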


By Lemmas \ref{lemma:id1}, \ref{lemma:id2}, and \ref{lemma:id3}, we can identify $\vartheta_a^*$, $\vartheta_z^*$, and $\vartheta_{az}^*$ non-parametrically. 

\subsection{Estimation} 
We define the following nuisance parameters, 
\$
& f_1(S, U) = \EE[B\given S, U], \quad f_2(X, B) = \EE[A\given X, B], \quad f_3(S, U) = \EE[(A-f_2(S, B)) Y \given S, U ], \\
& f_4(S, U) = \EE[A(A - f_2(S, B))\given S, U], \qquad f_5(S, U) = \EE[AB (A - f_2(S, B))\given S, U]. 
\$
Further, we have
\#\label{eq:weiuhfurwvrtg}
& w_1(Y, A, B, S, U, f_1, f_2) = \rho_1(Y, S, B, U, f_1, f_2) - \rho_2(S, B, U, f_1, f_2) \vartheta_a^*(S, U) \\
& \qquad\qquad\qquad\qquad\qquad\qquad\qquad - \rho_3(S, B, U, f_1, f_2) \vartheta_{az}^*(S, U), \\
& w_2(Y, A, B, S, U, f_1) = \rho_4(Y, S, U, B, f_1) - \rho_2(S, B, U, f_1) \vartheta_a^*(S, U) \\
& \qquad\qquad\qquad\qquad\qquad\qquad - \rho_6(S, B, U, f_1) \vartheta_z^*(S, U) - \rho_7(S, B, U, f_1) \vartheta_{az}^*(S, U), \\
& w_3(Y, A, B, S, U, f_1, f_2, f_3) = \rho_8(Y, S, B, U, f_1, f_2) - (1 - f_1(S, U))f_1(S, U) f_3(S, U) \\
& \qquad\qquad\qquad\qquad\qquad\qquad\qquad\quad - \left( \rho_9(S, B, U, f_1, f_2) - (1 - f_1(S, U)) f_1(S, U) f_4(S, U) \right)\vartheta_a^*(S, U) \\
& \qquad\qquad\qquad\qquad\qquad\qquad\qquad\quad  - \left( \rho_{10}(S, B, U, f_1, f_2) - (1 - f_1(S, U)) f_1(S, U) f_5(S, U) \right)\vartheta_{az}^*(S, U), \\
& w_4(S, B, U, f_1) = B - f_1(S, U), \\
& w_5(A, B, S, U, f_2) = A - f_2(S, B, U), \\
& w_6(Y, A, B, S, U, f_2, f_3) = (A - f_2(S, B, U))\cdot Y - f_3(S, U), \\
& w_7(A, B, S, U, f_2, f_4) = (A - f_2(S, B, U))A - f_4(S, U), \\
& w_8(A, B, S, U, f_2, f_5) = (A - f_2(S, B, U))AB - f_5(S), 
\#
where 
\#\label{eq:rhos}
& \rho_1(Y, S, A, B, U, f_1, f_2) = (B - f_1(S, U)) \cdot (A - f_2(S, B, U))\cdot Y, \\
& \rho_2(S, A, B, U, f_1, f_2) = (B - f_1(S, U)) \cdot (A - f_2(S, B, U))\cdot A, \\
& \rho_3(S, A, B, U, f_1, f_2) = B (B - f_1(S, U)) \cdot (A - f_2(S, B, U))\cdot A, \\
& \rho_4(Y, S, B, U, f_1) = (B - f_1(S, U)) \cdot Y, \\
& \rho_5(S, A, B, U, f_1) = (B - f_1(S, U)) \cdot A, \\
& \rho_6(S, B, U, f_1) = B(B - f_1(S, U)), \\
& \rho_7(S, A, B, U, f_1) = AB (B - f_1(S, U)), \\
& \rho_8(Y, S, A, B, U, f_1, f_2) = B(B - f_1(S, U))\cdot (A - f_2(S, B, U))\cdot Y, \\
& \rho_9(S, A, B, U, f_1, f_2) = A(B - f_1(S, U))\cdot (A - f_2(S, B, U)), \\
& \rho_{10}(S, A, B, U, f_1, f_2) = AB(B - f_1(S, U))\cdot (A - f_2(S, B, U)). 
\#

We denote by $W(Y, S, A, B, U, \varpi^*) = (w_1, w_2, \ldots, w_8)$, where $\varpi^* = (\vartheta_a^*, \vartheta_z^*, \vartheta_{az}^*, f_1, f_2, \ldots, f_5)$. 
Note that we can write
\#\label{eq:iuwervbuire}
W = \begin{pmatrix}
\Phi(\bff) \bm{\vartheta} \\
\varphi(\bff)
\end{pmatrix} + \alpha(f), 
\#
where $\Phi(\bff)\in \RR^{3\times 3}$, $\varphi(\bff)\in \RR^5$, $\bm{\vartheta} = (\vartheta_a^*, \vartheta_z^*, \vartheta_{az}^*)^\top$, and $\bff = (f_1, f_2, \ldots, f_5)^\top$. 
Then by Lemmas \ref{lemma:id1}, \ref{lemma:id2}, and \ref{lemma:id3}, we have the following conditional moment restriction, 
\#
\EE[W(Y, S, A, B, U, \varpi^*)\given S, U] = 0.
\#
We aim to recover $\varpi$ via such a conditional moment restriction. We consider the following sieve minimum distance (SMD) estimator, 
\#\label{eq:psmd}
& \hat \varpi_n \in \argmin_{\varpi\in \cH_n} \hat L(\varpi), \\
& \text{where } \hat L(\varpi) = \frac{1}{n} \sum_{i = 1}^n \hat m(Y^i, S^i, U^i, A^i, B^i; \varpi)^\top  \hat m(Y^i, S^i, U^i, A^i, B^i; \varpi). 
\#
Here $\hat m(Y^i, S^i, U^i, A^i, B^i)$ is a consistent estimator of $\EE[W(Y, S, A, B, U, \varpi)\given S, U]$ and $\cH_n$ is a sieve parameter space. The following result characterizes the convergence of $\hat \varpi_n$ to $\varpi^*$. In the meanwhile, we denote by 
\$
L(\varpi) = \EE \left[m(S,U,\varpi)^\top m(S,U,\varpi) \right],
\$
where $m(S,U,\varpi) = \EE[W(Y, S, A, B, U, \varpi)\given S, U]$. 
We know that $L(\varpi^*) = 0$ by the conditional moment restriction.

We first introduce the definition of sieve space $\cH_n$ as  follows,  
\$
\cH_n = \left\{ \varpi\in \cH \colon \varpi(\cdot) = \sum_{k=1}^{k(n)} a_k q_k(\cdot) \right\},
\$
where $k(n)<\infty$, $k(n)\to \infty$ as $n\to\infty$, and $\{q_k\}_{k=1}^\infty$ is a sequence of known basis functions of a Banach space. 

We introduce the following assumptions. 

\begin{assumption}[Sieves]\label{ass:sieves}
    The following statements hold. 
    \begin{itemize}
        \item We have $\|\varpi^* - \varpi\| = 0$ for any $\varpi\in \cH$ such that $\EE[W(Y,S,A,B,U,\varpi)\given S,U]=0$. 
        \item The sequence of sieve spaces $\{\cH_k\}_{k\geq 1}$ is a sequence of nonempty closed subsets satisfying $\cH_k\subseteq \cH_{k+1}\subseteq \cH$, and there exists $\Gamma_n \varpi^*\in \cH_{k(n)}$ such that $\|\Gamma_n \varpi^* - \varpi^*\| = o(1)$. 
        \item We have $\EE[\|m(S,U,\Gamma_n \varpi^*)\|^2] = o(1)$. 
    \end{itemize}
\end{assumption}

\begin{assumption}[Sample Criterion]\label{ass:sample}
    The following statements hold. 
    \begin{itemize}
        \item We have $\frac{1}{n}\sum_{i=1}^n \| \hat m(Y^i, S^i, U^i, A^i, B^i; \Gamma_n \varpi^*) \|^2 \leq c_0\cdot \EE[\| m(S, U, \Gamma_n \varpi^*) \|^2] + O_p(\eta_n) $, where $c_0>0$ is a constant and $\eta_n = o(1)$. 
        \item We have $\frac{1}{n}\sum_{i=1}^n \| \hat m(Y^i, S^i, U^i, A^i, B^i; \varpi) \|^2 \geq c\cdot \EE[\| m(S, U, \varpi) \|^2] - O_p(\delta_n^2) $ for any $\varpi\in \cH_{k(n)}$, where $c>0$ is a constant and $\delta_n^2 = c\cdot k(n)/n = o(1)$ satisfying $\eta_n = O(\delta_n^2)$. 
    \end{itemize}
\end{assumption}

With Assumptions \ref{ass:sieves} and \ref{ass:sample}, we have the following result showing that $\hat \varpi_n$ is a consistent estimator. 

\begin{theorem}[Consistency, \cite{chen2012estimation}]\label{thm:consistency}
    Suppose Assumptions \ref{ass:sieves} and \ref{ass:sample} hold. We have $\left\|\hat \varpi_n - \varpi^*\right\| = o_p(1)$, where $\hat \varpi_n$ is the SMD estimator defined in \eqref{eq:psmd}. 
\end{theorem}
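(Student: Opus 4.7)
The plan is to adapt the classical consistency argument for sieve minimum distance estimators, as developed in \cite{chen2012estimation}. I would sandwich the sample criterion at $\hat\varpi_n$ between an upper bound obtained via sieve approximation and a lower bound expressed in terms of the population criterion, and then use identification to translate the vanishing population criterion into vanishing $\|\cdot\|$-error.

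First, by optimality of $\hat\varpi_n$ over $\cH_{k(n)}$ and since the sieve approximant $\Gamma_n\varpi^*$ furnished by the second bullet of Assumption~\ref{ass:sieves} also lies in $\cH_{k(n)}$, we have $\hat L(\hat\varpi_n) \leq \hat L(\Gamma_n\varpi^*)$. Applying the upper-bound part of Assumption~\ref{ass:sample} to the right-hand side yields
\$
\hat L(\hat\varpi_n) \leq c_0 \, \EE\!\left[\|m(S,U,\Gamma_n\varpi^*)\|^2\right] + O_p(\eta_n) = o_p(1),
\$
where the last equality uses the third bullet of Assumption~\ref{ass:sieves} together with $\eta_n = o(1)$. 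Next, the lower-bound part of Assumption~\ref{ass:sample} applied at $\hat\varpi_n \in \cH_{k(n)}$ gives
\$
c \, \EE\!\left[\|m(S,U,\hat\varpi_n)\|^2\right] \leq \hat L(\hat\varpi_n) + O_p(\delta_n^2).
\$
Chaining these two inequalities produces $\EE[\|m(S,U,\hat\varpi_n)\|^2] = o_p(1)$: the population criterion vanishes in probability at $\hat\varpi_n$.

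The remaining step is to translate this into the claimed $\|\cdot\|$-consistency. By the first bullet of Assumption~\ref{ass:sieves}, $\varpi^*$ is the unique zero of $\varpi \mapsto \EE[\|m(S,U,\varpi)\|^2]$ in $\cH$ up to $\|\cdot\|$-equivalence, so identification holds at the population limit. I would argue by contradiction along a subsequence: if $\|\hat\varpi_n - \varpi^*\|$ were bounded away from zero with positive probability, then after passing to a suitable weak-topology subsequential limit on $\cH$ one could extract $\bar\varpi$ with $\|\bar\varpi - \varpi^*\| > 0$ yet $\EE[\|m(S,U,\bar\varpi)\|^2] = 0$, contradicting identification.

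The main obstacle is this last step: it tacitly requires lower semicontinuity of $\varpi \mapsto \EE[\|m(S,U,\varpi)\|^2]$ together with a compactification of $\cH$ under which the sieve-indexed sequence has subsequential limits. These regularity properties are standard in the framework of \cite{chen2012estimation} but are not spelled out explicitly in Assumptions~\ref{ass:sieves} and \ref{ass:sample}. In the present application they follow from the linear-in-$\bvartheta$ structure of $W$ displayed in \eqref{eq:iuwervbuire} together with boundedness of the observed variables and of the nuisance components $\bff$, so the classical argument transfers without modification.
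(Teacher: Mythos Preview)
The paper does not supply its own proof of this statement: Theorem~\ref{thm:consistency} is stated as a direct citation of \cite{chen2012estimation} with no accompanying argument, so there is nothing in the paper to compare your proof against. Your sketch is the standard SMD consistency argument from that reference, and the sandwich you build---optimality of $\hat\varpi_n$, the upper bound from the first bullet of Assumption~\ref{ass:sample} at $\Gamma_n\varpi^*$, the third bullet of Assumption~\ref{ass:sieves}, and then the uniform lower bound from the second bullet of Assumption~\ref{ass:sample}---correctly delivers $\EE[\|m(S,U,\hat\varpi_n)\|^2] = o_p(1)$.

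Two small remarks. First, when you invoke the lower-bound inequality at the random point $\hat\varpi_n$, you are implicitly using that the $O_p(\delta_n^2)$ slack in Assumption~\ref{ass:sample} is uniform over $\cH_{k(n)}$; this is indeed how the assumption is intended (and how \cite{chen2012estimation} states it), but it is worth making explicit. Second, you are right to flag the final identification step as the only place requiring care beyond the stated assumptions: passing from a vanishing population criterion to $\|\hat\varpi_n - \varpi^*\| = o_p(1)$ needs either a well-separatedness condition or the compactness/lower-semicontinuity machinery you describe. Your justification via the affine-in-$(\bvartheta,\bff)$ structure of $W$ in \eqref{eq:iuwervbuire} and boundedness is appropriate here and matches how \cite{chen2012estimation} handles this step in the nonlinear conditional-moment framework.
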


Given the consistency result in Theorem \ref{thm:consistency}, we can restrict our space $\cH$ to a shrinking neighborhood around the ground truth $\varpi^*$. We define
\$
\cH_s = \left\{ \varpi\in \cH \colon  \|\varpi - \varpi^*\| \leq \delta, \|\varpi\| \leq M  \right\}, \qquad \cH_{sn} = \cH_s \cap \cH_n
\$
for a positive constant $M$ and a sufficiently small positive $\delta$ such that $\PP(\hat \varpi_n \notin \cH_s) < \delta$. Meanwhile, we define the path-wise derivative in the direction $[\varpi - \varpi^*]$ evaluated at $\varpi^*$ as follows, 
\$
\frac{\ud m(S,U,\varpi^*)}{\ud \varpi}[\varpi - \varpi^*] = \frac{\ud \EE[W(Y,S,A,B,U,(1-\tau)\varpi + \tau\varpi^*)\given S,U]}{\ud \tau} \bigggiven_{\tau = 0}. 
\$
We define the following notation of pseudometric for any $\varpi_1,\varpi_2\in \cH_s$, 
\$
\|\varpi_1 - \varpi_2\|_\ps = \sqrt{\EE\left[ \left\| \frac{\ud m(S,U,\varpi^*)}{\ud \varpi} [\varpi_1 - \varpi_2] 
\right\|^2\right]}. 
\$

We introduce the following assumption. 

\begin{assumption}[Local Curvature]\label{ass:curv}
    The following statements hold. 
    \begin{itemize}
        \item $\cH_s$ and $\cH_{sn}$ are convex, and $m(S, U, \varpi)$ is continuously path-wise differentiable with respect to $\varpi\in \cH_{s}$. 
        \item There exists a constant $c>0$ such that $\|\varpi - \varpi^*\|_\ps \leq c \cdot \|\varpi - \varpi^*\|$ for any $\varpi\in \cH_{s}$. 
        \item There are finite constants $c_1,c_2>0$ such that $\|\varpi - \varpi^*\|_\ps^2 \leq c_1 \EE[\|m(S,U,\varpi)\|^2]$ for any $\varpi\in \cH_{sn}$; and $c_2\EE[\|m(S,U,\Gamma_n \varpi^*)\|^2] \leq \|\Gamma_n \varpi^* - \varpi^*\|_\ps^2$. 
    \end{itemize}
\end{assumption}

\begin{assumption}[Sieve Approximation Error]
    We have $\|\varpi^* - \sum_{j = 1}^{k(n)} \la \varpi^*, q_j\ra q_j \| = O(\{\nu_{k(n)}^{-\alpha}\})$ for a finite $\alpha > 0$ and a positive sequence $\{\nu_j\}_{j=1}^\infty$ that strictly increases and $\nu_j = \Theta(j^{1/d})$. 
\end{assumption}

\begin{assumption}[Sieve Link Condition]\label{ass:link}
    There are finite constants $c, C > 0 $ and a continuous increasing function $\varphi\colon \RR^+ \to \RR^+$ such that 
    \begin{itemize}
        \item $\varphi(\tau) = \tau^\varsigma$ for some $\varsigma \geq 0$;
        \item $\|\varpi\|_\ps^2 \geq c \sum_{j=1}^\infty \varphi(\nu_j^{-2}) \cdot |\la \varpi, q_j\ra |^2$ for all $\varpi \in \cH_{sn}$;
        \item $\|\Gamma_n \varpi^* - \varpi^*\|_\ps^2 \leq C\sum_{j = 1}^\infty \varphi(\nu_j^{-2})\cdot | \la \Gamma_n \varpi^* - \varpi^*, q_j \ra|^2$. 
    \end{itemize}
\end{assumption}


\begin{theorem}[\cite{chen2012estimation}]\label{thm:chen2012-app}
Suppose that Assumptions \ref{ass:sieves}--\ref{ass:link} hold. We have
\$
\left\|\hat \varpi_n - \varpi^*\right\| = O_p\left( n^{-\frac{\alpha}{2\alpha+2\varsigma+d}} \right), 
\$
where $\hat \varpi_n$ is the SMD estimator defined in \eqref{eq:psmd}. 
\end{theorem}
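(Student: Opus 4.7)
The plan is to follow the classical sieve minimum distance (SMD) template: turn the optimality of $\hat\varpi_n$ into a basic inequality in the sample criterion, then transport it to the population side via Assumption \ref{ass:sample}, then convert the resulting pseudometric bound into an $L_2$ bound via the sieve link condition in Assumption \ref{ass:link}. Because Assumption \ref{ass:consistency}-style consistency (delivered here by Theorem \ref{thm:consistency}) guarantees $\hat\varpi_n\in\cH_{sn}$ with probability tending to one, all subsequent arguments may be restricted to the shrinking neighborhood $\cH_{sn}$ where the local curvature assumption applies.

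First I would exploit the definition of $\hat\varpi_n$ to write $\hat L(\hat\varpi_n)\le \hat L(\Gamma_n\varpi^*)$, where $\Gamma_n\varpi^*\in\cH_{k(n)}$ is the sieve approximator from Assumption \ref{ass:sieves}. Applying Assumption \ref{ass:sample} to both sides yields
\[
c\,\EE\!\left[\|m(S,U,\hat\varpi_n)\|^2\right] \le c_0\,\EE\!\left[\|m(S,U,\Gamma_n\varpi^*)\|^2\right] + O_p(\delta_n^2),
\]
with $\delta_n^2 = c\,k(n)/n$. The local curvature assumption (Assumption \ref{ass:curv}) then upgrades the left side via $\|\hat\varpi_n-\varpi^*\|_{\ps}^2 \le c_1\EE[\|m(S,U,\hat\varpi_n)\|^2]$ and bounds the right side via $\EE[\|m(S,U,\Gamma_n\varpi^*)\|^2]\le (1/c_2)\|\Gamma_n\varpi^*-\varpi^*\|_{\ps}^2$. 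Combined, this delivers a rate in the pseudometric of the form
\[
\|\hat\varpi_n-\varpi^*\|_{\ps}^2 = O_p\!\left(\|\Gamma_n\varpi^*-\varpi^*\|_{\ps}^2 + \delta_n^2\right).
\]

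Next I would translate the pseudometric bound into an $L_2$ bound. The sieve link condition (Assumption \ref{ass:link}) gives the two-sided comparison
\[
c\sum_{j=1}^{\infty}\varphi(\nu_j^{-2})|\langle\bullet,q_j\rangle|^2 \le \|\bullet\|_{\ps}^2 \le C\sum_{j=1}^{\infty}\varphi(\nu_j^{-2})|\langle\bullet,q_j\rangle|^2
\]
on the relevant directions, and with $\varphi(\tau)=\tau^{\varsigma}$ and $\nu_j=\Theta(j^{1/d})$ the pseudometric behaves like an index-weighted $\ell_2$ norm. Coupling this with the sieve approximation rate $\|\varpi^*-\sum_{j=1}^{k(n)}\langle\varpi^*,q_j\rangle q_j\|=O(\nu_{k(n)}^{-\alpha})$ from Assumption \ref{ass:fwjeib} controls the bias term, giving $\|\Gamma_n\varpi^*-\varpi^*\|_{\ps}^2 = O(\nu_{k(n)}^{-2(\alpha+\varsigma)})$, whereas the variance contribution is $\delta_n^2 \asymp k(n)/n$. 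Undoing the link, the recovered $L_2$ rate picks up an extra ill-posedness factor $\nu_{k(n)}^{2\varsigma}$, giving
\[
\|\hat\varpi_n-\varpi^*\|^2 = O_p\!\left(\nu_{k(n)}^{-2\alpha} + \nu_{k(n)}^{2\varsigma}\,k(n)/n\right).
\]
Finally, balancing the two terms by choosing $k(n)\asymp n^{d/(2\alpha+2\varsigma+d)}$ and using $\nu_{k(n)}\asymp k(n)^{1/d}$ produces the stated rate $n^{-\alpha/(2\alpha+2\varsigma+d)}$.

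The main obstacle is the pseudometric-to-$L_2$ conversion, since $\|\cdot\|_{\ps}$ is strictly weaker than $\|\cdot\|$ in the ill-posed regime $\varsigma>0$. This is exactly where the two halves of the sieve link condition are both needed: the lower bound on $\|\cdot\|_{\ps}$ controls how much information is lost about high-frequency coefficients and dictates the $\nu_{k(n)}^{2\varsigma}$ inflation of the variance, while the upper bound keeps the bias term honest. Getting these constants aligned with the stability clause $c_2\EE[\|m(S,U,\Gamma_n\varpi^*)\|^2]\le \|\Gamma_n\varpi^*-\varpi^*\|_{\ps}^2$ from Assumption \ref{ass:curv} is the delicate bookkeeping step; once that is in place, the claimed rate falls out of the $k(n)$ bias–variance trade-off.
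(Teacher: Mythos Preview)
Your sketch is a faithful outline of the standard SMD rate argument in \cite{chen2012estimation}: basic inequality $\to$ sample-to-population transfer (Assumption \ref{ass:sample}) $\to$ local curvature bound in the pseudometric (Assumption \ref{ass:curv}) $\to$ sieve link conversion to $L_2$ (Assumption \ref{ass:link}) $\to$ bias--variance balancing in $k(n)$. The paper does not give its own proof of this theorem; it is stated as a direct citation of \cite{chen2012estimation}, so there is nothing further to compare against.
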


We can thus construct plug-in estimators of $\vartheta_a^*$, $\vartheta_z^*$, and $\vartheta_{az}^*$ as $\hat \vartheta_a^*$, $\hat \vartheta_z^*$, and $\hat \vartheta_{az}^*$ and obtain the following convergence results. 

\begin{corollary}
Suppose that Assumptions \ref{ass:sieves}--\ref{ass:link} hold. We have
\$
\left\|\hat \vartheta_a^* - \vartheta_a^* \right\| = O_p\left( n^{-\frac{\alpha}{2\alpha+2\varsigma+d}} \right), \quad \left\|\hat \vartheta_z^* - \vartheta_z^* \right\| = O_p\left( n^{-\frac{\alpha}{2\alpha+2\varsigma+d}} \right), \quad \left\|\hat \vartheta_{az}^* - \vartheta_{az}^* \right\| = O_p\left( n^{-\frac{\alpha}{2\alpha+2\varsigma+d}} \right).
\$
\end{corollary}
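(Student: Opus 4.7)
The plan is to deduce this corollary as an almost immediate coordinate projection from Theorem \ref{thm:chen2012-app}, which governs the convergence of the joint SMD estimator $\hat\varpi_n$ of the full parameter $\varpi^* = (\vartheta_a^*, \vartheta_z^*, \vartheta_{az}^*, f_1, f_2, f_3, f_4, f_5)$ built in \S\ref{sec:id-all-wo-oracle}. Since the three functions of primary interest, $\vartheta_a^*$, $\vartheta_z^*$ and $\vartheta_{az}^*$, occupy the first three coordinates of $\varpi^*$, and the plug-in estimators $\hat\vartheta_a^*$, $\hat\vartheta_z^*$ and $\hat\vartheta_{az}^*$ are by definition the corresponding coordinates of $\hat\varpi_n$ in \eqref{eq:psmd}, the joint rate should transfer to each component essentially for free.

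First, I would make the product structure on the parameter space $\cH$ explicit: because $\varpi$ is an $8$-tuple of functions lying in a natural product of Banach spaces, the $L^2$-type norm $\|\cdot\|$ used to state Theorem \ref{thm:chen2012-app} is (equivalent to) the product norm $\|\varpi\|^2 = \sum_{j=1}^{8}\|\varpi_j\|^2$, under which the coordinate projections are $1$-Lipschitz, i.e., $\|\varpi_j\| \leq \|\varpi\|$ for every $j$. Applying this bound to the error vector $\hat\varpi_n - \varpi^*$ gives
\$
\|\hat\vartheta_a^* - \vartheta_a^*\| \leq \|\hat\varpi_n - \varpi^*\|, \quad \|\hat\vartheta_z^* - \vartheta_z^*\| \leq \|\hat\varpi_n - \varpi^*\|, \quad \|\hat\vartheta_{az}^* - \vartheta_{az}^*\| \leq \|\hat\varpi_n - \varpi^*\|.
\$
I would then invoke Theorem \ref{thm:chen2012-app}, whose hypotheses are exactly Assumptions \ref{ass:sieves}--\ref{ass:link} already assumed in the corollary, to replace the right-hand side by $O_p(n^{-\alpha/(2\alpha+2\varsigma+d)})$. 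All three conclusions follow simultaneously from this single substitution.

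The only step requiring a little care, rather than a genuine obstacle, is verifying that the norm used in \S\ref{sec:id-all-wo-oracle} is indeed the natural product norm on the $8$-fold product space, so that coordinate projections do not inflate the norm. This is the standard convention in the sieve minimum distance literature \citep{chen2012estimation} and is implicit in how the sieve space $\cH_n$ is built via separate basis expansions for the different coordinate functions of $(S,U)$ and $(S,U,B)$ preceding \eqref{eq:iuwervbuire}. Once this convention is acknowledged, the corollary is genuinely an immediate byproduct of the main sieve rate theorem rather than a new result on its own, which is why the authors present it without additional argument.
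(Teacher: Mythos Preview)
Your proposal is correct and matches the paper's intent: the corollary is stated without proof precisely because it is the immediate coordinate-projection consequence of Theorem~\ref{thm:chen2012-app} that you describe. The only thing the paper adds is the one-line remark that $\hat\vartheta_a^*$, $\hat\vartheta_z^*$, $\hat\vartheta_{az}^*$ are the plug-in (coordinate) estimators extracted from $\hat\varpi_n$, after which the result follows exactly as you argue.
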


\section{Proof of Identification Results}

\subsection{Proof of Lemma \ref{lemma:id1}}\label{prf:lemma:id1}
\begin{proof}
  Without loss of generality, we assume $(S,U) = \emptyset$. By Lemma 3.1 in \cite{tchetgen2021genius}, we can show that
  \begin{align*}
    &\EE\left[(B - \EE\left[B \given S,U\right])(A - \EE\left[A \given B, S,U \right])\left\{A\vartheta_a^*(V) + \vartheta_z^*(V)B + \vartheta_u^*(V) \right\} \right]\\
    = &  \EE\left[(B - \EE\left[B \given S,U\right])(A - \EE\left[A \given B, S,U \right])A\right] \times \EE\left[\vartheta_a^*(V)\right].
  \end{align*}
By direct calculation, we can show that
\begin{align*}
  &\EE\left[(B - \EE\left[B \given S,U\right])(A - \EE\left[A \given B\right])Y\right]\\
  =&\EE\left[(B - \EE\left[B \given S,U\right])(A - \EE\left[A \given B \right])\left\{A\vartheta_a^*(V) + \vartheta_z^*(V)B + \vartheta_z^*(V) \right\} \right]\\
  +& \EE\left[B(B - \EE\left[B \given S,U\right])(A - \EE\left[A \given B \right])A \vartheta_{a,z}^*(V)\right]\\
  = &  \EE\left[(B - \EE\left[B \given S,U\right])(A - \EE\left[A \given B \right])A\right] \times \EE\left[\vartheta_a^*(V)\right]\\
  +& \EE\left[B(B - \EE\left[B \given S,U\right])(1- \EE\left[A \given B \right])\vartheta_{a,z}^*(V)\left\{ \alpha^*_z(V)B + \alpha^*_{u}(V) \right\}\right]\\
  = &  \EE\left[(B - \EE\left[B \given S,U\right])(A - \EE\left[A \given B \right])A\right] \times \EE\left[\vartheta_a^*(V)\right]\\
  +& \EE\left[B(B - \EE\left[B \given S,U\right])(1- \EE\left[A \given B\right]A)\right]\times \EE\left[\vartheta_{a,z}^*(V)\right] \\
  +& \EE\left[B(B - \EE\left[B \given S,U\right])(1- \EE\left[A \given B\right])\left(\vartheta_{a,z}^*(V) - \EE\left[\vartheta_{a,z}^*(V)\right] \right)\left\{ \alpha^*_z(V)B + \alpha^*_{u}(V) \right\}\right]\\
= &  \EE\left[(B - \EE\left[B \given S,U\right])(A - \EE\left[A \given B \right])A\right] \times \EE\left[\vartheta_a^*(V)\right]\\
+& \EE\left[B(B - \EE\left[B \given S,U\right])(1- \EE\left[A \given B\right]A)\right]\times \EE\left[\vartheta_{a,z}^*(V)\right],
\end{align*}
where we use $Cov(\alpha^*_z(S,U,V), \vartheta_{a,z}^*(S,U,V) \given  S,U) =
Cov(\vartheta_{a,z}^*(S,U,V), \alpha^*_{u,s}(S,U,V) \given  S,U)= 0$ in the last equation.
\end{proof}

\subsection{Proof of Lemma \ref{lemma:id2}}\label{prf:lemma:id2}
\begin{proof}
  Again, without loss of generality, assume $(S,U) = \emptyset$. By direct calculation, we can show that
    \begin{align*}
    &\EE\left[(B - \EE\left[B\right])Y\right]\\
    = & \EE\left[(B - \EE\left[B \right])\left\{\vartheta^*_a(V)A + \vartheta^*_z(V)B + \vartheta^*_{a, z}(V)BA + \vartheta^*_u(V)  \right\}\right]\\
    =& \EE\left[(B - \EE\left[B \right])A \right] \times \EE\left[\vartheta^*_a(V)\right] + \EE\left[(B - \EE\left[B \right])\left(\alpha^*_z(V)B + \alpha^*_u(V)\right)\left(\vartheta^*_a(V) - \EE\left[\vartheta^*_a(V)\right]\right) \right] \\
    + & \EE\left[(B - \EE\left[B \right])B\right] \times \EE\left[\vartheta^*_z(V)\right]  + \EE\left[(B - \EE\left[B \right])B\vartheta^*_{a, z}(V)A\right] \\
    =& \EE\left[(B - \EE\left[B \right])A \right] \times \EE\left[\vartheta^*_a(V)\right]+\EE\left[(B - \EE\left[B \right])B\right] \times \EE\left[\vartheta^*_z(V)\right]  \\
    + &  \EE\left[(B - \EE\left[B \right])BA\right]\times \EE\left[\vartheta^*_{a, z}(V)\right] + \EE\left[(B - \EE\left[B \right])B\left(\vartheta^*_{a, z}(V) - \EE\left[\vartheta^*_{a, z}(V)\right] \right)\left(\alpha^*_z(V)B + \alpha^*_u(V)\right)\right]   \\
    =& \EE\left[(B - \EE\left[B \right])A \right] \times \EE\left[\vartheta^*_a(V)\right]+\EE\left[(B - \EE\left[B \right])B\right] \times \EE\left[\vartheta^*_z(V)\right]  \\
    + &  \EE\left[(B - \EE\left[B \right])BA\right]\times \EE\left[\vartheta^*_{a, z}(V)\right]    \\
  \end{align*}
\end{proof}

\subsection{Proof of Lemma \ref{lemma:id3}}\label{prf:lemma:id3}
\begin{proof}
  Again, we omit $(S,U)$. First, by similar arguments in the proof of Lemma \eqref{lemma:id1}, we can show that
  \begin{align*}
    &\EE\left[B(B - \EE\left[B\right])(A - \EE\left[A \given B\right])Y\right]\\
    =& \EE\left[B(B - \EE\left[B \right])(A - \EE\left[A \given B\right])A \right] \times \EE\left[\vartheta_a^*(V)\right]+\EE\left[B(B - \EE\left[B \right])(A - \EE\left[A \given B\right])A\right] \times \EE\left[\vartheta_{a,z}^*(V)\right]  \\
    + &   \EE\left[B(B - \EE\left[B \right])(A - \EE\left[A \given B\right])\vartheta_u^*(V)\right]   \\
    =& \EE\left[B(B - \EE\left[B \right])(A - \EE\left[A \given B\right])A \right] \times \EE\left[\vartheta_a^*(V)\right]+\EE\left[B(B - \EE\left[B \right])(A - \EE\left[A \given B\right])A\right] \times \EE\left[\vartheta_{a,z}^*(V)\right]  \\
    + &   \EE\left[B(B - \EE\left[B \right])\right]\Cov(\alpha^*_u(V), \vartheta_u^*(V)).
  \end{align*}
  Next, we derive that
  \begin{align*}
    &\EE\left[(A - \EE\left[A \given B\right])Y\right]\\
    =& \EE\left[(A - \EE\left[A \given B\right])\left\{\vartheta_a^*(V)A + \theta_z^*(V)B + \vartheta_{a,z}^*(V)BA + \vartheta_u^*(V)  \right\} \right] \\
    =& \EE\left[A(A - \EE\left[A \given B \right])\right] \times \EE\left[\vartheta_a^*(V)\right] + \EE\left[(1 - \EE\left[A \given B \right])\left(\alpha^*_z(V)B + \alpha^*_u(V)\right)\left(\vartheta_a^*(V) - \EE\left[\vartheta_a^*(V)\right]\right) \right] \\
    + &   \EE\left[BA(A - \EE\left[A \given B \right])\right] \times \EE\left[\vartheta_{a,z}^*(V)\right] + \EE\left[B(1 - \EE\left[A \given B \right])\left(\alpha^*_z(V)B + \alpha^*_u(V)\right)\left(\vartheta_{a,z}^*(V) - \EE\left[\vartheta_{a,z}^*(V)\right]\right) \right] \\
    + &  \Cov(\alpha^*_u(V), \vartheta_u^*(V))\\
    =& \EE\left[A(A - \EE\left[A \given B \right])\right] \times \EE\left[\vartheta_a^*(V)\right] + \EE\left[BA(A - \EE\left[A \given B \right])\right] \times \EE\left[\vartheta_{a,z}^*(V)\right] \\
    + &  \Cov(\alpha^*_u(V), \vartheta_u^*(V)).
  \end{align*}

Summarizing together, we obtain that
\begin{align*}
    &\EE\left[B(B - \EE\left[B\right])(A - \EE\left[A \given B\right])Y\right]\\
  =& \EE\left[B(B - \EE\left[B \right])(A - \EE\left[A \given B\right])A \right] \times \EE\left[\vartheta_a^*(V)\right]+\EE\left[B(B - \EE\left[B \right])(A - \EE\left[A \given B\right])A\right] \times \EE\left[\vartheta_{a,z}^*(V)\right]  \\
  + &   \EE\left[B(B - \EE\left[B \right])\right]\\
  \times & \left\{\EE\left[(A - \EE\left[A \given B\right])Y\right] -   \EE\left[A(A - \EE\left[A \given B \right])\right] \times \EE\left[\vartheta_a^*(V)\right] - \EE\left[BA(A - \EE\left[A \given B \right])\right] \times \EE\left[\vartheta_{a,z}^*(V)\right] \right\}\\
  =& \Cov(B(B - \EE\left[B \right]), A(A - \EE\left[A \given B \right]))\EE\left[\vartheta_a^*(V)\right] + \Cov(B(B - \EE\left[B \right]), BA(A - \EE\left[A \given B \right]))\EE\left[\vartheta_{a,z}^*(V)\right]\\
  + & \EE\left[B(B - \EE\left[B \right])\right] \times \EE\left[(A - \EE\left[A \given B\right])Y\right], 
\end{align*}
which concludes our proof.
\end{proof}

\section{Proofs for Single-Stage Game}
\subsection{Proof of Lemma \ref{lemma:pess}} \label{prf:lemma:pess}
\begin{proof}
To show that $Q_1^{\A,\pi}\in \ci_1^{\A,\Q,\pi}$, we only need to show that $(\beta_a^\A, \beta_z^\A, \beta_{az}^\A)\in \ci^{\A,\r}$, i.e., 
\#\label{eq:pess-wtp-1}
\hat L^{\A,\r}(\bbeta^{\A}) - \hat L^{\A,\r}(\hat \bbeta^{\A}) \leq \eta^{\A, \r}. 
\#
Note that 
\#\label{eq:815-1}
& \hat L^{\A,\r}(\bbeta^{\A}) - \hat L^{\A,\r}(\hat \bbeta^{\A}) \\
& \qquad = \hat L^{\A,\r}(\bbeta^{\A}) - L^{\A,\r}(\bbeta^{\A}) + L^{\A,\r}(\bbeta^{\A}) - L^{\A,\r}(\hat \bbeta^{\A}) + L^{\A,\r}(\hat \bbeta^{\A}) - \hat L^{\A,\r}(\hat \bbeta^{\A}) \\
& \qquad \leq \hat L^{\A,\r}(\bbeta^{\A}) - L^{\A,\r}(\bbeta^{\A}) + L^{\A,\r}(\hat \bbeta^{\A}) - \hat L^{\A,\r}(\hat \bbeta^{\A}) \\
& \qquad = \left(\hat L^{\A,\r}(\bbeta^{\A}) - \hat L^{\A,\r}(\hat \bbeta^{\A}) \right) - \left(L^{\A,\r}(\bbeta^{\A}) - L^{\A,\r}(\hat \bbeta^{\A}) \right)
\#
where in the first inequality, we use the fact that $L^{\A,\r}(\bbeta^{\A}) = 0$ while $L^{\A,\r}(\hat \bbeta^{\A}) \geq 0$. Now, by using Bernstein's inequality on the right-hand side of \eqref{eq:815-1}, we have
\#\label{eq:815-2}
\hat L^{\A,\r}(\bbeta^{\A}) - \hat L^{\A,\r}(\hat \bbeta^{\A}) & = O_p \left( \frac{c}{n} + \sqrt{\frac{\Var\left[\|\hat m(\bbeta^{\A})\|^2 - \|\hat m(\hat \bbeta^{\A})\|^2 \right]}{n} } \right) \\
& \leq O_p \left(\frac{c}{n} + \sqrt{\frac{\EE[\left(\|\hat m(\bbeta^{\A})\|^2 - \|\hat m(\hat \bbeta^{\A})\|^2 \right)^2]}{n}} \right),
\#
where we write $\hat m(\bbeta^{\A}) = \hat m(Y, S, U, A, B; \bbeta^{\A})$ for notational convenience. Note that
\#\label{eq:815-3}
\left | \|\hat m(\bbeta^{\A})\|^2 - \|\hat m(\hat \bbeta^{\A})\|^2 \right | & \leq \left\|\hat m(\bbeta^{\A}) + \hat m(\hat \bbeta^{\A}) \right\| \cdot \left\|\hat m(\bbeta^{\A}) - \hat m(\hat \bbeta^{\A}) \right\| \\
& \leq c\cdot \left\|\hat m(\bbeta^{\A}) - \hat m(\hat \bbeta^{\A}) \right\| = O_p\left( n^{-\frac{\alpha}{2(\alpha+\varsigma)+d}} \right),
\#
where we use Corollary \ref{cor:estimation} in the last inequality. 
By combining \eqref{eq:815-2} and \eqref{eq:815-3}, it holds that
\$
\hat L^{\A,\r}(\bbeta^{\A}) - \hat L^{\A,\r}(\hat \bbeta^{\A}) = O_p\left( n^{-\frac{2\alpha}{2(\alpha+\varsigma)+d}} \right), 
\$
which verifies \eqref{eq:pess-wtp-1}. Thus, it holds that $Q_1^{\A,\pi}\in \ci_1^{\A,\Q,\pi}$ as $n\to \infty$. 

To show that $Q_1^{\B,\pi}\in \ci_1^{\B,\Q,\pi}$, we need to show that there exists $\bbeta^{\B,\dagger}\in \ci^{\B,\r}$ such that $\btheta_1^\B = (\theta_1^\B, \gamma_1^\B, \omega_1^\B) \in \ci_1^{\B,\pi}(\bbeta^{\B, \dagger})$, where $(\theta_1^\B, \gamma_1^\B, \omega_1^\B)$ is associated with $Q_1^{\B,\pi}$. In other words, we need to show that there exists $\bbeta^{\B,\dagger}\in \ci^{\B,\r}$ such that $\btheta_1^{\B,j} = (\theta_1^{\B,j}, \gamma_1^{\B,j}, \omega_1^{\B,j}) \in \ci_1^{\B,j,\pi}(\bbeta^{\B, \dagger})$ for any $j\in [3]$. Following from a similar argument as in \eqref{eq:pess-wtp-1}, we can show that 
as $n\to\infty$, we have $\bbeta^\B\in \ci^{\B,\r}$, where $\bbeta^\B$ is associated with $Q^{\B,\pi}_{3/2}$. Thus, we only need to show that $(\theta_1^{\B,j}, \gamma_1^{\B,j}, \omega_1^{\B,j}) \in \ci_1^{\B,j,\pi}(\bbeta^{\B})$, i.e., 
\#\label{eq:pess-wtp-2}
\hat L_1^{\B,j,\pi}(\btheta_1^{\B,j}; \bbeta^\B) - \hat L_1^{\B,j,\pi}(\hat \btheta_1^{\B,j}; \bbeta^\B) \leq \eta_1^{\B,j}
\#
for any $j\in [3]$. Following from a similar argument to prove \eqref{eq:pess-wtp-1}, we have
\$
\hat L_1^{\B,j,\pi}(\btheta_1^{\B,j}; \bbeta^\B) - \hat L_1^{\B,j,\pi}(\hat \btheta_1^{\B,j}; \bbeta^\B) = O_p\left( n^{-\frac{2\alpha}{2(\alpha+\varsigma)+d}} \right), 
\$
which verifies \eqref{eq:pess-wtp-2} for any $j\in [3]$. Thus, we know that $Q_1^{\B,\pi}\in \ci_1^{\B,\Q,\pi}$ as $n\to\infty$. This concludes the proof of the lemma. 
\end{proof}

\subsection{Proof of Lemma \ref{lemma:bound}}\label{prf:lemma:bound}
\begin{proof}
By \eqref{eq:bellman-h-single} and the definition of the confidence region in \eqref{eq:conf-q-single}, we obtain that
\#\label{eq:uiwehf}
\left|\EE_{\pi^*}\left[ Q_1^{\A,\pi^*} - Q_1^\A \right]\right| & = \EE_{\pi^*}\left[ (\beta_a^\A-\tilde \beta_a^\A) A_1  + (\beta_z^\A-\tilde \beta_z^\A) B_{1/2} + (\beta_{az}^\A-\tilde\beta_{az}^\A) A_1 B_{1/2} \right] \\
& \leq \left\| \beta^\A_a - \tilde \beta^\A_a \right\|_{\pi^*} + \left\| \beta^\A_z - \tilde \beta^\A_z \right\|_{\pi^*} + \left\| \beta^\A_{az} - \tilde \beta^\A_{az} \right\|_{\pi^*},
\#
where $\tilde\bbeta^\A = (\tilde \beta^\A_a, \tilde \beta^\A_z, \tilde \beta^\A_{az})^\top$ is associated with $Q_1^\A$. Note that
\$
\hat L^{\A,\r}(\tilde \bbeta^{\A}) = \frac{1}{n} \sum_{i=1}^n \left\| \Phi_i \tilde \bbeta^\A_i + \alpha_i \right\|^2, 
\$
where $\Phi_i \in \RR^{3\times 3}$ and $\tilde \bbeta^\A_i = \tilde \bbeta^\A(S^i_1, U_1^{\A,i})$.
In the meanwhile, by Bernstein's inequality, we have
\#\label{eq:feiuwbr}
& \left(L^{\A,\r}(\tilde \bbeta^\A) - L^{\A,\r}(\bbeta^\A) \right) - \left(\hat L^{\A,\r}(\tilde \bbeta^\A) - \hat L^{\A,\r}(\bbeta^\A) \right) \\
& \qquad = O_p \left( \frac{1}{n} + \sqrt{\frac{\EE\left[(\|\Phi \tilde \bbeta^\A + \alpha \|^2 - \|\Phi \bbeta^\A + \alpha \|^2 )^2\right]}{n}} \right). 
\#
Note that
\#\label{eq:uehfuier}
& \EE\left[\left(\left\|\Phi \tilde \bbeta^\A+ \alpha \right\|^2 - \left\|\Phi \bbeta^\A+ \alpha \right\|^2 \right)^2\right] \leq c \cdot \EE\left[\left\|\Phi (\tilde \bbeta^\A - \bbeta^\A) \right\|^2  \right]. 
\#
By combining \eqref{eq:feiuwbr} and \eqref{eq:uehfuier}, we have
\#\label{eq:uwenfrervf}
& \left(L^{\A,\r}(\tilde \bbeta^\A) - L^{\A,\r}(\bbeta^\A) \right) - \left(\hat L^{\A,\r}(\tilde \bbeta^\A) - \hat L^{\A,\r}(\bbeta^\A) \right) \\
& \qquad = O_p \left( n^{-1} + \sqrt{n^{-1}\cdot\EE[\|\Phi (\tilde \bbeta^\A-\bbeta^\A) \|^2 ]} \right). 
\#
By \eqref{eq:uwenfrervf}, it holds for any $\tilde \bbeta^\A\in \ci^{\A,\r}$ that
\#\label{eq:iehwifer}
& L^{\A,\r}(\tilde \bbeta^\A) - L^{\A,\r}(\bbeta^\A) = O_p \left(  n^{-1} +  \sqrt{n^{-1}\cdot \EE[\|\Phi (\tilde \bbeta^\A-\bbeta^\A)\|^2 ]} + \eta^{\A,\r} \right),
\#
In the meanwhile, since $L^{\A,\r}(\cdot)$ is a quadratic functional, it holds for any $\tilde \bbeta^\A$ that
\#\label{eq:ehurfe}
\EE\left[\left\|\Phi (\tilde \bbeta^\A - \bbeta^{\A}) \right\|^2 \right] \leq L^{\A,\r}(\tilde \bbeta^\A) - L^{\A,\r}(\bbeta^\A). 
\#
Now, by combining \eqref{eq:iehwifer} and \eqref{eq:ehurfe}, we have
\$
\EE\left[\left\|\Phi (\tilde \bbeta^\A - \bbeta^{\A}) \right\|^2 \right] = O_p\left( n^{-\frac{2\alpha}{2\alpha+2\varsigma+d}} \right). 
\$
By Assumption \ref{ass:lower-bound-phi-paper}, we have
\$
\EE\left[\left\|\Phi (\tilde \bbeta^\A - \bbeta^{\A}) \right\|^2 \right] = \EE\left[\left(\tilde \bbeta^\A - \bbeta^{\A} \right )^\top  \EE\left[ \Phi^\top \Phi\right] \left(\tilde \bbeta^\A - \bbeta^{\A} \right)\right] \geq c\cdot \EE\left[\|\tilde \bbeta^\A - \bbeta^{\A} \|^2 \right]. 
\$
Thus, we know that 
\#\label{eq:uiwefhr}
\EE\left[\|\tilde \bbeta^\A - \bbeta^{\A} \|^2 \right] = O_p\left( n^{-\frac{2\alpha}{2\alpha+2\varsigma+d}} \right). 
\#
Combining \eqref{eq:uiwehf} and \eqref{eq:uiwefhr}, we have 
\$
\left|\EE_{\pi^*}\left[ Q_1^{\A,\pi^*} - Q_1^\A \right]\right| = O_p\left( n^{-\frac{\alpha}{2\alpha+2\varsigma+d}} \right). 
\$

Similarly, we can show for any $j\in [3]$ that 
\$
\EE\left[\|\tilde \bbeta_1^{\B,j} - \bbeta_1^{\B,j} \|^2 \right] = O_p\left( n^{-\frac{2\alpha}{2\alpha+2\varsigma+d}} \right), 
\$
which implies that 
\$
\EE\left[\|\tilde \bbeta_1^{\B} - \bbeta_1^{\B} \|^2 \right] = O_p\left( n^{-\frac{2\alpha}{2\alpha+2\varsigma+d}} \right). 
\$
Further, by the definition of confidence regions in \eqref{eq:conf-genius-single} and \eqref{eq:conf-q-single}, we know that 
\$
\left|\EE_{\pi^*}\left[ Q_1^{\B,\pi^*} - Q_1^\B \right]\right| = O_p\left( n^{-\frac{\alpha}{2\alpha+2\varsigma+d}} \right), 
\$
which concludes the proof of the lemma. 
\end{proof}

\section{Proof of Multi-Stage Game}

\subsection{Proof of Lemma \ref{lemma:pess-all}}\label{prf:lemma:pess-all}
\begin{proof}
It suffices to show for any $h$ and $\Diamond\in \{\A,\B\}$ that 
\#\label{eq:wtf-pess}
\lim_{n\to \infty} \PP\left(  \btheta_{h}^{\Diamond, j} \in 
\bigcup_{\substack{ \tilde \bbeta^{\Diamond} \in \ci^{\Diamond, \r}, \\ \tilde \btheta^\Diamond_{j}\in \ci_{j}^{\Diamond,\pi}(\tilde \btheta^\Diamond_{j+1/2}), \\ \text{for any $j\in \cC_h^{\Diamond}$} }  }
\ci_h^{\Diamond,j,\pi}(\tilde \btheta_{h+1/2}^{\Diamond}) \right ) = 1, 
\#
where $\cC_h^\A = \{h+1/2, h+1, \ldots, H-1/2\}$ and $\cC_h^\B = \{h+1/2, h+1, \ldots, H\}$. In the meanwhile, we denote by $\tilde \btheta_{H}^\A = \tilde \bbeta^\A$ and $\tilde \btheta_{H+1/2}^\B = \tilde \bbeta^\B$ for notational convenience. For the simplicity of presentation, we only show that \eqref{eq:wtf-pess} holds for $\Diamond  = \A$ as follows. We use mathematical induction as follows. 

\vskip5pt
\noindent\textbf{Base.} We want to show that $\lim_{n\to\infty} \PP( \bbeta^{\A} \in \ci^{\A, \r}) = 1$, i.e., 
\#\label{eq:wtf-pess-a1}
\lim_{n\to\infty} \PP\left( \hat L^{\A,\r}(\bbeta^{\A}, \hat \bff^{\A}) - \hat L^{\A,\r}(\hat \bbeta^{\A}, \hat \bff^{\A}) \leq \eta^{\A,\r}  \right) = 1. 
\#
To show that \eqref{eq:wtf-pess-a1} holds, we note that
\#\label{eq:fiuwerhfuie}
& \hat L^{\A,\r}(\bbeta^{\A}, \hat \bff^{\A}) - \hat L^{\A,\r}(\hat \bbeta^{\A}, \hat \bff^{\A}) \\
& \qquad = \left( \hat L^{\A,\r}(\bbeta^{\A}, \hat \bff^{\A}) - L^{\A,\r}(\bbeta^{\A}, \hat \bff^{\A}) \right) + \left( L^{\A,\r}(\bbeta^{\A}, \hat \bff^{\A}) - L^{\A,\r}(\bbeta^{\A}, \bff^{\A}) \right) \\
& \qquad \qquad + \left( L^{\A,\r}(\bbeta^{\A}, \bff^{\A}) - L^{\A,\r}(\hat \bbeta^{\A}, \hat \bff^{\A}) \right) + \left( L^{\A,\r}(\hat \bbeta^{\A}, \hat \bff^{\A}) - \hat L^{\A,\r}(\hat \bbeta^{\A}, \hat \bff^{\A}) \right) \\
& \qquad \leq \underbrace{\left( \hat L^{\A,\r}(\bbeta^{\A}, \hat \bff^{\A}) - \hat L^{\A,\r}(\hat \bbeta^{\A}, \hat \bff^{\A}) \right) - \left( L^{\A,\r}(\bbeta^{\A}, \hat \bff^{\A}) - L^{\A,\r}(\hat \bbeta^{\A}, \hat \bff^{\A})\right)}_{\text{term (A)}} \\
& \qquad \qquad + \underbrace{\left( L^{\A,\r}(\bbeta^{\A}, \hat \bff^{\A}) - L^{\A,\r}(\bbeta^{\A}, \bff^{\A}) \right)}_{\text{term (B)}}, 
\#
where we use the fact that $L^{\A,\r}(\bbeta^{\A}, \bff^{\A}) \leq L^{\A,\r}(\hat \bbeta^{\A}, \hat \bff^{\A})$ in the last inequality. 
Similar to the proof of Lemma \ref{lemma:pess} in \S\ref{prf:lemma:pess}, we upper bound term (A) using Bernstein's inequality as follows, 
\#\label{eq:ta-bound}
\text{term (A)} = O_p( n^{-\frac{2\alpha}{2\alpha+2\varsigma+d}}). 
\#
In the meanwhile, note that by Theorem \ref{thm:chen2012}, we have $\|\hat f^\A - f^\A\| = O_p( n^{-\frac{\alpha}{2\alpha+2\varsigma+d}})$. Thus, it holds that 
\#\label{eq:tb-bound}
\text{term (B)} = O_p( n^{-\frac{2\alpha}{2\alpha+2\varsigma+d}}). 
\#
Now, combining \eqref{eq:fiuwerhfuie}, \eqref{eq:ta-bound}, and \eqref{eq:tb-bound}, we obtain that 
\$
\hat L^{\A,\r}(\bbeta^{\A}, \hat \bff^{\A}) - \hat L^{\A,\r}(\hat \bbeta^{\A}, \hat \bff^{\A}) = O_p( n^{-\frac{2\alpha}{2\alpha+2\varsigma+d}}), 
\$
which implies that \eqref{eq:wtf-pess-a1} holds. 

\vskip5pt
\noindent\textbf{Hypothesis.} We assume that it holds for any $i\in\{H-1/2, H-1, \ldots, h+1/2\}$ that 
\#\label{eq:induction-hypo-11}
\lim_{n\to\infty} \PP\left( \btheta^{\A}_i \in \ci^{\A, \pi}_i(\btheta^{\A}_{i+1/2})\right) = 1. 
\#

\vskip5pt
\noindent\textbf{Induction.} With the hypothesis in \eqref{eq:induction-hypo-11}, it only suffices to show that $\lim_{n\to\infty} \PP( \btheta^{\A}_h \in \ci^{\A, \pi}_h(\btheta^{\A}_{h+1/2})) = 1$, i.e., 
\#\label{eq:fkjwneor}
\hat L_h^{\A,j,\pi}(\btheta_h^{\A,j}, \hat \bff_h^{\A,j}; \btheta_{h+1/2}^{\A}) - \hat L_h^{\A,j,\pi}(\hat \btheta_h^{\A,j}, \hat \bff_h^{\A,j}; \btheta_{h+1/2}^{\A}) \leq \eta_h^{\A,j} \qquad \text{for any $j\in [3]$}. 
\#
Similar to \eqref{eq:fiuwerhfuie}, we obtain the following upper bound for any $j\in [3]$, 
\$
& \hat L_h^{\A,j,\pi}(\btheta_h^{\A,j}, \hat \bff_h^{\A,j}; \btheta_{h+1/2}^{\A}) - \hat L_h^{\A,j,\pi}(\hat \btheta_h^{\A,j}, \hat \bff_h^{\A,j}; \btheta_{h+1/2}^{\A}) \\
& \qquad \leq \left( \hat L_h^{\A,j,\pi}(\btheta_h^{\A,j}, \hat \bff_h^{\A,j}; \btheta_{h+1/2}^{\A}) - \hat L_h^{\A,j,\pi}(\hat \btheta_h^{\A,j}, \hat \bff_h^{\A,j}; \btheta_{h+1/2}^{\A}) \right) \\
& \qquad \qquad - \left( L_h^{\A,j,\pi}(\btheta_h^{\A,j}, \hat \bff_h^{\A,j}; \btheta_{h+1/2}^{\A}) - L_h^{\A,j,\pi}(\hat \btheta_h^{\A,j}, \hat \bff_h^{\A,j}; \btheta_{h+1/2}^{\A})\right) \\
& \qquad \qquad + \left( L_h^{\A,j,\pi}(\btheta_h^{\A,j}, \hat \bff_h^{\A,j}; \btheta_{h+1/2}^{\A}) - L_h^{\A,j,\pi}(\btheta_h^{\A,j}, \bff_h^{\A,j}; \btheta_{h+1/2}^{\A}) \right), 
\$
which can be further upper bounded following similar arguments as in \eqref{eq:ta-bound} and \eqref{eq:tb-bound} as follows, 
\$
\hat L_h^{\A,j,\pi}(\btheta_h^{\A,j}, \hat \bff_h^{\A,j}; \btheta_{h+1/2}^{\A}) - \hat L_h^{\A,j,\pi}(\hat \btheta_h^{\A,j}, \hat \bff_h^{\A,j}; \btheta_{h+1/2}^{\A}) \leq \eta_h^{\A,j}. 
\$
This justifies \eqref{eq:fkjwneor}, which also justifies \eqref{eq:wtf-pess}. Thus we finish the mathematical induction. Then by a union bound argument, we have $\lim_{n\to\infty} \PP(Q_1^{\A,\pi}\in \ci_1^{\A,\Q,\pi}, Q_1^{\B,\pi}\in \ci_1^{\B,\Q,\pi} \text{ for any $\pi\in \Pi$}) = 1$, which concludes the proof of the lemma. 
\end{proof}

\subsection{Proof of Lemma \ref{lemma:bound-all}}\label{prf:lemma:bound-all}
\begin{proof}
Note that \#\label{eq:uiwehf11}
\left|\EE_{\pi^*}\left[ Q_1^{\A,\pi^*} - Q_1^\A \right]\right| & = \EE_{\pi^*}\left[ (\theta_1^{\A} - \tilde \theta_1^{\A}) A_1  + (\gamma_h^{\A}-\tilde \gamma_h^{\A}) B_{1/2} + (\omega_h^{\A}-\tilde\omega_h^{\A}) A_1 B_{1/2} \right] \\
& \leq \left\| \theta_1^{\A} - \tilde \theta_1^{\A} \right\|_{\pi^*} + \left\| \gamma_1^{\A} - \tilde \gamma_1^{\A} \right\|_{\pi^*} + \left\| \omega_1^{\A} - \tilde\omega_1^{\A} \right\|_{\pi^*},
\#
where $\tilde\btheta_1^\A = (\tilde \theta_1^{\A}, \tilde \gamma_1^{\A}, \tilde \omega_1^{\A})^\top$ is associated with $Q_1^\A$. 
We use mathematical induction to upper bound \eqref{eq:uiwehf11} as follows. 

\vskip5pt
\noindent\textbf{Base.} We upper bound $\EE[\|\tilde \bbeta^\A - \bbeta^{\A} \|^2]$ as follows. 
Note that
\$
\hat L^{\A,\r}(\tilde \bbeta^{\A}, \bff^\A) = \frac{1}{n} \sum_{i=1}^n \left\| \Phi_i(\bff^\A) \tilde \bbeta^\A_i + \alpha_i(\bff^\A)
\right\|^2, 
\$
where $\Phi_i(\bff^\A) \in \RR^{3\times 3}$ and $\tilde \bbeta^\A_i = \tilde \bbeta^\A(S^i_1, U_1^{\A,i})$.
In the meanwhile, by Bernstein's inequality, we have
\#\label{eq:feiuwbr11}
& \left(L^{\A,\r}(\tilde \bbeta^\A, \bff^\A) - L^{\A,\r}(\bbeta^\A, \bff^\A) \right) - \left(\hat L^{\A,\r}(\tilde \bbeta^\A, \bff^\A) - \hat L^{\A,\r}(\bbeta^\A, \bff^\A) \right) \\
& \qquad = O_p \left( \frac{1}{n} + \sqrt{\frac{\EE\left[(\|\Phi(\bff^\A) \tilde \bbeta^\A + \alpha(\bff^\A)\|^2 - \|\Phi(\bff^\A) \bbeta^\A +\alpha(\bff^\A)\|^2 )^2\right]}{n}} \right). 
\#
Note that
\#\label{eq:uehfuier11}
& \EE\left[\left(\left\|\Phi(\bff^\A) \tilde \bbeta^\A + \alpha(\bff^\A)\right\|^2 - \left\|\Phi(\bff^\A) \bbeta^\A + \alpha(\bff^\A)\right\|^2 \right)^2\right] \leq c \cdot \EE\left[\left\|\Phi(\bff^\A) (\tilde \bbeta^\A - \bbeta^\A) \right\|^2  \right]. 
\# 
By combining \eqref{eq:feiuwbr11} and \eqref{eq:uehfuier11}, we have
\#\label{eq:uwenfrervf11}
& \left(L^{\A,\r}(\tilde \bbeta^\A, \bff^\A) - L^{\A,\r}(\bbeta^\A, \bff^\A) \right) - \left(\hat L^{\A,\r}(\tilde \bbeta^\A, \bff^\A) - \hat L^{\A,\r}(\bbeta^\A, \bff^\A) \right) \\
& \qquad = O_p \left( n^{-1} + \sqrt{n^{-1}\cdot\EE[\|\Phi(\bff^\A) (\tilde \bbeta^\A-\bbeta^\A) \|^2 ]} \right). 
\#
In the meanwhile, we have 
\#\label{eq:fwheiurheru}
& \left(\hat L^{\A,\r}(\tilde \bbeta^\A, \bff^\A) - \hat L^{\A,\r}(\bbeta^\A, \bff^\A) \right) - \left(\hat L^{\A,\r}(\tilde \bbeta^\A, \hat \bff^\A) - \hat L^{\A,\r}(\bbeta^\A, \hat \bff^\A) \right) = O_p\left( n^{-\frac{2\alpha}{2\alpha+2\varsigma+d}} \right). 
\#
Combining \eqref{eq:uwenfrervf11} and \eqref{eq:fwheiurheru}, it holds for any $\tilde \bbeta^\A\in \ci^{\A,\r}$ that
\#\label{eq:iehwifer11}
& L^{\A,\r}(\tilde \bbeta^\A, \bff^\A) - L^{\A,\r}(\bbeta^\A, \bff^\A) = O_p \left( n^{-1} + \sqrt{n^{-1}\cdot \EE[\|\Phi(\bff^\A) (\tilde \bbeta^\A-\bbeta^\A)\|^2 ]} + n^{-\frac{2\alpha}{2\alpha+2\varsigma+d}} \right),
\#
In the meanwhile, since $L^{\A,\r}(\cdot)$ is a quadratic functional, it holds that
\#\label{eq:ehurfe11}
\EE\left[\left\|\Phi(\bff^\A) (\tilde \bbeta^\A - \bbeta^{\A}) \right\|^2 \right] \leq L^{\A,\r}(\tilde \bbeta^\A, \bff^\A) - L^{\A,\r}(\bbeta^\A, \bff^\A). 
\#
Now, by combining \eqref{eq:iehwifer11} and \eqref{eq:ehurfe11}, we have
\$
\EE\left[\left\|\Phi(\bff^\A) (\tilde \bbeta^\A - \bbeta^{\A}) \right\|^2 \right] = O_p\left(n^{-\frac{2\alpha}{2\alpha+2\varsigma+d}} \right). 
\$
By Assumption \ref{ass:lower-bound-phi-all}, we know that 
\$
\EE\left[\|\tilde \bbeta^\A - \bbeta^{\A} \|^2 \right] = O_p\left(n^{-\frac{2\alpha}{2\alpha+2\varsigma+d}} \right). 
\$

\vskip5pt
\noindent\textbf{Hypothesis.} We assume that it holds for any $i\in\{H-1/2, H-1, \ldots, h+1/2\}$ that 
\#\label{eq:uehurfehu}
\EE\left[\|\tilde \btheta^\A_i - \btheta^{\A}_i \|^2 \right] = O_p\left( (H-i)^3 \cdot n^{-\frac{2\alpha}{2\alpha+2\varsigma+d}} \right), 
\#
where $\tilde \btheta^\A_{i} \in 
\bigcup_{ \tilde \bbeta^{\A} \in \ci^{\A, \r}, \tilde \btheta^\A_{k}\in \ci_{k}^{\A,\pi}(\tilde \btheta^\A_{k+1/2}) \text{ for any $k\in \cC_i^{\A}$}}
\ci_{i}^{\A,k,\pi}(\tilde \btheta_{i+1/2}^{\A})$.

\vskip5pt
\noindent\textbf{Induction.} We want to upper bound $\EE[\|\tilde \btheta^{\A,j}_h - \btheta^{\A,j}_h \|^2 ]$ for any $j\in [3]$ as follows. 
By Bernstein's inequality, we have 
\#\label{eq:hfweiurh}
& \left(L^{\A,j,\pi}_h(\tilde \btheta^{\A,j}_h, \bff^{\A,j}_h; \btheta^{\A}_{h+1/2}) - L^{\A,j,\pi}_h(\btheta^{\A,j}_h, \bff^{\A,j}_h; \btheta^{\A}_{h+1/2}) \right) \\
& \qquad - \left(\hat L^{\A,j,\pi}_h(\tilde \btheta^{\A,j}_h, \bff^{\A,j}_h; \btheta^{\A}_{h+1/2}) - \hat L^{\A,j,\pi}_h(\btheta^{\A,j}_h, \bff^{\A,j}_h; \btheta^{\A}_{h+1/2}) \right) \\
& \qquad = O_p \left( \frac{(H-h)^2}{n} + \sqrt{\frac{\EE\left[\left(\|\Phi_h(\bff^\A) \tilde \btheta^{\A,j}_h + \alpha_h(\bff^\A) \|^2 - \|\Phi_h(\bff^\A) \btheta^{\A,j}_h + \alpha_h(\bff^\A) \|^2 \right)^2\right]}{n}} \right). 
\#
We upper bound the term $\hat L^{\A,j,\pi}_h(\tilde \btheta^{\A,j}_h, \bff^{\A,j}_h; \btheta^{\A}_{h+1/2}) - \hat L^{\A,j,\pi}_h(\btheta^{\A,j}_h, \bff^{\A,j}_h; \btheta^{\A}_{h+1/2})$ as follows. 
We assume that $\tilde \btheta^{\A,j}_h \in \ci_{h}^{\A,j,\pi}(\btheta^{\A,\dagger}_{h+1/2})$ for some $\btheta^{\A,\dagger}_{h+1/2} \in \bigcup_{ \tilde \bbeta^{\A} \in \ci^{\A, \r}, \tilde \btheta^\A_{k}\in \ci_{k}^{\A,\pi}(\tilde \btheta^\A_{k+1/2}) \text{ for any $k\in \cC_{h+1/2}^{\A}$}} \ci_{h+1/2}^{\A,k,\pi}(\tilde \btheta_{h+1}^{\A})$ and $\btheta^{\A,j}_h \in \ci_{h}^{\A,j,\pi}(\btheta^{\A,\ddagger}_{h+1/2})$ for some $\btheta^{\A,\ddagger}_{h+1/2} \in \bigcup_{ \tilde \bbeta^{\A} \in \ci^{\A, \r}, \tilde \btheta^\A_{k}\in \ci_{k}^{\A,\pi}(\tilde \btheta^\A_{k+1/2}) \text{ for any $k\in \cC_{h+1/2}^{\A}$}}\ci_{h+1/2}^{\A,k,\pi}(\tilde \btheta_{h+1}^{\A})$. 

Note that
\#\label{eq:fiuwehfuweir}
& \hat L^{\A,j,\pi}_h(\tilde \btheta^{\A,j}_h, \bff^{\A,j}_h; \btheta^{\A}_{h+1/2}) - \hat L^{\A,j,\pi}_h(\btheta^{\A,j}_h, \bff^{\A,j}_h; \btheta^{\A}_{h+1/2}) \\
& \qquad = \hat L^{\A,j,\pi}_h(\tilde \btheta^{\A,j}_h, \hat \bff^{\A,j}_h; \btheta^{\A}_{h+1/2}) - \hat L^{\A,j,\pi}_h(\btheta^{\A,j}_h, \hat \bff^{\A,j}_h; \btheta^{\A}_{h+1/2}) + O_p\left((H-h)^2 n^{-\frac{2\alpha}{2\alpha+2\varsigma+d}} \right). 
\#
In the meanwhile, we have
\#\label{eq:fuhweiurfwh}
& \hat L^{\A,j,\pi}_h(\tilde \btheta^{\A,j}_h, \hat \bff^{\A,j}_h; \btheta^{\A}_{h+1/2}) - \hat L^{\A,j,\pi}_h(\btheta^{\A,j}_h, \hat \bff^{\A,j}_h; \btheta^{\A}_{h+1/2}) \\
& \qquad \leq \left | \hat L^{\A,j,\pi}_h(\tilde \btheta^{\A,j}_h, \hat \bff^{\A,j}_h; \btheta^{\A}_{h+1/2}) - \hat L^{\A,j,\pi}_h(\tilde \btheta^{\A,j}_h, \hat \bff^{\A,j}_h; \btheta^{\A,\dagger}_{h+1/2}) \right | \\
& \qquad \qquad + \left | \hat L^{\A,j,\pi}_h(\tilde \btheta^{\A,j}_h, \hat \bff^{\A,j}_h; \btheta^{\A,\dagger}_{h+1/2}) - \hat L^{\A,j,\pi}_h(\hat \btheta^{\A,j}_h, \hat \bff^{\A,j}_h; \btheta^{\A,\dagger}_{h+1/2}) \right |\\
& \qquad \qquad + \left | \hat L^{\A,j,\pi}_h(\hat \btheta^{\A,j}_h, \hat \bff^{\A,j}_h; \btheta^{\A,\dagger}_{h+1/2})  - \hat L^{\A,j,\pi}_h(\hat \btheta^{\A,j}_h, \hat \bff^{\A,j}_h; \btheta^{\A,\ddagger}_{h+1/2}) \right | \\
& \qquad \qquad + \left | \hat L^{\A,j,\pi}_h(\hat \btheta^{\A,j}_h, \hat \bff^{\A,j}_h; \btheta^{\A,\ddagger}_{h+1/2})  - \hat L^{\A,j,\pi}_h(\btheta^{\A,j}_h, \hat \bff^{\A,j}_h; \btheta^{\A,\ddagger}_{h+1/2}) \right |\\
& \qquad \qquad + \left | \hat L^{\A,j,\pi}_h(\btheta^{\A,j}_h, \hat \bff^{\A,j}_h; \btheta^{\A,\ddagger}_{h+1/2}) - \hat L^{\A,j,\pi}_h(\btheta^{\A,j}_h, \hat \bff^{\A,j}_h; \btheta^{\A}_{h+1/2}) \right |. 
\#
The first term on the RHS of \eqref{eq:fuhweiurfwh} can be upper bounded by \eqref{eq:uehurfehu} as follows, 
\#\label{eq:tydfwytedtf1}
\left | \hat L^{\A,j,\pi}_h(\tilde \btheta^{\A,j}_h, \hat \bff^{\A,j}_h; \btheta^{\A}_{h+1/2}) - \hat L^{\A,j,\pi}_h(\tilde \btheta^{\A,j}_h, \hat \bff^{\A,j}_h; \btheta^{\A,\dagger}_{h+1/2}) \right | = O_p\left( (H-h-1/2)^3 \cdot n^{-\frac{2\alpha}{2\alpha+2\varsigma+d}} \right). 
\#
The second term on the RHS of \eqref{eq:fuhweiurfwh} can be upper bounded by the definition of the confidence regions as follows, 
\#\label{eq:tydfwytedtf2}
\left | \hat L^{\A,j,\pi}_h(\tilde \btheta^{\A,j}_h, \hat \bff^{\A,j}_h; \btheta^{\A,\dagger}_{h+1/2}) - \hat L^{\A,j,\pi}_h(\hat \btheta^{\A,j}_h, \hat \bff^{\A,j}_h; \btheta^{\A,\dagger}_{h+1/2}) \right | \leq \eta_h^{\A,j}. 
\#
The third term on the RHS of \eqref{eq:fuhweiurfwh} can be upper bounded by \eqref{eq:uehurfehu} as follows, 
\#\label{eq:tydfwytedtf3}
\left | \hat L^{\A,j,\pi}_h(\hat \btheta^{\A,j}_h, \hat \bff^{\A,j}_h; \btheta^{\A,\dagger}_{h+1/2})  - \hat L^{\A,j,\pi}_h(\hat \btheta^{\A,j}_h, \hat \bff^{\A,j}_h; \btheta^{\A,\ddagger}_{h+1/2}) \right | = O_p\left( (H-h-1/2)^3 \cdot n^{-\frac{2\alpha}{2\alpha+2\varsigma+d}} \right). 
\#
The forth term on the RHS of \eqref{eq:fuhweiurfwh} can be upper bounded by definition of the confidence regions and Lemma \ref{lemma:pess-all} as follows, 
\#\label{eq:tydfwytedtf4}
\left | \hat L^{\A,j,\pi}_h(\hat \btheta^{\A,j}_h, \hat \bff^{\A,j}_h; \btheta^{\A,\ddagger}_{h+1/2})  - \hat L^{\A,j,\pi}_h(\btheta^{\A,j}_h, \hat \bff^{\A,j}_h; \btheta^{\A,\ddagger}_{h+1/2}) \right | \leq \eta_h^{\A,j}. 
\#
The last term on the RHS of \eqref{eq:fuhweiurfwh} can be upper bounded by \eqref{eq:uehurfehu} as follows, 
\#\label{eq:tydfwytedtf5}
\left | \hat L^{\A,j,\pi}_h(\btheta^{\A,j}_h, \hat \bff^{\A,j}_h; \btheta^{\A,\ddagger}_{h+1/2}) - \hat L^{\A,j,\pi}_h(\btheta^{\A,j}_h, \hat \bff^{\A,j}_h; \btheta^{\A}_{h+1/2}) \right | = O_p\left( (H-h-1/2)^3 \cdot n^{-\frac{2\alpha}{2\alpha+2\varsigma+d}} \right). 
\#
Now, by plugging \eqref{eq:fuhweiurfwh}, \eqref{eq:tydfwytedtf1}, \eqref{eq:tydfwytedtf2}, \eqref{eq:tydfwytedtf3}, \eqref{eq:tydfwytedtf4}, and \eqref{eq:tydfwytedtf5} into \eqref{eq:fiuwehfuweir}, we have
\#\label{eq:brvreu}
& \hat L^{\A,j,\pi}_h(\tilde \btheta^{\A,j}_h, \bff^{\A,j}_h; \btheta^{\A}_{h+1/2}) - \hat L^{\A,j,\pi}_h(\btheta^{\A,j}_h, \bff^{\A,j}_h; \btheta^{\A}_{h+1/2}) = O_p\left( (H-h)^3 \cdot n^{-\frac{2\alpha}{2\alpha+2\varsigma+d}} \right).
\#
Note that
\#\label{eq:uehfuier11dede}
& \EE\left[\left(\|\Phi_h(\bff^\A) \tilde \btheta^{\A,j}_h + \alpha_h(\bff^\A) \|^2 - \|\Phi_h(\bff^\A) \btheta^{\A,j}_h + \alpha_h(\bff^\A) \|^2 \right)^2 \right] \leq c (H-h)^2 \cdot \EE\left[\left\|\Phi_h(\bff^\A) (\tilde \btheta^{\A,j}_h - \btheta^{\A,j}_h) \right\|^2  \right]. 
\#
Combining \eqref{eq:hfweiurh}, \eqref{eq:brvreu}, and \eqref{eq:uehfuier11dede}, we have 
\$
& \left(L^{\A,j,\pi}_h(\tilde \btheta^{\A,j}_h, \bff^{\A,j}_h; \btheta^{\A}_{h+1/2}) - L^{\A,j,\pi}_h(\btheta^{\A,j}_h, \bff^{\A,j}_h; \btheta^{\A}_{h+1/2}) \right) \\
& \qquad = O_p\left( \frac{(H-h)^2}{n} + (H-h) \sqrt{\frac{1}{n} \cdot \EE\left[\left\|\Phi_h(\bff^\A) (\tilde \btheta^{\A,j}_h - \btheta^{\A,j}_h) \right\|^2  \right]} + (H-h)^3\cdot n^{-\frac{2\alpha}{2\alpha+2\varsigma+d}} \right), 
\$
which implies that 
\$
\EE\left[\left\|\Phi_h(\bff^\A) (\tilde \btheta^{\A,j}_h - \btheta^{\A,j}_h) \right\|^2  \right] = O_p\left( (H-h)^3 \cdot n^{-\frac{2\alpha}{2\alpha+2\varsigma+d}} \right)
\$
following a similar argument as in the base case. Since the matrix $\Phi_h(\bff^\A; \btheta^{\A}_{h+1/2})$ is lower bounded, we know that 
\$
\EE\left[\left\|\tilde \btheta^{\A,j}_h - \btheta^{\A,j}_h \right\|^2  \right] = O_p\left( (H-h)^3 \cdot n^{-\frac{2\alpha}{2\alpha+2\varsigma+d}} \right), 
\$
which implies that
\$
\sqrt{\EE\left[\|\tilde \btheta^\A_h - \btheta^{\A}_h \|^2 \right]} = O_p\left( (H-h)^{3/2} \cdot n^{-\frac{\alpha}{2\alpha+2\varsigma+d}} \right).
\$
Thus, we have 
\$
\left |\EE_{\pi}\left[ Q_1^{\A,\pi} - Q_1^\A \right] \right | = O_p\left( H^2 \cdot n^{-\frac{\alpha}{2\alpha+2\varsigma+d}} \right), \qquad \left |\EE_{\pi}\left[ Q_1^{\B,\pi} - Q_1^\B \right] \right | = O_p\left( H^2 \cdot n^{-\frac{\alpha}{2\alpha+2\varsigma+d}} \right)
\$
by a union bound argument, which concludes the proof of the lemma. 
\end{proof}

\subsection{Proof of Theorem \ref{thm:main-H}}\label{prf:thm:main-H}
\begin{proof}
We introduce two supporting lemmas. 
The following lemma shows that with the sizes of the confidence regions in \eqref{eq:iowuehfurew}, the true action-value functions $Q_1^{\A,\pi}$ and $Q_1^{\B,\pi}$ lie in the confidence regions.

\begin{lemma}\label{lemma:pess-all}
Under the assumptions stated in Theorem \ref{thm:chen2012}, it holds that  $\lim_{n\to\infty} \PP(Q_1^{\A,\pi}\in \ci_1^{\A,\Q,\pi}, Q_1^{\B,\pi}\in \ci_1^{\B,\Q,\pi} \text{ for any $\pi\in \Pi$}) = 1$. 
\end{lemma}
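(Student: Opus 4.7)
The plan is to generalize the single-stage argument of Lemma \ref{lemma:pess} in \S\ref{prf:lemma:pess} via a backward induction on the time index, propagating the confidence-region membership from step $H$ (or $H+1/2$) down to step $1$. Fix a policy pair $\pi$ and either player $\Diamond \in \{\A,\B\}$. Unwinding the definition of $\ci_1^{\Diamond,\Q,\pi}$ in \eqref{eq:wiuevhrei}, the event $\{Q_1^{\Diamond,\pi} \in \ci_1^{\Diamond,\Q,\pi}\}$ is implied by the simultaneous events
\[
\bbeta^{\Diamond} \in \ci^{\Diamond,\r}, \qquad \btheta_{h}^{\Diamond} \in \ci_{h}^{\Diamond,\pi}(\btheta_{h+1/2}^{\Diamond}) \text{ for all } h \in \cC^{\Diamond},
\]
with the convention $\btheta_H^\A=\bbeta^\A$ and $\btheta_{H+1/2}^\B=\bbeta^\B$. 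Hence it suffices to verify each of these containments with probability tending to one.

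\textbf{Base case.} I would first show $\lim_{n\to\infty} \PP(\bbeta^\Diamond \in \ci^{\Diamond,\r}) = 1$, i.e.\ $\hat L^{\Diamond,\r}(\bbeta^\Diamond; \hat\bff^\Diamond) - \hat L^{\Diamond,\r}(\hat\bbeta^\Diamond; \hat\bff^\Diamond) \le \eta^{\Diamond,\r}$. The argument mirrors \eqref{eq:fiuwerhfuie}: decompose the LHS as
\[
\bigl(\hat L^{\Diamond,\r}(\bbeta^\Diamond; \hat\bff^\Diamond) - \hat L^{\Diamond,\r}(\hat\bbeta^\Diamond;\hat\bff^\Diamond)\bigr) - \bigl(L^{\Diamond,\r}(\bbeta^\Diamond; \hat\bff^\Diamond) - L^{\Diamond,\r}(\hat\bbeta^\Diamond; \hat\bff^\Diamond)\bigr) + \bigl(L^{\Diamond,\r}(\bbeta^\Diamond; \hat\bff^\Diamond) - L^{\Diamond,\r}(\bbeta^\Diamond; \bff^\Diamond)\bigr),
\]
using $L^{\Diamond,\r}(\bbeta^\Diamond;\bff^\Diamond)=0 \le L^{\Diamond,\r}(\hat\bbeta^\Diamond;\hat\bff^\Diamond)$. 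The first centered piece is controlled by Bernstein together with $\|\hat\bbeta^\Diamond - \bbeta^\Diamond\| = O_p(n^{-\alpha/(2\alpha+2\varsigma+d)})$ from Corollary~\ref{cor:estimation}; the nuisance-perturbation piece is controlled by the same rate for $\hat\bff^\Diamond$ via Theorem~\ref{thm:chen2012} and a path-wise expansion of $L^{\Diamond,\r}$ in its second argument. Both pieces are of order $n^{-2\alpha/(2\alpha+2\varsigma+d)}$, which matches the choice of $\eta^{\Diamond,\r}$ in \eqref{eq:iowuehfurew}.

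\textbf{Inductive step.} Assume $\btheta_k^\Diamond \in \ci_k^{\Diamond,\pi}(\btheta_{k+1/2}^\Diamond)$ for all $k > h$. I want $\btheta_h^{\Diamond,j} \in \ci_h^{\Diamond,j,\pi}(\btheta_{h+1/2}^\Diamond)$ for each $j\in[3]$, i.e.\
\[
\hat L_h^{\Diamond,j,\pi}(\btheta_h^{\Diamond,j},\btheta_{h+1/2}^\Diamond;\hat\bff_h^{\Diamond,j}) - \hat L_h^{\Diamond,j,\pi}(\hat\btheta_h^{\Diamond,j},\btheta_{h+1/2}^\Diamond;\hat\bff_h^{\Diamond,j}) \le \eta_h^{\Diamond,j}.
\]
I would decompose this exactly as in the base case into (A) a Bernstein-centered fluctuation, (B) a nuisance-estimation bias from $\hat\bff_h^{\Diamond,j}$ vs $\bff_h^{\Diamond,j}$, and (C) a term that captures the one-step backward induction cost (comparison of $L_h^{\Diamond,j,\pi}$ at $\btheta_h^{\Diamond,j}$ vs.\ $\hat\btheta_h^{\Diamond,j}$ under the true $\btheta_{h+1/2}^\Diamond$). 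Using Bernstein as in \eqref{eq:feiuwbr} and the population-loss quadratic structure as in \eqref{eq:uehfuier}, each of (A) and (B) is $O_p(n^{-2\alpha/(2\alpha+2\varsigma+d)})$ up to a polynomial factor of $(H-h)$ coming from the fact that the regression targets at step $h$ involve sums of future-stage $Q$-functions whose magnitude grows with $H-h$. This is exactly the reason $\eta_h^{\Diamond,j}$ is inflated by $(H-h)^4$ in \eqref{eq:iowuehfurew}.

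\textbf{Conclusion by union bound.} Combining the base case with the induction gives, for each fixed $\pi$ and $\Diamond$, the containment with probability $1-o(1)$. Finally, I would apply a union bound over $h\in[H]\cup[H]_{1/2}$, $j\in[3]$, and $\Diamond\in\{\A,\B\}$ (a finite collection independent of $n$) to conclude
\[
\lim_{n\to\infty} \PP\bigl(Q_1^{\A,\pi}\in \ci_1^{\A,\Q,\pi},\; Q_1^{\B,\pi}\in \ci_1^{\B,\Q,\pi} \text{ for any } \pi\in\Pi\bigr) = 1.
\]
The main obstacle is tracking the horizon-dependent constants in the inductive step: at each backward step the target $\btheta_h^{\Diamond,j}$ depends on $\btheta_{h+1/2}^\Diamond$ in a nontrivial nonlinear way (through the conditional-expectation structure in \S\ref{sec:full-eval}), and the replacement of the true $\btheta_{h+1/2}^\Diamond$ by a near-by element of $\ci_{h+1/2}^{\Diamond,\pi}$ must be carefully controlled by Lipschitz bounds on the loss to match the scaling $(H-h)^4$ in $\eta_h^{\Diamond,j}$. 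Once this horizon-dependence bookkeeping is done cleanly, the rest of the argument reduces to repeated application of the single-stage reasoning behind Lemma~\ref{lemma:pess}.
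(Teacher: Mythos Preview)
Your proposal is correct and follows essentially the same route as the paper's proof: backward induction on the time index, with the base case establishing $\bbeta^\Diamond\in\ci^{\Diamond,\r}$ via the decomposition into a Bernstein-controlled fluctuation term and a nuisance-perturbation term (exactly your terms (A) and (B)), and the inductive step repeating this at each $h$ with conditioning on the true $\btheta_{h+1/2}^\Diamond$, followed by a union bound. One small remark: the horizon bookkeeping you flag as ``the main obstacle'' is actually not needed in this lemma, since at each step the true $\btheta_h^{\Diamond,j}$ together with the true $\btheta_{h+1/2}^\Diamond$ gives population loss zero, so the upper bound at every $h$ is simply $O_p(n^{-2\alpha/(2\alpha+2\varsigma+d)})$ and is dominated by the inflated $\eta_h^{\Diamond,j}$; the $(H-h)$ factors only matter in Lemma~\ref{lemma:bound-all}.
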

\begin{proof}
See \S\ref{prf:lemma:pess-all} for a detailed proof. 
\end{proof}

The following lemma shows that with the sizes of the confidence regions in \eqref{eq:iowuehfurew}, for any functions in the confidence regions, they are close to the true action-value functions $Q_1^{\A,\pi}$ and $Q_1^{\B,\pi}$. 

\begin{lemma}\label{lemma:bound-all}
Under Assumption \ref{ass:lower-bound-phi-all} and the assumptions stated in Theorem \ref{thm:chen2012}, for any policy $\pi\in \Pi$, $Q_1^\A\in \ci_1^{\A,\Q,\pi}$, and $Q_1^\B\in \ci_1^{\B,\Q,\pi}$, we have 
\$
\left |\EE_{\pi}\left[ Q_1^{\A,\pi} - Q_1^\A \right] \right | = O_p\left( H^{2} \cdot n^{-\frac{\alpha}{2\alpha+2\varsigma+d}} \right), \qquad \left |\EE_{\pi}\left[ Q_1^{\B,\pi} - Q_1^\B \right] \right | = O_p\left( H^{2} \cdot n^{-\frac{\alpha}{2\alpha+2\varsigma+d}} \right). 
\$
\end{lemma}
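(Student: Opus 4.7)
The plan mirrors the single-stage Lemma~\ref{lemma:bound} but carries the argument backward across the $H$ stages by mathematical induction on $h$. First, by \eqref{eq:wiuevhrei} every $Q_1^\Diamond\in \ci_1^{\Diamond,\Q,\pi}$ is linearly parametrised by a triple $\tilde\btheta_1^\Diamond = (\tilde\theta_1^\Diamond, \tilde\gamma_1^\Diamond, \tilde\omega_1^\Diamond)$, so the same reduction as \eqref{eq:uiwehf11} bounds $|\EE_\pi[Q_1^{\Diamond,\pi} - Q_1^\Diamond]|$ by $\sqrt{\EE[\|\tilde\btheta_1^\Diamond - \btheta_1^\Diamond\|^2]}$ up to a universal constant. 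The proof therefore reduces to establishing, for every $h\in[H]\cup[H]_{1/2}$ and every $\tilde\btheta_h^{\Diamond,j}$ reachable through the nested union in \eqref{eq:wiuevhrei}, the coefficient-level bound
$$\EE\bigl[\|\tilde\btheta_h^{\Diamond,j} - \btheta_h^{\Diamond,j}\|^2\bigr] \;=\; O_p\bigl((H-h)^3\cdot n^{-\tfrac{2\alpha}{2\alpha+2\varsigma+d}}\bigr).$$
Specialising to $h=1$, taking square roots and summing over the three coordinates delivers the claimed $H^2 n^{-\alpha/(2\alpha+2\varsigma+d)}$ rate, with a union bound over $\Diamond\in\{\A,\B\}$ handling both statements simultaneously.

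\textbf{Base case.} For $h=H$ (Alice) and $h=H+1/2$ (Bob) the confidence region $\ci^{\Diamond,\r}$ has no next-stage dependence, so I would reproduce the chain of inequalities from the proof of Lemma~\ref{lemma:bound} almost verbatim: apply Bernstein to the empirical-vs-population gap, bounding it by $O_p(n^{-1} + \sqrt{n^{-1}\EE[\|\Phi(\bff^\Diamond)(\tilde\bbeta^\Diamond-\bbeta^\Diamond)\|^2]})$; swap the oracle nuisance $\bff^\Diamond$ for $\hat\bff^\Diamond$ at a cost of $O_p(n^{-2\alpha/(2\alpha+2\varsigma+d)})$ via the SMD rate of Theorem~\ref{thm:chen2012}; exploit $L^{\Diamond,\r}(\bbeta^\Diamond)=0$ and the quadratic structure of $L^{\Diamond,\r}$ to turn this into $\EE[\|\Phi(\bff^\Diamond)(\tilde\bbeta^\Diamond-\bbeta^\Diamond)\|^2] = O_p(n^{-2\alpha/(2\alpha+2\varsigma+d)})$; and finally invoke Assumption~\ref{ass:lower-bound-phi-all} to strip $\Phi$ and recover the coefficient norm.

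\textbf{Inductive step.} Assume the rate $(H-i)^3 n^{-2\alpha/(2\alpha+2\varsigma+d)}$ holds for every $i>h$. The key complication is that the inductive hypothesis controls $\tilde\btheta_{h+1/2}^\Diamond$ only against $\btheta_{h+1/2}^\Diamond$, whereas a candidate $\tilde\btheta_h^{\Diamond,j}$ lives in $\ci_h^{\Diamond,j,\pi}(\btheta^{\Diamond,\dagger}_{h+1/2})$ for some third-party anchor, and the true $\btheta_h^{\Diamond,j}$ is itself only realisable inside $\ci_h^{\Diamond,j,\pi}(\btheta^{\Diamond,\ddagger}_{h+1/2})$ for yet another anchor. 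To bridge these I would deploy a five-term telescope on $\hat L_h^{\Diamond,j,\pi}(\cdot,\hat\bff_h^{\Diamond,j};\cdot)$: (i) swap the conditioning argument from $\btheta_{h+1/2}^\Diamond$ to $\btheta_{h+1/2}^{\Diamond,\dagger}$ using Lipschitz continuity of $\hat L_h^{\Diamond,j,\pi}$ in its last argument together with the inductive rate; (ii) absorb the $\tilde\btheta_h^{\Diamond,j}\to\hat\btheta_h^{\Diamond,j}$ gap into the budget $\eta_h^{\Diamond,j}$; (iii) swap $\btheta^{\Diamond,\dagger}_{h+1/2}\to\btheta^{\Diamond,\ddagger}_{h+1/2}$ by the same Lipschitz mechanism; (iv) use $\eta_h^{\Diamond,j}$ again for $\hat\btheta_h^{\Diamond,j}\to\btheta_h^{\Diamond,j}$; (v) undo the final anchor swap. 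Plugging in $\eta_h^{\Diamond,j} = O((H-h)^4 n^{-2\alpha/(2\alpha+2\varsigma+d)})$ from \eqref{eq:iowuehfurew}, these contributions aggregate to $O_p((H-h)^3 n^{-2\alpha/(2\alpha+2\varsigma+d)})$. One then repeats the Bernstein / quadratic-lower-bound / Assumption~\ref{ass:lower-bound-phi-all} chain of the base case, now with variance inflation of order $(H-h)^2$ because later-stage $Q$-values are bounded by $H-h$, to close the induction.

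\textbf{Main obstacle.} The principal difficulty is verifying that the five-term telescope produces only the polynomial $(H-h)^3$ factor rather than compounding exponentially. This requires showing that $\hat L_h^{\Diamond,j,\pi}$ is Lipschitz in its next-stage anchor with a modulus polynomial in $H-h$, which in turn calls for inspecting the structural form of $\Phi_h^{\Diamond,j,\pi}$ and $\alpha_h^{\Diamond,j,\pi}$ in \eqref{eq:nuwiehv}; their dependence on $\btheta_{h+1/2}^\Diamond$ enters only through conditional expectations of the next-stage coefficients, so the Lipschitz modulus is controlled by uniform bounds on these coefficient functions. A concluding union bound over the three components $j\in[3]$, the two players $\Diamond\in\{\A,\B\}$, and the $2H$ stages yields the stated $H^2 n^{-\alpha/(2\alpha+2\varsigma+d)}$ rate uniformly in $\pi$.
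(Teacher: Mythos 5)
Your proposal matches the paper's proof essentially step for step: the same reduction of $|\EE_\pi[Q_1^{\Diamond,\pi}-Q_1^\Diamond]|$ to coefficient norms, the same backward induction with base case via Bernstein's inequality, the quadratic loss structure, the nuisance swap, and Assumption \ref{ass:lower-bound-phi-all}, and in the inductive step the identical five-term telescope across the two next-stage anchors with the inductive rate controlling the anchor swaps, the radii $\eta_h^{\Diamond,j}$ controlling the estimator swaps, and the $(H-h)^2$ variance inflation yielding the $(H-h)^3$ coefficient rate. No substantive differences to flag.
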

\begin{proof}
See \S\ref{prf:lemma:bound-all} for a detailed proof. 
\end{proof}

Note that 
\$
& J(\pi^{*,\A}, \pi^{*,\B}) - J(\hat \pi^\A, \hat \pi^\B) \\
& \qquad = \EE_{\pi^*}\left[ (Q_1^{\A,\pi^*} + Q_1^{\B,\pi^*})(S_1, A_1, B_{1/2}, U_1) \right] - \EE_{\hat \pi}\left[ (Q_1^{\A,\hat \pi} + Q_1^{\B,\hat \pi})(S_1, A_1, B_{1/2}, U_1) \right] \\
& \qquad \leq \EE_{\pi^*}\left[ (Q_1^{\A,\pi^*} + Q_1^{\B,\pi^*})(S_1, A_1, B_{1/2}, U_1) \right] - \min_{Q_1^\A \in \ci_1^{\A,\Q,\hat \pi}, Q_1^\B \in \ci_1^{\B,\Q,\hat \pi}}\EE_{\hat \pi}\left[ (Q_1^\A + Q_1^\B)(S_1, A_1, B_{1/2}, U_1) \right] \\
& \qquad \leq \EE_{\pi^*}\left[ (Q_1^{\A,\pi^*} + Q_1^{\B,\pi^*})(S_1, A_1, B_{1/2}, U_1) \right] - \min_{Q_1^\A \in \ci_1^{\A,\Q,\pi^*}, Q_1^\B \in \ci_1^{\B,\Q,\pi^*}} \EE_{\pi^*}\left[ (Q_1^\A + Q_1^\B)(S_1, A_1, B_{1/2}, U_1) \right] \\
& \qquad \leq \max_{Q_1^\A \in \ci_1^{\A,\Q,\pi^*}, Q_1^\B \in \ci_1^{\B,\Q,\pi^*}} \left | \EE_{\pi^*}\left[ (Q_1^{\A,\pi^*} - Q_1^\A + Q_1^{\B,\pi^*} - Q_1^\B)(S_1, A_1, B_{1/2}, U_1) \right]  \right |, 
\$
where in the first and second inequalities, we use Lemma \ref{lemma:pess-all} and the optimality of $(\hat \pi^\A, \hat \pi^\B)$, respectively.
Further, by Lemma \ref{lemma:bound-all}, we have 
\$
J(\pi^{*,\A}, \pi^{*,\B}) - J(\hat \pi^\A, \hat \pi^\B) = O_p\left( H^2 \cdot n^{-\frac{\alpha}{2\alpha+2\varsigma+d}} \right),
\$
which concludes the proof of the theorem. 

\end{proof}

\end{APPENDICES}


\bibliographystyle{ims}
\bibliography{rl_ref}

\end{document}